\RequirePackage{iftex}

\ifLuaTeX

\fi

\documentclass[runningheads]{llncs}

\usepackage{eccv}

\usepackage{eccvabbrv}
\usepackage{graphicx}
\usepackage{booktabs}
\usepackage[accsupp]{axessibility}

\usepackage[breaklinks,colorlinks,citecolor=eccvblue]{hyperref}

\usepackage{orcidlink}

\newtheorem{assumption}{Assumption}

\begin{document}

\title{A Mechanism-Driven Theory of Phase Transitions in Active Learning}

\author{Julia Machnio \orcidlink{0009-0007-2736-2967} \and
Mads Nielsen \orcidlink{0000-0003-1535-068X} \and
Mostafa Mehdipour Ghazi \orcidlink{0000-0002-8473-281X}}
\authorrunning{J.~Machnio et al.}

\institute{Pioneer Centre for AI, University of Copenhagen, Copenhagen, Denmark \\
\email{\{juma,madsn,ghazi\}@di.ku.dk}}

\maketitle

\begin{abstract}
Active learning (AL) performance is known to be budget-dependent, yet regimes are typically defined by heuristic label counts that fail to generalize across datasets or architectures. We characterize AL dynamics by reframing budget regimes as shifts in the dominant generalization mechanism. By reinterpreting PAC-style risk components as dynamic interacting terms, we prove that dominance shifts are structurally unavoidable, creating a moving bottleneck for generalization. We operationalize this using measurable proxies and a segmented regression procedure to identify a tripartite taxonomy: data-driven, transition, and model-driven phases. Our framework explains the long-standing observation that representativeness, coverage, and uncertainty strategies excel at different stages. Experiments across natural and medical imaging show that AL efficiency depends on the alignment between the strategy's inductive bias and the active bottleneck. Moreover, self-supervised representation shift transitions earlier along the labeling trajectory, highlighting the role of representation quality in shaping AL dynamics. Overall, this work provides a unified framework for the next generation of transition-aware AL algorithms. The code is available at: \url{https://github.com/juliamachnio/PALM}.

\keywords{Active Learning \and PAC \and Generalization \and Phase Transition}

\end{abstract}

\section{Introduction}
\label{sec:intro}

\begin{figure*}[!t]
\centering
\includegraphics[width=0.93\textwidth]{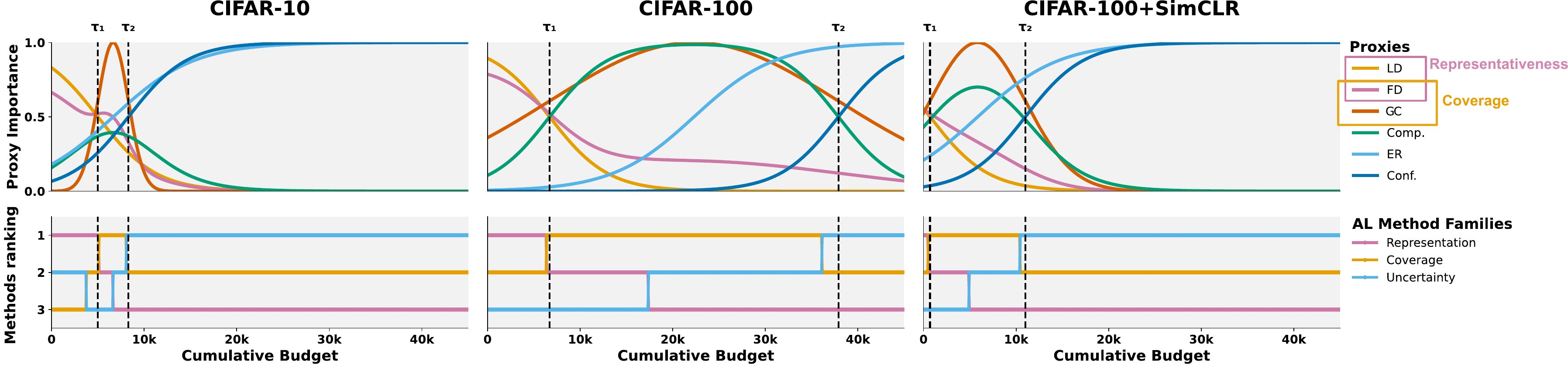}
\caption{\textbf{Active learning exhibits dynamic bottleneck transitions}. \textbf{Top:} Smoothed, normalized upper-bound proxies across labeling budgets. Distinct dominance regimes emerge: representativeness dominates early, coverage and complexity peak mid-stage, and empirical risk with statistical confidence dominate late. \textbf{Bottom:} Ranking of the methods. Transition points ($\tau$) denote detected shifts in proxy dominance. In low-complexity datasets like CIFAR-10, these transitions occur early, while in more complex dataset shifts them further along the trajectory. Pretrained models (e.g., SimCLR) accelerate these transitions, shortening the initial data-driven phase.}
\label{fig:summary}
\end{figure*}

Active learning (AL) aims to reduce annotation cost by iteratively selecting informative samples from a large unlabeled pool \cite{settles2009active}. Empirical studies consistently show that the effectiveness of different AL strategies depends strongly on the labeling budget \cite{hacohen2022active,yehuda2022active}. 
This behavior is commonly described through early-, mid-, and late-budget stages in which different families of methods dominate: representativeness-based strategies perform best early \cite{xu2003representative,aghaee2016active,voevodski2012active,hacohen2022active}, coverage-oriented approaches become effective at intermediate budgets \cite{sener2017active,yehuda2022active}, and uncertainty\nobreakdash-driven methods dominate later \cite{lewis1995sequential,gal2017deep,scheffer2001active,kirsch2019batchbald,ash2021gone}. However, these regimes are typically defined by heuristic label counts that vary across datasets, architectures, and representations, limiting their structural interpretability and provide limited guidance for the design of acquisition strategies. 

We hypothesize a different perspective: \textbf{AL regimes are not fixed} by budget, but emerge from shifts in the dominant generalization mechanism during sampling. Building on true risk decompositions under sampling bias and covariate shift \cite{valiant1984theory,menden2025bounds,cortes2008sample}, we reinterpret four structural components: empirical risk, distributional discrepancy, hypothesis-space complexity, and confidence term \cite{menden2025bounds}, as interacting terms whose relative influence evolves along labeling trajectory.

Within this perspective, it is important to distinguish between \emph{distributional representativeness} \cite{xu2003representative,voevodski2012active,hacohen2022active} and \emph{geometric coverage} \cite{sener2017active,yehuda2022active}. Representativeness measures how well the labeled subset reflects the underlying data distribution, controlling sampling-induced bias. In contrast, coverage quantifies the geometric dispersion of labeled samples in feature space and its impact on the hypothesis class. 
Both concepts appear in the AL literature \cite{sener2017active, hacohen2022active}, but their distinct effects on generalization dynamics have not been explicitly disentangled.

At different stages, different mechanisms may \emph{dominate} the learning process, \ie, yield the largest marginal effects on true risk minimization when adding new labels. As labeling progresses, the dominance of these mechanisms can change, creating a \emph{transition point}. Consequently, AL stages can be interpreted as structural phase transitions governed by the intrinsic interplay of dataset geometry, model capacity, and representation quality, rather than by arbitrary label counts.

To study this perspective empirically, we introduce measurable proxies associated with the components of the risk decomposition and track their evolution during the AL process. Using segmented regression along the labeling-budget axis, we identify a novel three-phase taxonomy: (1) a \emph{data-driven phase} dominated by distributional discrepancy, (2) a \emph{transition phase} governed by geometric coverage and complexity, and (3) a \emph{model-driven phase} ruled by empirical risk refinement. As revealed in \cref{fig:summary}, these transitions coincide with shifts in ranking of AL strategies and vary with dataset complexity and representation quality.

Experiments across multiple computer vision benchmarks and representation settings demonstrate that representativeness-, coverage-, and uncertainty-based strategies are effective only within specific labeling regimes. Aligning the acquisition strategy with the currently dominant mechanism yields larger reductions in the true risk bound, suggesting principled foundation for \emph{transition-aware AL}.

\textbf{Contributions}: 
(1) We introduce a mechanism-driven theory of AL that reframes heuristic budget regimes as shifts in dominant generalization mechanisms. 
(2) We disentangle distributional representativeness from geometric coverage, clarifying their distinct roles in adaptive sampling. 
(3) We propose operational proxies and a segmented regression framework to identify regime transitions along the labeling trajectory. 
(4) We empirically validate the resulting three-phase taxonomy across multiple datasets and representation settings.

\section{Related Work}
\label{sec:related}

\textbf{Uncertainty-based methods} prioritize low-confidence samples, assuming they are most informative for refining decision boundaries. Classical variants include margin \cite{scheffer2001active} and entropy  \cite{settles2009active}, while Bayesian methods such as BALD \cite{kirsch2019batchbald} and BAIT \cite{ash2021gone} leverage model disagreement or Bayes risk. Such strategies are typically most effective when the model is well-calibrated, most often in later budgets.

\textbf{Representativeness-based methods} aim to capture the structure of the underlying data distribution by selecting diverse, typical samples. Clustering or medoid approaches \cite{xu2003representative,aghaee2016active} select central samples, while TypiClust \cite{hacohen2022active} uses density-based typicality to identify representative samples. These methods reduce distributional discrepancy when model uncertainty is unreliable. Earlier theoretical work has also studied representation-based 
selection criteria \cite{wei2015submodularity, 
kaushal2019learning}.

\textbf{Coverage-based methods} promote broad exploration of the feature space using geometric criteria. Core-set selection \cite{sener2017active} minimizes the maximum distance to nearest labeled neighbors, while ProbCover \cite{yehuda2022active} extends this via probabilistic coverage. Unlike representativeness, coverage explicitly targets geometric dispersion in feature space and is often most effective at the low-to-mid budget.

\textbf{Hybrid approaches} combine multiple signals. BADGE \cite{ash2019deep} embeds gradients to mix uncertainty and diversity, UHerding \cite{bae2024uncertainty} combines uncertainty with coverage, and semantic approaches \cite{maalouf2022unified,liang2024semantic} further incorporate auxiliary cues. Their effectiveness depends on alignment with the current dominant mechanism.

Our work complements these efforts by grounding AL regime transitions in generalization-error decomposition and identifying shifts in dominant mechanisms as the cause, explaining why representativeness-, coverage-, and uncertainty-based strategies tend to perform best at different labeling budgets.

\section{Methodology}
\label{sec:methods}

We analyze AL dynamics through a decomposition of the generalization upper bound under sampling bias. Building on classical PAC-style analyses \cite{menden2025bounds,mohri2018foundations,valiant1984theory}, we reinterpret the bound components, empirical risk, distribution discrepancy, model complexity, and confidence, as \emph{dynamic mechanisms} whose relative influence evolves along the labeling trajectory. To study these shifts, we associate each component with measurable empirical proxies and track their evolution across labeling budgets to identify regime transitions.

\subsection{Preliminaries and Assumptions}
\label{subsec:prelims}

Let $\mathcal{Z}=\mathcal{X}\times\mathcal{Y}$ denote the input–label space with $(X,Y) \sim P$, where $X\in\mathcal{X}$ and $Y\in \mathcal{Y}$. Let $P_X$ denote the marginal distribution over inputs. 

Let $\mathcal{H}$ be a hypothesis class of predictors $h\,{:}\,\mathcal{X}\rightarrow\mathcal{Y}$ trained with loss $\ell(h(x),y)$. At acquisition step $t$, the active learner samples from an induced distribution $P_Q^{(t)}$ to construct a labeled set $S_t = \{(x_i, y_i)\}_{i=1}^{m_t}$ with $C$ classes, where $ m_t=|S_t|$ denotes the cumulative labeling  budget. Because sampling is guided by the acquisition strategy, the query distribution generally differs from the underlying data distribution, \ie $P_Q^{(t)}\neq P_X$, introducing sampling bias. Our analysis relies on the following assumptions:

\begin{enumerate}
    \item \label{aspt:loss} \textbf{Bounded Lipschitz loss.} The loss $\ell(h(x),y)\in[0,1]$ is $L$-Lipschitz in its first argument, enabling a concentration bound via Rademacher complexity~\cite{mohri2018foundations}.
    
    \item \label{aspt:capacity} \textbf{Finite capacity.} The hypothesis class $\mathcal{H}$ has finite empirical Rademacher complexity for any finite sample $S_t$, ensuring that empirical risk can converge to the true risk as the labeled set grows.
    
    \item \label{aspt:acquisition} \textbf{Non-degenerate acquisition.} For any measurable set $A \subseteq \mathcal{X}$ with $P_X(A) > 0$, there exists a finite time $t_A$ such that $P_Q^{(t)}(A) > 0$ for all $t \ge t_A$. 
    
    \item \label{aspt:growth} \textbf{Monotone growth.} The labeled set grows monotonically, $S_t \subset S_{t+1}$, with $m_{t+1} > m_t$, consistent with standard pool-based AL protocols.
    
    \item \label{aspt:representation} \textbf{Fixed representation.} Feature embeddings $\phi(x)$ remain fixed within each AL episode.
\end{enumerate}

Assumptions \ref{aspt:loss} and \ref{aspt:capacity} are standard in learning theory, while \ref{aspt:acquisition} to \ref{aspt:representation} formalize the adaptive sampling and representation constraints considered in our analysis. Assumption \ref{aspt:acquisition} ensures the per-step bound remains non-vacuous as the trajectory evolves: by guaranteeing eventual coverage of any positive-measure region, it prevents the query distribution from collapsing in a way that would render the discrepancy term uninformative. Assumption \ref{aspt:representation} holds directly for pool-based AL with frozen SSL backbones; in the supervised setting, it is applied analogously to a converged encoder within each AL episode.

\subsection{Generalization Bound Under Sampling Bias}
\label{subsec:bound}

We analyze AL dynamics through a decomposition of the true risk under sample selection bias \cite{cortes2008sample, mohri2018foundations, menden2025bounds}. \cref{thm:biased_bound} restates the bound from \cite{menden2025bounds} using our notation and provides the theoretical foundation for our phase analysis.

\begin{theorem}
\label{thm:biased_bound}
Let $\ell$ be bounded in $[0,1]$ and $L$-Lipschitz in its first argument. With probability at least $1-\delta$, for any $h \in \mathcal{H}$ trained on a labeled set $S$ of size $m$ drawn from an induced marginal $P_Q$, the true risk satisfies:
\begin{equation}
\begin{split}
\underbrace{R(h)}_{\text{True Risk}} & \leq 
\underbrace{\hat{R}_S(h)}_{\text{Empirical Risk}} + 
\underbrace{d_{\mathcal{F}}(P_X, P_Q)}_{\substack{\text{Distribution} \\ \text{Discrepancy}}} 
 + \underbrace{2 \, \text{Rad}(\ell \circ \mathcal{H} \circ S)}_{\substack{\text{Model} \\ \text{Complexity}}} +
\underbrace{\alpha \sqrt{ \frac{2 \log(4/\delta)}{m} }}_{\substack{\text{Confidence} \\ \text{Term}}} \,,
\end{split}
\label{eq:main_bound}
\end{equation}
where $d_{\mathcal{F}}$ is an integral probability metric (IPM) over a function class $\mathcal{F}$ of bounded complexity \cite{sriperumbudur2012empirical}, and $\alpha>0$ depends on the Lipschitz constant $L$.
\end{theorem}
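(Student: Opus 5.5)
The plan is to split the true risk at the query distribution and bound the two pieces separately. Write $R_Q(h)=\mathbb{E}_{x\sim P_Q,\,y\sim P(\cdot\mid x)}[\ell(h(x),y)]$ for the risk under the induced marginal. Then
\[
R(h)=R_Q(h)+\bigl(R(h)-R_Q(h)\bigr).
\]
The first summand is controlled by a uniform-convergence argument on $S$. The second is a purely distributional term, and the IPM will absorb it. This is the structure used in the bounds of \cite{cortes2008sample,menden2025bounds}, so the main job is to check that the constants fit the present notation.

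\textbf{Step 1: discrepancy term.} Since labels are generated by the same conditional $P(Y\mid X)$ under both marginals, define $g_h(x)=\mathbb{E}[\ell(h(x),Y)\mid X=x]\in[0,1]$. Then $R(h)-R_Q(h)=\mathbb{E}_{P_X}g_h-\mathbb{E}_{P_Q}g_h$. Choose $\mathcal{F}\supseteq\{g_h:h\in\mathcal{H}\}$, for instance all $[0,1]$-valued functions with the relevant Lipschitz or RKHS-ball regularity inherited from the $L$-Lipschitz loss. With that choice,
\[
R(h)-R_Q(h)\le\sup_{f\in\mathcal{F}}\bigl|\mathbb{E}_{P_X}f-\mathbb{E}_{P_Q}f\bigr|=d_{\mathcal{F}}(P_X,P_Q)
\]
uniformly in $h$. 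This step is deterministic and costs no probability.

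\textbf{Step 2: concentration on $S$.} Apply the standard Rademacher bound \cite{mohri2018foundations} to the class $\ell\circ\mathcal{H}$ with values in $[0,1]$, under i.i.d.\ sampling from $P_Q$. This gives, with probability at least $1-\delta/2$ and uniformly in $h$,
\[
R_Q(h)\le\hat R_S(h)+2\,\mathrm{Rad}(\ell\circ\mathcal{H}\circ S)+3\sqrt{\frac{\log(4/\delta)}{2m}} .
\]
Here the empirical Rademacher version is used, via McDiarmid applied twice. Splitting $\delta$ between the two McDiarmid steps produces the $\log(4/\delta)$ factor.

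\textbf{Step 3: constants and union bound.} Rewrite $3\sqrt{\log(4/\delta)/(2m)}$ as $\alpha\sqrt{2\log(4/\delta)/m}$ with $\alpha=3/2$. If one instead states complexity through $\mathrm{Rad}(\mathcal{H})$ using Talagrand's contraction, the factor $L$ enters, which explains why $\alpha$ is said to depend on $L$. Any remaining $\delta/2$ of slack can absorb an estimation error for $d_{\mathcal{F}}$ when it is replaced by an empirical IPM \cite{sriperumbudur2012empirical}; that also contributes an $O(\sqrt{\log(1/\delta)/m})$ term merged into $\alpha$. Summing Steps 1 and 2 yields \eqref{eq:main_bound}.

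\textbf{Main obstacle.} The independence assumption in Step 2 is the delicate point. In active learning, $S$ is built adaptively, so its points are not i.i.d.\ from a single $P_Q$. I would handle this the way the statement is phrased: condition on the acquisition step and treat $S$ as drawn from the induced marginal $P_Q$, as in \cite{menden2025bounds}. If genuine adaptivity must be handled, the fallback is a martingale (Azuma/Freedman) version of the concentration with sequential Rademacher complexity. That route changes only the constants absorbed into $\alpha$.
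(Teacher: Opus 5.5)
Your argument is correct. The paper gives no proof of its own; it imports the bound from \cite{menden2025bounds}, and your proof is the standard route behind that bound: with labels drawn from the same $P(Y\mid X)$, split $R(h)=R_Q(h)+\bigl(R(h)-R_Q(h)\bigr)$, absorb the second piece into $d_{\mathcal{F}}$ by taking $\mathcal{F}\supseteq\{g_h:h\in\mathcal{H}\}$, and apply the empirical-Rademacher bound at confidence $1-\delta/2$, which gives the stated form with $\alpha=3/2$.

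Your side remarks are not needed and are partly inaccurate, though the proof of the statement as posed does not rely on them:
\begin{itemize}
\item Talagrand contraction would put $L$ on the complexity term, not into $\alpha$.
\item $g_h$ gets no regularity in $x$ from Lipschitzness of $\ell$ in its prediction argument, so a generic Lipschitz class or RKHS ball need not contain it. It is the explicit choice $\mathcal{F}\supseteq\{g_h\}$ that makes Step~1 work.
\item A genuinely adaptive martingale analysis would change the complexity and discrepancy terms themselves, not just the constant $\alpha$.
\end{itemize}
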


\cref{eq:main_bound} decomposes true risk into four interacting mechanisms. Although \cref{thm:biased_bound} holds for any fixed query distribution satisfying Assumptions \ref{aspt:loss}--\ref{aspt:capacity}, each acquisition step $t$ induces a well-defined marginal $P_Q^{(t)}$ in the AL setting. Applying the bound per-step, treats each $P_Q^{(t)}$ as the relevant fixed distribution for that step and enables trajectory-level dominance analysis without requiring a single global $Q$. Unlike classical \iid learning, where bound components typically decrease with $m$, this per-step adaptive sampling induces competing and potentially non-monotonic effects:
\begin{itemize}
    \item \textbf{Empirical risk} may increase if newly queried samples lie in difficult or high-loss regions near the decision boundary.
    \item \textbf{Discrepancy} may increase if the acquisition rule oversamples atypical regions of $\mathcal{X}$, shifting $P_Q$ further from $P_X$.
    \item \textbf{Complexity} may fluctuate depending on whether new samples expand the geometric support of $S$ or merely increase its local density.
\end{itemize}

\noindent While the confidence decays at $\mathcal{O}(m^{-1/2})$, the remaining terms can evolve non-monotonically (see Appendix), producing a ``moving bottleneck.''

\subsection{Dominance Shifts}
\label{subsec:dominance_shift}

The non-monotonic behavior of the components in \cref{eq:main_bound}, under multiple acquisition steps $t$, implies that their relative contributions may change over along the acquisition trajectory.

\begin{theorem}
\label{thm:dominance_shift}
Under the assumptions in \cref{subsec:prelims}, there exist AL trajectories $\{S_t\}_{t=1}^\infty$ for which the dominant bound component 
$B^\star(t) = \arg\max_k B_k(t)$, evaluated per-step under $P_Q^{(t)}$, 
changes at least  as $m_t {\to} \infty$. 
\end{theorem}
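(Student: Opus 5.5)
The statement is existential, so the plan is to exhibit one explicit trajectory $\{S_t\}$ that satisfies the assumptions of \cref{subsec:prelims}. Along it we track the four per-step components of \cref{eq:main_bound}:
\begin{equation*}
B_1(t)=\hat R_{S_t}(h_t),\quad B_2(t)=d_{\mathcal F}(P_X,P_Q^{(t)}),\quad B_3(t)=2\,\mathrm{Rad}(\ell\circ\mathcal H\circ S_t),\quad B_4(t)=\alpha\sqrt{2\log(4/\delta)/m_t}.
\end{equation*}
We then show that $\arg\max_k B_k(t)$ takes at least two distinct values as $m_t\to\infty$. Presumably "at least once" is intended: the dominant component should change at least once. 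The argument compares the two ends of the trajectory. First we identify which component dominates early, then which dominates late, and check that these differ. Continuity in $t$ is not needed, because a change in the argmax between two times already gives a transition.

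\textbf{Early regime.} We take a pool with $P_X$ having at least two well-separated positive-mass regions $A_1,A_2$. The acquisition rule initially queries only inside $A_1$, with $P_Q^{(t)}(A_2)=0$ for $t<t_{A_2}$. This is allowed, since Assumption \ref{aspt:acquisition} only requires that $A_2$ be visited eventually. We choose $\mathcal F$ to contain a bounded function separating $A_1$ from $A_2$, for instance a Lipschitz bump on $A_2$. Then $B_2(t)\ge c\,P_X(A_2)$ for some constant $c>0$. Meanwhile the other terms can be made small at these early steps. We take $\mathcal H$ expressive enough to interpolate the few labels, so $B_1=0$. We choose $\delta$ and $\alpha$, or equivalently start from an initial budget $m_1$, so that $B_4(1)<c\,P_X(A_2)$. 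We also pick a class whose Rademacher complexity on $S_1$ is below that level, such as a finite class with $B_3\le\sqrt{2\log|\mathcal H|/m}$. This gives $B^\star(t)=2$ for small $t$.

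\textbf{Late regime.} After $t_{A_2}$ the query distribution is steered toward $P_X$, for example by mixing $P_Q^{(t)}=(1-\epsilon_t)P_X+\epsilon_t P_Q^{(0)}$ with $\epsilon_t\to0$. Then $B_2(t)\le \epsilon_t\sup_{f}\|f\|_\infty\to0$, and we choose $\epsilon_t$ to decay faster than $m_t^{-1/2}$. Assumption \ref{aspt:growth} gives $m_t\to\infty$, so $B_3,B_4=\Theta(m_t^{-1/2})$ for a finite class. If labels are noisy, say with Bayes error $\eta>0$, then $B_1(t)\to\eta$ with high probability by standard concentration, using Assumption \ref{aspt:capacity}. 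Eventually $B_1$ dominates, because every other term vanishes while $B_1$ stays bounded away from zero. This gives $B^\star(t)=1\neq2$ for large $t$. Even without label noise, $B_3$ or $B_4$ decays as $m_t^{-1/2}$ while $B_2$ is pushed to zero faster, so some component other than $B_2$ must eventually win.

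\textbf{Main obstacle.} The delicate step is making the early-regime inequalities hold simultaneously and consistently with the fixed constants $\alpha,\delta$. The confidence term $B_4$ is largest at small $m$ and could dominate the discrepancy. I would handle this by choosing the construction's free parameters: the mass $P_X(A_2)$ (close to $1$), the function class $\mathcal F$ (scaling $f$ up to the allowed bound), and the initial budget $m_1$. The aim is $c\,P_X(A_2)>\max\{B_3(1),B_4(1)\}$. A fallback is to start the trajectory at a large $m_1$ that is still concentrated in $A_1$, which keeps $B_2$ bounded below while $B_3,B_4$ are already small. We also need the bound to hold per step with probability $1-\delta$. Since only the deterministic values of the components are compared, a union bound with $\delta_t=\delta/t^2$ suffices, and this changes $B_4$ only by logarithmic factors that do not affect the argument.
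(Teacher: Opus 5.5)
Your proof is correct, and its endgame matches the paper's Steps~1--3: $B_4\to 0$ and $B_2\to 0$, while label noise keeps $B_1$ bounded away from zero, so discrepancy dominance cannot persist. The difference is that you build an explicit witness. The paper instead argues generically that any trajectory \emph{initialized} in a $B_2$- or $B_4$-dominated regime must switch. It takes such an initialization for granted and gets $B_2(t)\to 0$ from Lemma~\ref{lem:discrepancy_vanishes}.

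Your construction gains two things. First, it actually exhibits the early $B_2$-dominated phase. Second, it does not depend on that lemma, whose conclusion does not follow from Assumption~\ref{aspt:acquisition}. A fixed query law $Q\neq P_X$ that is mutually absolutely continuous with $P_X$ is non-degenerate at every step. Yet Glivenko--Cantelli drives the empirical measure to $Q$, not $P_X$, so $d_{\mathcal F}(P_X,Q)$ stays a positive constant for any $\mathcal F$ separating the two. Forcing $\epsilon_t\to 0$ by design supplies exactly this missing ingredient. Your noiseless variant further shows that the paper's agnostic hypothesis $\epsilon^*>0$ is not needed for the existential claim. What you give up is the ``unavoidability'' reading for generic trajectories, but the theorem only asserts existence.

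The $B_4$ branch of the paper's Step~4 gives an even cheaper witness that needs no control of $B_2$. Take $P_Q^{(1)}=P_X$, a singleton $\mathcal H$, and noise-free first labels, so that $B_1=B_2=B_3=0<B_4$ at $t=1$. Label noise then makes $B_1$ overtake the vanishing $B_4$. So your ``main obstacle'' is avoidable, though your $B_2$ start has the merit of realizing the data-driven-to-model-driven hand-off the paper actually cares about.

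For the write-up, make the early-regime choices consistent. Do not obtain $B_1=0$ by making $\mathcal H$ rich enough to fit arbitrary labels on $S_1$. That lets the loss class realize every pattern on $S_1$, which gives constant-order empirical Rademacher complexity, so $B_3$ can match $B_2$. Instead:
\begin{itemize}
\item make the labels on $A_1$ noise-free and realizable by one member of a small finite class (a singleton already gives $B_3\equiv 0$);
\item put the label noise on $A_2$;
\item use a Lipschitz surrogate such as a ramp loss, so that Assumption~\ref{aspt:loss} holds;
\item take $m_1$ large, as in your fallback.
\end{itemize}
The union bound with $\delta_t=\delta/t^2$ is unnecessary. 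The theorem compares the values of the four components rather than requiring the bound to hold, and changing $\delta$ would alter $B_4$ as defined.
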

This structural instability motivates viewing AL trajectories as a sequence of functional phases, each associated with a different dominant mechanism governing the bound. It establishes that the dominance of components does not persist in time, motivating the validation of the existence of dominance order. A complete proof is provided in Appendix.

\subsection{Mechanism-Driven Dominance and Phase}
\label{subsec:phases}

To operationalize \cref{eq:main_bound}, we identify the dominant mechanism at each labeling budget. Let the four bound components be denoted $B_k(t)$, $k \in \{1,2,3,4\}$. For comparability, these components are standardized using their empirical mean and standard deviation across all budgets within an AL episode. 

\begin{definition}[Dominance]
At budget index $t$, the dominant mechanism is defined as $B_\star(t) = \arg\max_{k \in \{1,2,3,4\}} B_k(t).$
\end{definition}

\begin{definition}[Phase]
A phase is a maximal contiguous interval $T = [t_{\mathrm{start}},\, t_{\mathrm{end}}]$ such that $B_\star(t)$ remains constant for all $t \in T$. A phase transition occurs at budget $t+1$ whenever $B_\star(t+1) \neq B_\star(t)$.
\end{definition}
By \cref{thm:dominance_shift}, these dominance changes arise along AL trajectories under adaptive sampling, partitioning the learning process into mechanism-driven phases.

\subsection{Transition Detection}
\label{subsec:segmented_regression}

Empirical proxy trajectories are noisy due to stochastic batch acquisition and model training. To identify structural transition points, we apply segmented (piecewise-linear) regression along the labeling-budget axis. For each proxy $Z_k(t)$ corresponding to component $B_k(t)$, we fit:
\begin{equation}
Z_k(t) = \beta_{k,0} + \beta_{k,1} t + \sum_{j=1}^{J} \gamma_{k,j}\,(t - \tau_j)_+ + \epsilon_t,
\label{eq:segmented_reg}
\end{equation}
where $\beta_{k,0}$ and $\beta_{k,1}$ denote the intercept and initial slope. The hinge function $(t - \tau_j)_+ = \max(0, t - \tau_j)$ introduces a slope change $\gamma_{k,j}$ at breakpoint $\tau_j$, while $\epsilon_t$ captures residual noise. The number of breakpoints $J$ and their locations $\{\tau_j\}$ are selected using the Bayesian Information Criterion (BIC) \cite{hastie2009elements}. The union of all breakpoints detected across the four components defines candidate transition budgets.

\begin{definition}[Transition]
A budget $\tau$ is a global phase transition if: (1) at least two components exhibit a statistically significant slope change ($\gamma_{k,j} \neq 0$) in a neighborhood of $\tau$, and (2) the dominant component $B_\star(t)$ changes at~$\tau$.
\end{definition}

This data-driven procedure ensures that detected transitions reflect meaningful structural changes rather than run-specific noise. Anchoring these shifts to specific budgets enables empirical validation of our ``moving bottleneck'' theory.

\subsection{Operational Proxies for Bound Components}
\label{subsec:proxies}

Because the components of \cref{eq:main_bound} are not directly computable for deep AL networks, we introduce observable proxies $Z_k(t)$ that track their relative evolution along the AL trajectory. These proxies are designed as indicators of relative dominance rather 
than exact estimators of PAC-bound magnitudes. The proxy ordering reflects the ordering of the underlying bound components, not that absolute values match. Formally, we assume the proxies are order-preserving with respect to dominance shifts. Further theoretical justification for these choices is provided in Appendix.

\paragraph{Empirical Risk (ER).}

We quantify the marginal informativeness of new queries via the ER reduction on the most recently acquired batch $A_{t-1}$:
\begin{equation}
ER(t) = \hat{R}_{\mathrm{pre}}^{(t)} - \hat{R}_{\mathrm{post}}^{(t)},
\end{equation}
where $\hat{R}$ denotes the average cross-entropy (classification) loss on $A_{t-1}$ before and after retraining the model at episode $t$.

\paragraph{Distributional Discrepancy (LD, FD).}

We retain two complementary signals capturing the structure of $d_{\mathcal{F}}$. Label discrepancy (LD) measures representativeness through the total variation distance between class frequencies in the labeled set $S$ and the full dataset $X$:
\begin{equation}
\mathrm{LD}(S) = \frac{1}{2} \sum_{c=1}^{C} |p_{S}(c) - p_X(c)|.
\end{equation}
While the feature discrepancy (FD) measures alignment in the representation space:
\begin{equation}
\mathrm{FD}(S) = \frac{1}{|D|} \sum_{x \in D} \min_{z \in S} \left(1 - \cos(\phi(x), \phi(z))\right),
\end{equation}
where $\phi(\cdot)$ are $\ell_2$-normalized embeddings and $D$ is a reference set. Notably, FD is inherently entangled, reflecting both distributional representativeness and geometric dispersion.

\paragraph{Geometric Coverage (GC).}

To isolate geometric coverage from representativeness, we measure the mean nearest-neighbor distance within the labeled set:
\begin{equation}
\mathrm{GC}(S) = \frac{1}{|S|} \sum_{z \in S} \min_{z' \in S \setminus \{z\}} \left(1 - \cos(\phi(z), \phi(z'))\right).
\end{equation}
GC captures manifold exploration and redundancy that are structurally distinct from LD and FD (see Appendix).

\paragraph{Model Complexity (Comp.).}

We approximate the empirical Rademacher complexity via the $\ell_2$ norm of the trained model parameters: $\mathrm{Comp}(t) = \sum_w \|w\|_2$. This proxy reflects how the geometry of the labeled set constrains or expands the effective hypothesis space.

\paragraph{Confidence Term (Conf.).}

We follow the deterministic rate in \cref{eq:main_bound} using the labeled set size $m_t$: $\mathrm{Conf}(t) = \alpha\, m_t^{-1/2}$ served as a vanishing baseline mechanism.

\subsection{Functional Alignment}
\label{subsec:alignment}

The structural instability of the generalization bound in \cref{thm:dominance_shift} implies that no single acquisition strategy is globally optimal. We therefore hypothesis that AL efficiency depends on the functional alignment between the method's selection bias and the currently dominant bound component.

\begin{proposition}
Let $B_\star(t)$ denote the dominant mechanism at budget $t$. An acquisition strategy $\mathcal{A}$ that minimizes the proxy $Z_\star$ associated with $B_\star$ is expected to yield larger marginal reduction in true risk than strategies optimized for non-dominant components.
\end{proposition}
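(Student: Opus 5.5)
The plan is to make the statement precise and then argue through the bound in \cref{eq:main_bound}. I would read ``marginal reduction in true risk'' as the one-step decrease of the upper bound
\[
U(t)=\sum_{k=1}^{4} B_k(t),
\qquad
\Delta U(t)=U(t)-U(t+1).
\]
I would also use the order-preservation assumption on the proxies from \cref{subsec:proxies}: if a strategy reduces $Z_k$, it reduces $B_k$ by a comparable amount.

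With this reading, the proposition becomes a comparison of $\Delta U$ between a strategy $\mathcal{A}_\star$ that targets $B_\star(t)$ and a strategy $\mathcal{A}_j$ that targets some $B_j$ with $j\neq\star$. Since the statement says ``is expected to,'' I would prove it in an expectation sense under a local linearization. That yields a conditional claim rather than a deterministic one.

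\textbf{Step 1: decompose the decrease.} I write
\[
\Delta U(t)=\sum_k \Delta B_k(t),
\]
where $\Delta B_k(t)=B_k(t)-B_{k}(t+1)$ is the change of component $k$ after adding a batch of fixed size. The confidence term changes by the same deterministic amount $\alpha\bigl(m_t^{-1/2}-m_{t+1}^{-1/2}\bigr)$ under every strategy, so it cancels in the comparison. This holds by Assumption \ref{aspt:growth} and equal batch sizes.

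\textbf{Step 2: a proportional-reduction model.} I assume a strategy optimized for component $k$ achieves $\Delta B_k\approx \eta\, B_k(t)$ for an efficiency constant $\eta\in(0,1]$ shared across strategies. I also assume its effect on the other components is bounded by a cross-term $\epsilon\,\max_k B_k(t)$ with $\epsilon<\eta$. This is the natural ``diminishing returns'' hypothesis: larger terms have more room to shrink.

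\textbf{Step 3: compare.} Because the components are standardized, $B_\star(t)\ge B_j(t)$ by the definition of dominance, and the comparison is meaningful on a common scale. Under Steps 1--2 we get
\[
\Delta U_{\mathcal{A}_\star}-\Delta U_{\mathcal{A}_j}\;\ge\;\eta\bigl(B_\star(t)-B_j(t)\bigr)-2\epsilon B_\star(t).
\]
This gap is nonnegative whenever the dominance margin $B_\star(t)-B_j(t)$ exceeds $(2\epsilon/\eta)B_\star(t)$. Then $\mathbb{E}[\Delta R]$ inherits the ordering through the bound.

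\textbf{Main obstacle.} The hardest step is linking a reduction of the bound to a reduction of the true risk $R(h)$ itself. \cref{thm:biased_bound} is only an upper bound, so a larger bound decrease does not guarantee a larger risk decrease.

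My first attempt is to assume the bound is tight up to a slack that is stable across one step. Then $\Delta R\approx \Delta U$ up to lower-order terms, which is why the conclusion is stated ``in expectation.'' If that assumption is too strong, I would weaken the claim to one about the bound $U$ and defer the true-risk statement to the empirical validation.

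A secondary difficulty is justifying the shared efficiency $\eta$ across very different mechanisms. I would handle this by normalizing each strategy's effect in standardized proxy units.
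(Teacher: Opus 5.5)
\textbf{There is no paper proof to follow.} The paper introduces this proposition as a hypothesis about functional alignment (``is expected to''). It supports it only empirically, through the proxy/method ranking maps and the post-hoc TCU switch in Section~\ref{subsec:method_ranking}. Your argument therefore rests entirely on the assumptions you add, and the gap is that Step~2 already contains the conclusion. With target-specific efficiencies, the comparison is $\eta_\star B_\star(t)$ versus $\eta_j B_j(t)$, so up to cross-terms the proposition is equivalent to $\eta_\star B_\star(t)\ge\eta_j B_j(t)$. Postulating a shared $\eta$ simply declares that the largest term admits the largest reduction, which is the claim itself. Nothing in Assumptions~1--5 or Assumption~\ref{ass:alignment} constrains relative efficiencies or cross-effects.

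\textbf{Standardization does not rescue it.} Your remedy of measuring effects in standardized units fails because $U=\sum_k B_k$ is a sum of raw terms. With $\tilde B_k=(B_k-\mu_k)/\sigma_k$ as in Section~\ref{subsec:phases}, a shared standardized efficiency $\Delta\tilde B_k=\eta\tilde B_k$ gives $\Delta B_k=\eta\,\sigma_k\tilde B_k$. You would then need $\sigma_\star\tilde B_\star\ge\sigma_j\tilde B_j$, whereas standardized dominance only gives $\tilde B_\star\ge\tilde B_j$. This fails, e.g., when both z-scores are positive and $\sigma_\star/\sigma_j<\tilde B_j/\tilde B_\star$. Z-scores can also be negative, and then $\eta\tilde B_k$ and $\epsilon\max_k\tilde B_k$ lose their meaning. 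To make Step~3 coherent you must use raw dominance (the $k^*(t)$ in the proof of Theorem~\ref{thm:dominance_shift}). Then a shared $\eta$ across heterogeneous mechanisms is a bare assumption.

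\textbf{Two further mismatches with the statement.} First, your conclusion needs the margin $B_\star-B_j\ge(2\epsilon/\eta)B_\star$, which the proposition does not contain. Second, Step~2 contradicts Step~1 when $\star=4$. The confidence term can be dominant (that proof explicitly considers trajectories dominated by $B_4$), yet its decrement is strategy-independent. So no strategy is ``optimised'' for it, and $\star$ must be restricted to $\{1,2,3\}$.

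\textbf{Proxies to components.} Assumption~\ref{ass:alignment} reads $B_k=\psi_k(Z_k)+\varepsilon_{k,t}$ with $|\varepsilon_{k,t}|\le\eta_k$ (the paper's $\eta_k$, not your efficiency). It preserves orderings between components under sufficient separation. It does not turn a one-step decrease of $Z_k$ into a comparable decrease of $B_k$. If $|\psi_k(Z_k(t))-\psi_k(Z_k(t+1))|<2\eta_k$, even the sign of $\Delta B_k$ is undetermined. You would have to assume in addition that per-step proxy changes exceed this noise floor.

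\textbf{Bound to true risk.} As you suspected, this link is the weakest, and your stable-slack fix is least plausible exactly where you need it. Write $R=U-s$ with slack $s\ge 0$. Comparing two strategies from the same state requires their post-step slacks to agree. That is most doubtful for the mechanism you credit early on. Since $d_{\mathcal F}$ is a supremum over $\mathcal F$, a representativeness strategy can shrink it substantially while the true risk of the retrained model barely moves, with the decrease absorbed by $s$.

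What your argument actually delivers is a bound-level comparison under five explicit hypotheses: raw dominance, equal efficiency, small cross-effects, a margin, and stable slack. That is a reasonable formalisation of what the paper leaves as an empirically tested hypothesis, but not a proof of the proposition as stated.
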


This perspective suggests that the apparent failure of certain AL methods at specific budgets is not an inherent flaw of the algorithm, but a mismatch between the method's objective and the current bottleneck in the generalization bound. Specifically, our framework predicts a tripartite alignment:
\begin{enumerate}
    \item \textbf{Representativeness-based} methods (\eg, TypiClust \cite{hacohen2022active}) excel when \emph{distributional discrepancy} dominates, typically at early budgets when the labeled set poorly reflects $P_X$.
    \item \textbf{Coverage-based} methods (\eg, $k$-Center \cite{sener2017active}) perform best when geometric support expansion governs learning and \emph{hypothesis-space complexity} becomes the limiting factor.
    \item \textbf{Uncertainty-based} methods (\eg, Entropy \cite{settles2009active}) dominate during the \textit{empirical refinement} phase, where informative boundary samples are required to reduce residual training loss.
\end{enumerate}
By reframing heuristic budget regimes as a sequence of phase-specific optimizations, our framework provides a principled basis for strategy switching and motivates the design of transition-aware AL algorithms.

\section{Results and Discussion}
\label{sec:experiments}

\subsection{Experimental Setup}
\label{sec:settings}

\textbf{Datasets and Protocol.}
We evaluate our framework on natural and medical imaging datasets, including CIFAR-10/100 \cite{krizhevsky2009learning}, subsets of ImageNet \cite{he2016deep}, and ISIC \cite{gessert2020skin}. We follow a standard pool-based AL protocol with multi-scale labeling budgets $B$ to capture both low-budget behavior and high-budget convergence. ResNet-18/50 models \cite{he2016deep} are retrained for 100-200 epochs using SGD optimization. All implementation details and dataset specifications are provided in Appendix.

\textbf{Baselines.} 
To evaluate functional alignment, we select representative methods from each AL category: 
representativeness-based (TypiClust \cite{hacohen2022active}); coverage-based (Coreset \cite{sener2017active}, ProbCover \cite{yehuda2022active}); uncertainty-based (uncertainty \cite{lewis1995sequential}, entropy \cite{settles2009active}, and hybrid (BADGE \cite{ash2019deep}, UHerding \cite{bae2024uncertainty}). Random sampling is included as a reference baseline. To isolate the impact of feature quality from pretraining, we compare fully supervised training with self-supervised learning (SSL) backbones, including SimCLR \cite{chen2020simple}, BYOL \cite{grill2020bootstrap}, and MoCo v2+/v3 \cite{chen2020improved,chen2021empirical}. In the SSL setting, we train a linear classifier on frozen embeddings to decouple sampling geometry from representation drift. Additionally, we implement a switch among TypiClust, Coreset, and Uncertainty at detected transition points, named TCU.

\subsection{Proxy Dynamics and Generalization}
\label{subsec:proxy_dynamics}

We evaluate our framework by tracking the evolution of the proposed proxies alongside test accuracy. \cref{fig:proxy_evolution} illustrates these dynamics on CIFAR-100. Despite plotting raw proxy values, clear structural transitions emerge, including sharp decay of discrepancy and the non-monotonic fluctuations of ER and Comp., indicating that the identified regimes are intrinsic to the learning process rather than artifacts of downstream standardization. Additional results (CIFAR-10, CIFAR-100 with SimCLR, and ISIC) are provided in Appendix.

\begin{figure*}[!t]
\centering
\includegraphics[width=0.95\textwidth,height=0.18\textheight]{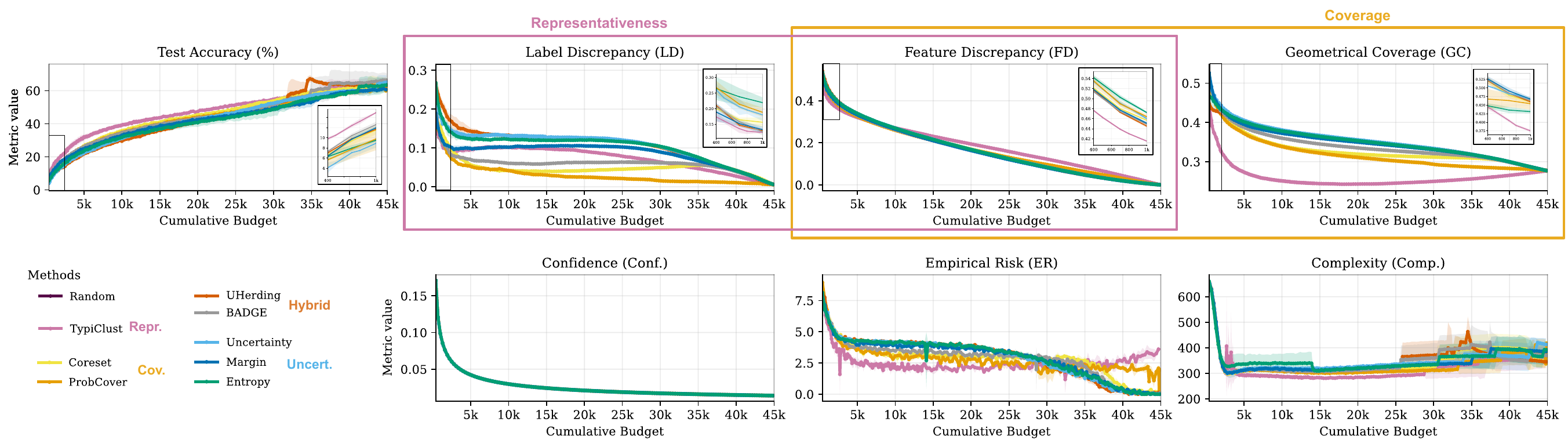}
\caption{Evolution of accuracy and operational proxies on CIFAR-100. Insets provide a granular view of cold-start dynamics. Representativeness proxies decay sharply in Phase I, where TypiClust yields superior marginal gains by minimizing distributional discrepancy. Geometric proxies exhibit discriminative stabilization during the Phase II transition, where coverage-based methods excel. ER and Comp. exhibit the non-monotonic behavior (\cref{thm:dominance_shift}), confirming that AL regimes are governed by shifts in dominant generalization mechanisms rather than fixed budget counts.}
\label{fig:proxy_evolution}
\end{figure*}

\textbf{Phase I: Data–Driven Regime.}
In the initial budget region ($\leq$5k labels), learning is dominated by \emph{distributional discrepancy}. LD and FD strongly differentiate methods, with TypiClust achieving the lowest values by prioritizing high-density ``typical'' samples. Although ER and Comp. decrease sharply at the beginning, this is consistent across methods and reflects the transition from an untrained state to a basic supervised signal rather than informative sampling.

During this stage, model calibration is insufficient for reliable uncertainty estimation. Consistent with our functional alignment principle (\cref{subsec:alignment}), minimizing discrepancy yields the largest performance gains, with TypiClust maintaining a $\sim$2\% accuracy advantage in this regime. Methods achieving the fastest reduction in LD and FD produce the strongest early accuracy gains.

\textbf{Phase II: Transition Regime.}
As the labeled set becomes representative ($\sim$5k–25k labels), the bottleneck shifts toward \emph{geometric coverage} and \emph{complexity}. GC and Comp.\ diverge across methods, while discrepancy proxies become less discriminative. Coverage-driven strategies (Coreset) begin to outperform TypiClust, maintaining lower GC values than uncertainty approaches while avoiding the local redundancy caused by TypiClust oversampling dense regions. The smoother decay of ER indicates a transition from learning global data structure to constraining the hypothesis space across the feature manifold, consistent with our predicted mid-budget dominance of coverage and complexity in \cref{subsec:alignment}.

\textbf{Phase III: Model–Driven Regime.}
Beyond $\sim$30k labels, distributional and geometric proxies converge across all methods, leaving \emph{empirical risk} refinement as the dominant mechanism. Uncertainty-based and hybrid methods (entropy, BADGE, and UHerding) achieve the largest reductions in ER and obtain the highest final accuracies. This late-stage behavior confirms the expected dominance shift toward the ER term in \cref{eq:main_bound}, providing a structural explanation for why uncertainty-based sampling becomes most effective only after the global distribution has been sufficiently characterized.

\textbf{Proxy Dynamics and Theoretical Alignment.}
The shift in dominant mechanisms is also reflected in the sampling behavior visualized in \cref{fig:tsne_img50}. TypiClust prioritizes the most ``typical'' points within clusters, rapidly reducing distributional discrepancy. In contrast, entropy sampling initially ignores global structure but later concentrates on ambiguous near-boundary regions (the ``voids'' between clusters), which become most informative for reducing ER.

Across all benchmarks, we observe the \emph{non-monotonic} behavior discussed in \cref{subsec:bound}. In particular, Comp.\ and ER fluctuate during the shift from exploitation to exploration. While these patterns are consistent across datasets, the \emph{temporal onset of each phase} varies and is identified quantitatively using segmented regression. These results support for the proposed ``moving bottleneck'' hypothesis.

\begin{figure*}[!t]
\centering
\includegraphics[width=0.88\textwidth,height=0.08\textheight]{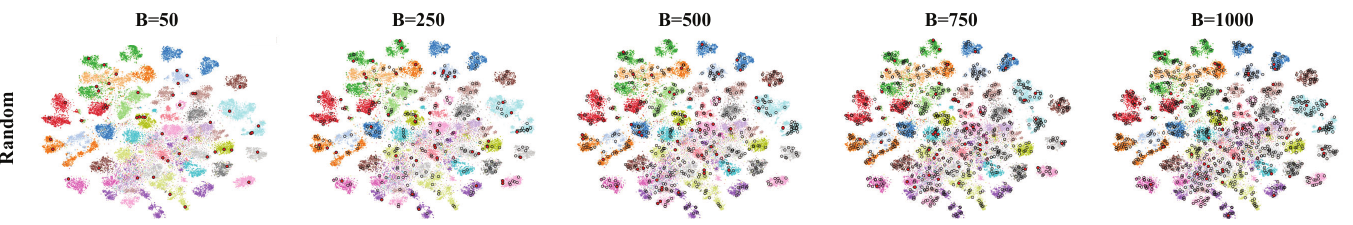}
\includegraphics[width=0.88\textwidth,height=0.08\textheight]{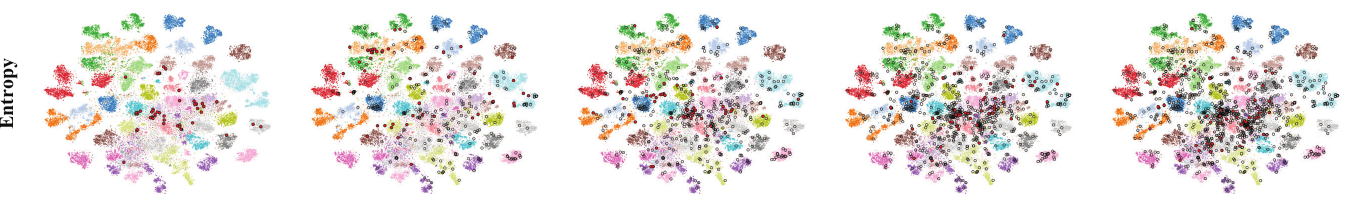}
\includegraphics[width=0.88\textwidth,height=0.08\textheight]{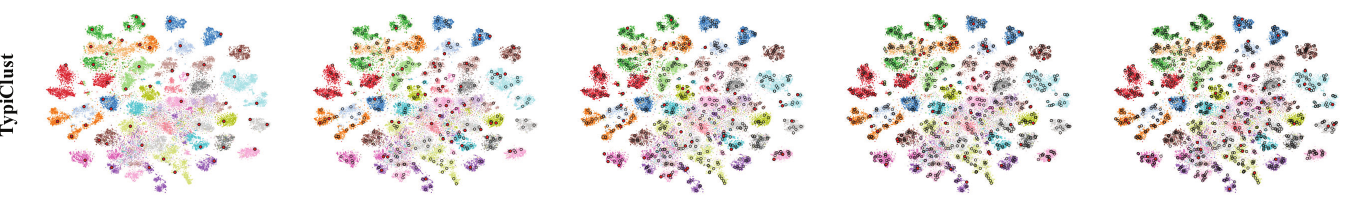}
\caption{Spatiotemporal selection dynamics on ImageNet-50. t-SNE visualizations (MoCov3 embeddings) compare AL methods across increasing budgets (B). Red points denote new acquisitions and black previously labeled. TypiClust prioritizes manifold typicality, reducing distributional discrepancy (Phase I), while Entropy targets ambiguous boundary regions to minimize risk. This shows how selection bias aligns with the dominant generalization bottleneck, explaining why TypiClust eventually suffers from local redundancy while Entropy remains effective for decision-boundary refinement.}
\label{fig:tsne_img50}
\end{figure*}

\subsection{Regime Identification}
\label{subsec:results_segmented_regression}

To qualitatively validate our proposed phase taxonomy, we perform segmented regression across four dataset-representation pairs. We evaluate regression models with $K\in\{1,\dots,5\}$ segments and compute the BIC, Sum of Squared Errors (SSE), and coefficient of determination ($R^2$) to identify structural breakpoints.

As reported in \cref{tab:pairwise_regime_table}, increasing $K$ naturally reduces SSE by capturing local fluctuations. However, $K=3$ provides the most parsimonious segmentation that still captures the major structural transitions in the AL trajectory. 
Across all benchmarks, this configuration yields consistently high $R^2$ values (>0.93), supporting the existence of three macro-regimes in the AL process.

\paragraph{Structural Variation and Data Complexity.} 

The temporal location of transition points $\tau$ strongly correlates with dataset complexity. 
Although CIFAR-10 and CIFAR-100 share the same total sample size, the second transition $\tau_2$ for CIFAR\nobreakdash-100 occurs approximately four times later than for CIFAR-10. This delay reflects the 10-fold increase in label space, which requires a substantially larger budget before the model can move from data-driven representation learning to model-driven refinement. For ISIC, the $R^2$ values are lower than for the CIFAR benchmarks, though still high. This observation aligns with our theoretical expectations, \ie, in highly imbalanced and noisy datasets, the ER stabilizes more slowly, preventing the clean phase transitions observed in balanced benchmarks.

\paragraph{Effect of Representation Quality.}

The influence of representation quality is most evident in the CIFAR-100 with SimCLR results. Because the SSL backbone provides a structured feature space, the model begins training with meaningful representation and partial confidence calibration. Hence, detected transition points shift earlier, \eg, $\tau_2$ moves from 37,900 labels in the supervised setting to 11,000 with SimCLR. At later budgets, proxy trajectories flatten, indicating that distributional alignment and geometric coverage are achieved with fewer samples. Therefore, the three phases are compressed toward early portion of labeling. This confirms that regime transitions depend not only on dataset structure but also on prior model knowledge. Even though the dataset remains identical, improved initialization shifts transition points, altering the method rankings, providing further empirical support for our proposed ``moving bottleneck'' hypothesis. 

\begin{table*}[t]
\centering
\caption{Piecewise regime analysis across dataset–representation pairs. Results quantify structural transitions in the AL trajectory by fitting $K$ segments regression models. While increasing $K$ reduces the SSE, $K=3$ provides the best balance between model parsimony (BIC) and macro-phase identification. $\tau$ represents absolute label counts at identified breakpoints. Note the significant shift in $\tau$ for CIFAR-100+SimCLR.}
\label{tab:pairwise_regime_table}
\setlength{\tabcolsep}{4pt}
\resizebox{\textwidth}{!}{%
\begin{tabular}{c cccc | cccc}
\toprule
& \multicolumn{4}{c|}{\textbf{CIFAR-10}} 
& \multicolumn{4}{c}{\textbf{ISIC}} \\
\cmidrule(lr){2-5}
\cmidrule(lr){6-9}
$K$ & BIC ↓ & SSE ↓ & $R^2$ ↑ & $\tau$
    & BIC ↓ & SSE ↓ & $R^2$ ↑ & $\tau$ \\
\midrule

1 
& -31182 & 1.82 & 0.960 & --
& -1382 & 0.05 & 0.828 & -- \\

2
& -42188 & 0.12 & 0.997 & 6700
& -1427 & 0.03 & 0.892 & 7000 \\

\textbf{3}
& \textbf{-43235} & \textbf{0.09} & \textbf{0.998} & \textbf{5000, 8300}
& \textbf{-1485} & \textbf{0.02} & \textbf{0.937} & \textbf{7000, 10000} \\

4
& -44008 & 0.07 & 0.998 & 500, 5500, 9100
& -1513 & 0.01 & 0.956 & 5000, 8000, 10000 \\

5
& -44680 & 0.06 & 0.999 & 500, 4200, 6700, 11800
& -1526 & 0.01 & 0.967 & 5000, 6000, 8000, 10000 \\

\midrule

& \multicolumn{4}{c|}{\textbf{CIFAR-100}} 
& \multicolumn{4}{c}{\textbf{CIFAR-100 + SimCLR}} \\
\midrule

1 
& -32638 & 1.19 & 0.985 & --
& -28528 & 2.44 & 0.815 & -- \\

2
& -33470 & 0.96 & 0.988 & 17700
& -31763 & 1.04 & 0.921 & 10600 \\

\textbf{3}
& \textbf{-34349} & \textbf{0.76} & \textbf{0.990} & \textbf{6700, 37900}
& \textbf{-33014} & \textbf{0.74} & \textbf{0.943} & \textbf{700, 11000} \\

4
& -34763 & 0.67 & 0.991 & 5900, 34600, 37900
& -34452 & 0.51 & 0.962 & 700, 7500, 16400 \\

5
& -35103 & 0.61 & 0.992 & 2700, 18300, 34600, 37900
& -35014 & 0.43 & 0.967 & 700, 6800, 13300, 22100 \\

\bottomrule
\end{tabular}
}
\end{table*}

\begin{figure*}[!t]
\centering
\includegraphics[width=0.91\textwidth,height=0.22\textheight]{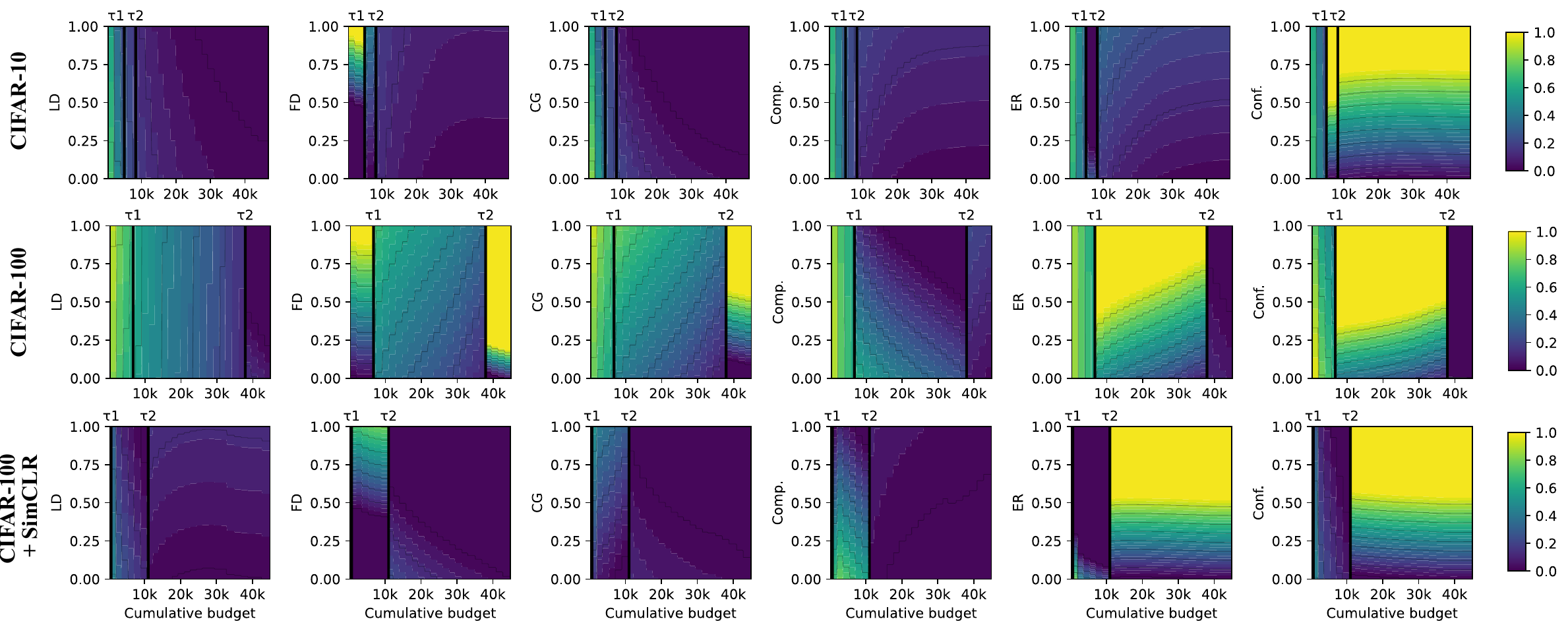}
\caption{Identification of AL regimes as probability maps of each proxy vs annotation budget. The $\tau$ points show detected phase transitions with piecewise regression.}
\label{fig:probability_maps}
\end{figure*}

\begin{figure*}[!t]
\centering
\includegraphics[width=0.97\textwidth,height=0.18\textheight]{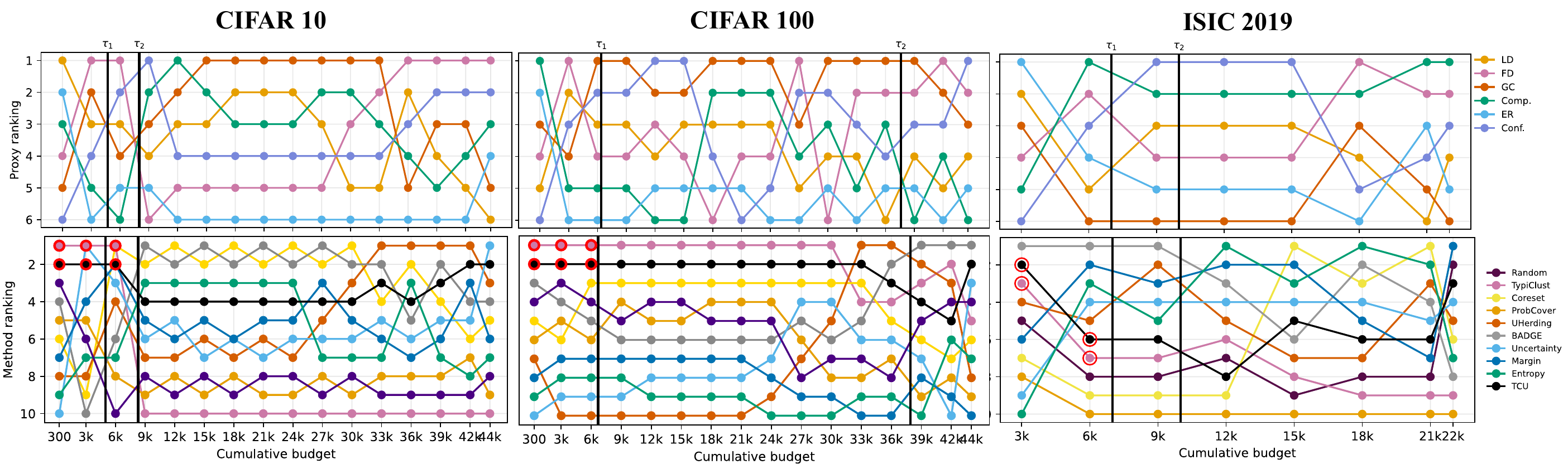}
\caption{Spatiotemporal alignment between proxy importance and acquisition strategy rankings. \textbf{Top:} Relative ranking of operational proxies based on their contribution to the generalization bound. \textbf{Bottom:} Corresponding accuracy-based rankings. Red circle refers to equivalent rank.}
\label{fig:ranking}
\end{figure*}

To visualize the evolution of the generalization bottleneck, \cref{fig:probability_maps} presents probability maps of each proxy as a function of the annotation budget. These maps quantify the relative influence of each generalization mechanism along the labeling trajectory. Consistent with our theoretical taxonomy, the early regime is dominated by LD, FD, and GC, indicating that representativeness and manifold support are the initial bottlenecks at low budgets. Once the trajectory passes $\tau_1$, the discriminative power of LD rapidly diminishes, but FD and GC remain significant for some time, reflecting the continued importance of geometric coverage. At the same time, ER and Conf.\ gradually increase in importance.

Beyond $\tau_2$, the trajectory enters the model-driven regime. For CIFAR-10, ER and Conf.\ become the dominant signals, indicating that performance improvements are driven mainly by ER refinement. In contrast, CIFAR-100 retains a noticeable secondary influence from Comp.\ and geometric proxies, suggesting that high-dimensional label spaces continue to benefit from additional manifold exploration even after basic coverage has been achieved.

The effect of representation quality is particularly evident for CIFAR-100 with SimCLR. While the first two phases resemble the supervised baseline, the final regime is dominated by ER. Because the SSL backbone provides a well-structured representation, the model rapidly achieves distributional alignment and geometric coverage. Consequently, the learning bottleneck shifts primarily to label-driven risk refinement, with minor influence from feature-space exploration.

\subsection{Alignment Between Proxy Dominance and Method Ranking}
\label{subsec:method_ranking}

To evaluate whether minimizing specific components of the theoretical bound improves downstream performance, we compare the ranking dynamics of proxies and AL methods across labeling budgets. As shown in \cref{fig:ranking}, no proxy or AL methods dominate the entire trajectory. Instead, the rankings evolve over time, reflecting shifts in the dominant generalization mechanism. While these plots highlight relative ordering, they do not convey absolute performance gaps.

\textbf{Representativeness Dominance.} 
Across all datasets, LD and FD consistently occupy the highest proxy rankings early on, with TypiClust asdominating method in this phase. For CIFAR-10 and ISIC, this advantage diminishes quickly, precisely when GC gains importance. In CIFAR-100, this representativeness-dominated phase persists longer, reflecting the larger label space and the greater budget required to achieve sufficient distributional coverage.

\textbf{Transition and Hybridization.} 
As the advantage of TypiClust declines, the ranking gradually shifts toward coverage-oriented and hybrid strategies. Coverage-based methods begin to rise right after the representativeness peak, while hybrid methods (UHerding), which combine multiple signals, gain prominence during the transition regime. In later stages, these hybrid strategies become comparable to pure uncertainty methods. This pattern is most clearly observed in CIFAR-10, where uncertainty and hybrid approaches dominate the final ranking.

\textbf{Complexity and Dataset Noise.} 
ISIC exhibits noisier ranking dynamics due to its class imbalance and label noise. Hybrid methods achieve high rankings earlier in the trajectory, particularly during transition, before uncertainty methods eventually dominate the model-driven phase. Despite this variability, the same qualitative relationship persists: proxies that rank highest at a given budget correspond to the methods achieving the best accuracy. Although each queried sample influences all proxies simultaneously, the most effective strategy is the one that strongly reduces the currently dominant mechanism.

\textbf{Structural Significance of Breakpoints.} The transition points $\tau$ identified via segmented regression coincide with the most volatile regions of ranking maps, where the largest number of rank shifts occurs among proxies and methods. This alignment indicates that the detected breakpoints reflects genuine structural shifts in learning dynamics rather than statistical artifacts. These points therefore mark transitions between regimes dominated by different generalization mechanisms.

\textbf{Post-Hoc Switching as Alignment Validation.}
To test whether the detected transition points $\tau$ have algorithmic meaning, we evaluate TCU, a post-hoc hard switch between TypiClust, Coreset, and Uncertainty, with boundaries set by segmented regression. As shown in \cref{fig:ranking}, TCU improves over static baselines by preserving TypiClust's early advantage and later exploiting coverage- and uncertainty-driven selection. Still, switching is limited when the chosen component is not competitive in its regime, suggesting that transition-aware AL should adaptively select among strong strategies rather than follow a fixed sequence.

\subsection{Cold-Start Dynamics and Representational Synergy}

\begin{figure}[t]
\centering
\includegraphics[width=0.81\columnwidth]{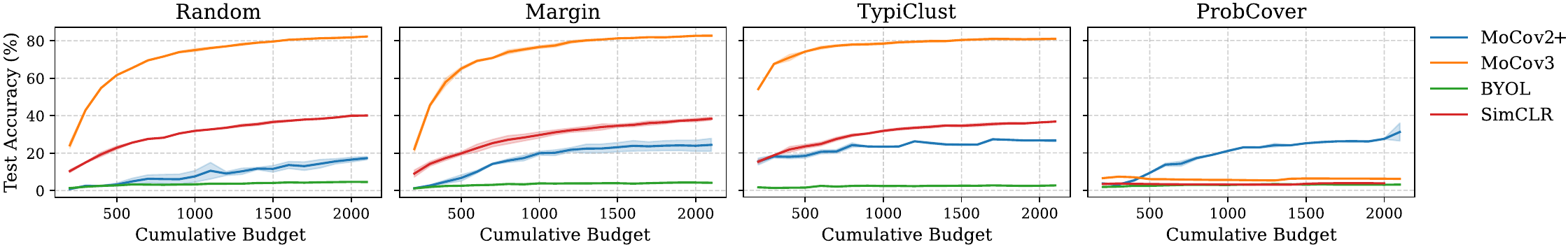}
\caption{Low-budget performance of the AL methods on ImageNet-100 using different embeddings. SSL enhances early-stage accuracy by providing meaningful features, particularly benefiting TypiClust. For ProbCover, the fixed-radius selection hampers training across embeddings.}
\label{fig:ssl_comparison_img100}
\end{figure}

As an additional analysis, we examine the cold-start regime bridging AL and few-shot or zero-shot scenarios. The earlier analyses presented in \crefrange{subsec:proxy_dynamics}{subsec:method_ranking} identified this region as a data-driven phase where representativeness forms the dominant bottleneck. However, methods that rely on it, such as TypiClust and ProbCover, remain highly dependent on the underlying representation quality. \cref{fig:ssl_comparison_img100} shows this sensitivity on ImageNet-100 using several SSL embeddings. 

TypiClust achieves strong early performance for some representations but not universally; using BYOL barely exceeds the random baseline, whereas with MoCov3 it surpasses 50\% accuracy at the same budget. These results highlight an important synergy between SSL and AL: sampling effectiveness depends strongly on how well the representation captures the underlying data geometry. 

Beyond accuracy improvements, representation quality also significantly affects computational efficiency (see Appendix). Leveraging pretrained SSL representations yields substantial training speedups, ranging from $30\times$ to over $160\times$ compared to training from scratch. This advantage is particularly relevant for out-of-distribution or domain-specific applications, where labeled data is scarce and rapid adaptation is essential.

\subsection{Efficiency and Scaling via Regime Awareness}

An important implication of our framework is that \emph{no single selection model remains optimal throughout the entire learning trajectory}. This observation motivates the use of regime-aware transition strategies that adapt the sampling mechanism as the dominant generalization bottleneck evolves. Importantly, this is not only an accuracy consideration but a computational scaling concern.

To quantify this effect, we evaluate the computational cost of AL rounds at small ($B=10$) and large ($B=1000$) budgets (see Appendix). The results shows that representativeness- and coverage-based methods (TypiClust, UHerding) scale substantially worse than uncertainty ones, often exceeding a runtime ratio of 2.0. For example, TypiClust reaches a ratio of 6.79 on CIFAR-10, making prolonged deployment on large datasets computationally impractical.

\section{Conclusion}
\label{sec:conclusion}

In this work, we moved beyond heuristic, budget-based descriptions of AL and provided a principled structural characterization of its dynamics. By reinterpreting PAC-style risk components as interacting mechanisms, we showed that the success of any AL strategy depends on its functional alignment with the currently dominant mechanism governing generalization. Our theoretical analysis further demonstrates that mechanism shifts are a structural consequence of adaptive sampling rather than an empirical artifact.

Empirically, we introduced a three-phase taxonomy, data-, transition-, and model-driven, that explains why representativeness-, coverage-, and uncertainty\nobreakdash-based methods excel at different stages of the labeling. Using measurable proxies for the components of the generalization bound, we showed that these phase transitions can be detected directly from the learning dynamics. Moreover, experiments with SSL reveal that transition points ($\tau$) are not fixed to absolute label counts but shift according the maturity of the underlying representation, effectively compressing early phases when strong prior knowledge is available.

These findings motivate the design of \emph{transition-aware AL algorithms}. Rather than relying on a single static acquisition rule, future systems could monitor proxy dynamics to automatically adapt sampling as the dominant generalization bottleneck evolves. Beyond improving accuracy, such transition-aware strategies could potentially offer computational benefits by enabling timely transitions away from expensive representativeness-based methods once their marginal utility diminishes. By providing this ``missing map'' of AL dynamics, this work unifies previously fragmented empirical observations and lays the foundation for more robust, automated, and theoretically grounded AL systems.

\section*{Acknowledgements}
This project is supported by the Pioneer Centre for AI, funded by the Danish National Research Foundation (grant number P1).

\bibliographystyle{splncs04.bst}
\bibliography{main.bib}

\newpage
\appendix
\chapter*{Appendix} \label{sec:appendix}

\section{Structural Properties of Bound Components}

\begin{lemma}[Non-monotonic empirical risk under adaptive sampling]
\label{lem:empirical_nonmonotone}
Let $\ell$ be a bounded loss and let $S_t \subset S_{t+1}$ with $S_{t+1}=S_t \cup A_t$. Suppose the learner returns an ERM $h_t \in \arg\min_{h\in\mathcal{H}} \hat R_{S_t}(h)$. Then there exist acquisition batches $A_t$ such that $\hat R_{S_{t+1}}(h_{t+1}) > \hat R_{S_t}(h_t)$.
\end{lemma}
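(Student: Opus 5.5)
The lemma is existential, so I will construct one explicit instance rather than argue in general. The key reduction is that empirical risk is an average, and adding labels can only raise the minimum achievable average if the new points cannot be fit. So I aim for a starting set $S_t$ that some hypothesis fits perfectly, $\hat R_{S_t}(h_t)=0$, and a batch $A_t$ that no hypothesis in $\mathcal{H}$ can fit together with $S_t$. Then $\hat R_{S_{t+1}}(h_{t+1})>0=\hat R_{S_t}(h_t)$.

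For the construction I take the $0$--$1$ loss, which is bounded in $[0,1]$. Let $\mathcal{H}$ be a restricted class, for example the constant classifiers on $\mathcal{X}$ with $\mathcal{Y}=\{0,1\}$, or threshold functions on $\mathbb{R}$. Take $S_t=\{(x_1,0)\}$, which a constant-$0$ hypothesis fits, so $\hat R_{S_t}(h_t)=0$. Take $A_t=\{(x_2,1)\}$ with $x_2\neq x_1$; in the threshold version I would use a configuration that thresholds cannot separate. Every $h\in\mathcal{H}$ then errs on at least one of the two points, so $\min_h \hat R_{S_{t+1}}(h)\ge 1/2>0$. The same example works for any bounded loss with $\ell(y,y)=0<\ell(y',y)$ for $y'\neq y$, and it respects $S_t\subset S_{t+1}$ with $m_{t+1}>m_t$.

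If I want the example to be compatible with the setting of \cref{thm:biased_bound}, where the loss is Lipschitz and $\mathcal{H}$ is rich, I would instead use label noise. Place the new point $x_2$ equal to (or arbitrarily close to) $x_1$ but with the opposite label. Then no function fits both, and the Lipschitz property of $\ell$ forces a strictly positive average loss on the pair. This version also models the paper's intended mechanism: the acquisition rule queries high-loss, boundary-ambiguous regions.

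The main obstacle is ensuring that the \emph{minimized} risk over all of $\mathcal{H}$ increases, not just the risk of the old $h_t$, because the learner retrains and a rich class can interpolate distinct points. I handle this either by restricting the capacity of $\mathcal{H}$, or by using duplicated inputs with conflicting labels, which defeats any class of functions. The rest of the argument is routine arithmetic on the empirical averages.
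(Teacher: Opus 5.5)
Your proof is correct. It uses the same mechanism as the paper's proof: start from an $S_t$ that $\mathcal{H}$ fits well, then add a batch that is label-inconsistent with it. The difference is rigor. The paper only argues that the \emph{old} $h_t$ may incur large loss on $A_t$, and then asserts that joint minimization ``can'' raise the average. That leaves open exactly the point you flag as the main obstacle: the re-minimized risk $\hat R_{S_{t+1}}(h_{t+1})$, not merely $\hat R_{S_{t+1}}(h_t)$, must go up, and this is not automatic because new points can also dilute the average. Your explicit instance settles this: constant classifiers, $0$--$1$ loss, $S_t=\{(x_1,0)\}$ and $A_t=\{(x_2,1)\}$ give $\min_h \hat R_{S_{t+1}}(h)\ge 1/2>0=\hat R_{S_t}(h_t)$. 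Your choice of $\ell$, $\mathcal{H}$ and $S_t$ is legitimate, because the lemma can only hold existentially (it fails for $\ell\equiv 0$), and the paper's own proof also picks the setting. Your remark that the example extends to any loss with $\ell(y,y)=0<\ell(y',y)$ is also right.

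One caution concerns your optional Lipschitz variant. Only an exact duplicate $x_2=x_1$ defeats every hypothesis class; a merely nearby point can be interpolated by a rich $\mathcal{H}$. Also, Lipschitzness of $\ell$ does not by itself give $\inf_a[\ell(a,0)+\ell(a,1)]>0$, since a bounded Lipschitz loss may vanish at a common prediction such as $a=1/2$. So you need a loss that separates the labels at every prediction. The clipped squared loss works: it yields average risk at least $1/4$ on the duplicated pair.
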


\begin{proof}
Let $\mathcal{H}$ be a class with finite capacity. If $S_t$ consists of samples that are easily separable or lie far from the optimal decision boundary, $h_t$ attains low empirical loss. Let $A_t$ be an acquisition batch containing hard samples, those located within the margin of the current hypothesis or samples that are structurally inconsistent with the labels in $S_t$. Since $h_t$ is optimized only for $S_t$, its loss on the novel points in $A_t$ may be arbitrarily large. Because $\mathcal{H}$ must now minimize the joint loss over $S_t \cup A_t$, the introduction of these high-loss points forces a trade-off that can increase the average empirical risk $\hat{R}_{S_{t+1}}$. Thus, under adaptive sampling, empirical risk is not a monotonic function of $m_t$.
\end{proof}

\begin{lemma}[Vanishing discrepancy under non-degenerate acquisition]
\label{lem:discrepancy_vanishes}
Assume: (i) $\mathcal{F}$ is a class of bounded measurable functions; (ii) the acquisition process satisfies the non-degeneracy condition (Sec. 3.1); and (iii) $m_t \to \infty$. Then $P_Q^{(t)} \Rightarrow P_X$ and $d_{\mathcal{F}}(P_X, P_Q^{(t)}) \to 0$.
\end{lemma}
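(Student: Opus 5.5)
The plan is to split the claim into a qualitative step (every region of positive $P_X$-mass keeps receiving query mass) and a quantitative step (that mass approaches $P_X(A)$). I will then pass from weak convergence to the IPM using condition (i) on $\mathcal{F}$.

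\textbf{Step 1: reduction to sets.} Take $\mathcal{X}$ to be a separable metric space, as for the fixed embedding $\phi$ of Assumption~\ref{aspt:representation}. Fix a countable convergence-determining family $\mathcal{A}$: finite intersections of open balls with rational centres and radii that are $P_X$-continuity sets. By the portmanteau theorem, $P_Q^{(t)} \Rightarrow P_X$ follows once $P_Q^{(t)}(A) \to P_X(A)$ for every $A \in \mathcal{A}$. Condition~(iii) $m_t\to\infty$ and monotone growth (Assumption~\ref{aspt:growth}) let me index by $t$ along a single increasing sequence, so I never need to pass to subsequences.

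\textbf{Step 2: the qualitative part from non-degeneracy.} Take $A\in\mathcal{A}$ with $P_X(A)>0$. Assumption~\ref{aspt:acquisition} gives a $t_A$ with $P_Q^{(t)}(A)>0$ for $t\ge t_A$. Since $\mathcal{A}$ is countable and each $t_A$ is finite, every cell of any finite partition drawn from $\mathcal{A}$ is charged from some common time onward. Hence no region of positive $P_X$-mass is starved in the limit. This is exactly what the non-degeneracy condition supplies, and I would record it as a separate sub-claim.

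\textbf{Step 3: from charged to correctly weighted (main obstacle).} I have to upgrade ``$P_Q^{(t)}(A)>0$ eventually'' to ``$P_Q^{(t)}(A)\to P_X(A)$''. My first attempt is to read $P_Q^{(t)}$ as the cumulative marginal of $S_t$. Under that reading, the share of $S_t$ contributed by the batches $A_s$ queried before any fixed time is a fixed number of points divided by $m_t$, which tends to $0$. So early bias washes out and only the asymptotic acquisition behaviour matters.

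The real difficulty is whether that asymptotic behaviour must match $P_X$. Eventual positivity alone only forces the limit points of $P_Q^{(t)}$ to have full support. It does not force them to equal $P_X$: a query rule putting mass $\tfrac12(1+\epsilon)$ on a fixed half-space satisfies Assumption~\ref{aspt:acquisition} but has the wrong limit. So in Step 3 I would try to show that the paper's non-degeneracy is meant in the sense that any limit point $\mu$ of $(P_Q^{(t)})$ is $P_X$-compatible. Concretely, I would aim to establish $\mu\ll P_X$ and $P_X\ll\mu$ together with $d\mu/dP_X=1$, by testing against $\mathcal{A}$.

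If that cannot be extracted from the condition as stated, I would present the half-space example as showing that the lemma needs this reading. I would then prove the conclusion under it, by identifying every limit point with $P_X$ and using tightness, which is inherited from the fixed pool.

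\textbf{Step 4: IPM.} Given $P_Q^{(t)}\Rightarrow P_X$, I would bound $d_{\mathcal{F}}(P_X,P_Q^{(t)})=\sup_{f\in\mathcal{F}}|\mathbb{E}_{P_X}f-\mathbb{E}_{P_Q^{(t)}}f|$. For this I take $\mathcal{F}$ bounded and uniformly equicontinuous, which is the ``bounded complexity'' class of \cref{thm:biased_bound}. For such $\mathcal{F}$, the Ranga Rao / Dudley characterization gives uniform convergence over $\mathcal{F}$ under weak convergence, and so $d_{\mathcal{F}}\to0$. For a bounded-Lipschitz $\mathcal{F}$ this is immediate, since $d_{BL}$ metrizes weak convergence.
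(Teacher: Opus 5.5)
\paragraph{Verdict.} The gap you flag in Step~3 is real, and it is a defect of the statement rather than of your plan. Under (i)--(iii) the lemma is false, so nothing can be ``extracted'' from Assumption~3.

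\paragraph{Counterexample.} Your half-space example works once made precise. Take a half-space $H$ with $P_X(H)=\tfrac12$, fix $\epsilon\in(0,1)$, let $dQ/dP_X=(1+\epsilon)\mathbf{1}_H+(1-\epsilon)\mathbf{1}_{H^c}$, and draw every query from $Q$. Then $Q(A)\ge(1-\epsilon)P_X(A)>0$ whenever $P_X(A)>0$, so non-degeneracy holds with $t_A=1$, and $m_t\to\infty$ trivially. Yet $P_Q^{(t)}=Q\neq P_X$ for all $t$, under either the per-step or the cumulative reading. For the bounded measurable class $\mathcal{F}=\{\mathbf{1}_H\}$ this gives $d_{\mathcal{F}}(P_X,P_Q^{(t)})=\epsilon/2$ for every $t$.

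\paragraph{The paper's proof.} It does not get past this point either. It appeals to ``standard Glivenko--Cantelli arguments'' for the empirical measure of $S_t$, but this fails on three counts:
\begin{itemize}
\item such a theorem gives convergence to the law the points are drawn from ($Q$ here), not to $P_X$;
\item it presupposes i.i.d.-type sampling, which adaptive acquisition does not provide;
\item it fails for general bounded measurable classes.
\end{itemize}
The non-degeneracy condition is restated in the proof's first sentence and never used afterwards. The missing ingredient has to be added as an explicit hypothesis.

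\paragraph{Fixes within your plan.}

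\emph{Step~2.} It overstates what non-degeneracy gives. The condition has no uniform lower bound, so it is satisfied by $P_Q^{(t)}=(1-\tfrac{1}{t})\delta_{x_0}+\tfrac{1}{t}P_X$, which converges weakly to $\delta_{x_0}$. Hence limit points need not have full support, and regions can be starved in the limit. Conversely, finitely supported $P_Q^{(t)}$ can converge weakly to a non-atomic $P_X$ while violating the condition. So non-degeneracy is neither necessary nor sufficient for weak convergence, and Step~2 does no work.

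\emph{Step~3.} Your ``stronger reading'' (every limit point $\mu$ has $d\mu/dP_X=1$) just says $\mu=P_X$. The tightness fallback therefore assumes the conclusion. A non-circular replacement lets your wash-out observation carry the proof:
\begin{itemize}
\item write $P_Q^{(t)}=\sum_{s<t}(|A_s|/m_t)\,q_s$, where $q_s$ is the law of the step-$s$ queries, and assume $q_s\Rightarrow P_X$;
\item for bounded continuous $f$, split the sum at $s_0$; the error is at most $2\|f\|_\infty m_{s_0}/m_t+\sup_{s\ge s_0}|\mathbb{E}_{q_s}f-\mathbb{E}_{P_X}f|$;
\item this gives $P_Q^{(t)}\Rightarrow P_X$ directly, without Step~1.
\end{itemize}

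\emph{Step~4.} Your change of (i) is necessary, not cosmetic. Take $P_X=\mathrm{Unif}[0,1]$ and $P_Q^{(t)}=\tfrac12P_X+\tfrac12\,\mathrm{Unif}\{1/t,2/t,\dots,1\}$. Non-degeneracy and $P_Q^{(t)}\Rightarrow P_X$ both hold. Yet the single bounded measurable function in $\mathcal{F}=\{\mathbf{1}_{\mathbb{Q}}\}$ gives $d_{\mathcal{F}}=\tfrac12$ for all $t$. With $\mathcal{F}$ uniformly bounded and equicontinuous on a separable metric space, Ranga Rao's theorem then gives $d_{\mathcal{F}}\to0$, as you intend.
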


\begin{proof}
By non-degeneracy, every measurable set $A$ with $P_X(A)>0$ receives nonzero mass under $P_Q^{(t)}$ after some finite $t_A$. As $m_t \to \infty$ and $S_t$ grows monotonically, the empirical measure of $S_t$ converges uniformly over $\mathcal{F}$ by standard Glivenko--Cantelli arguments. Hence, $\sup_{f\in\mathcal{F}} | \mathbb{E}_{P_X}[f] - \mathbb{E}_{P_Q^{(t)}}[f] | \to 0$, which establish the convergence of $d_{\mathcal{F}}(P_X, P_Q^{(t)})$.
\end{proof}

\begin{lemma}[Non-monotonic empirical Rademacher complexity]
\label{lem:rademacher_nonmonotone}
Let $S_t \subset S_{t+1}$. There exist acquisition batches $A_t$ such that $\mathrm{Rad}(\mathcal{H} \circ S_{t+1}) > \mathrm{Rad}(\mathcal{H} \circ S_t)$.
\end{lemma}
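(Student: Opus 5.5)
The statement is existential, so I will prove it with an explicit construction. I take the empirical Rademacher complexity in its normalized form,
\[
\mathrm{Rad}(\mathcal{H}\circ S)=\mathbb{E}_\sigma\Big[\sup_{h\in\mathcal{H}}\frac{1}{m}\sum_{i=1}^m \sigma_i h(x_i)\Big].
\]
The key observation is that this quantity averages over the sample. Adding points can therefore raise it, if the new points are "rich" for $\mathcal{H}$ while the old ones are degenerate. It can also lower it, if the new points duplicate old ones. Only the increase is needed here.

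\textbf{Construction.} Take $\mathcal{H}=\{x\mapsto \mathrm{sign}(\langle w,x\rangle): \|w\|\le 1\}$, or more simply any class with values in $\{-1,+1\}$ that shatters some two-point set $\{a,b\}$ but takes a constant value $c$ at some point $x_0$. A threshold class on $\mathbb{R}$ together with a point far to one side works.

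Let $S_t$ consist of $m$ copies of $x_0$, or points on which every $h$ agrees. Then
\[
\mathrm{Rad}(\mathcal{H}\circ S_t)=\mathbb{E}_\sigma\Big[\frac{c}{m}\sum_i\sigma_i\Big]=0.
\]
Now let $A_t=\{a,b\}$, with $a,b$ shattered by $\mathcal{H}$. For each sign pattern $(\sigma_a,\sigma_b)$ some $h$ matches it, so the supremum satisfies
\[
\sup_{h}\frac{1}{m+2}\Big(\sigma_a h(a)+\sigma_b h(b)+c\sum_{i\le m}\sigma_i\Big)=\frac{2+c\sum_{i\le m}\sigma_i}{m+2}.
\]
Taking expectations gives $2/(m+2)>0$. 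Hence $\mathrm{Rad}(\mathcal{H}\circ S_{t+1})>\mathrm{Rad}(\mathcal{H}\circ S_t)$, with $S_t\subset S_{t+1}$ as Assumption~\ref{aspt:growth} requires.

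I will then check that the example is consistent with the paper's setting. The class has finite capacity, so Assumption~\ref{aspt:capacity} holds. Points concentrated in a low-variability region and followed by a batch expanding the support is exactly the "support expansion versus local density" mechanism listed in \cref{subsec:bound}. To make the scenario realistic, I can also cite the standard bound that for $\mathcal{H}$ of finite VC dimension $d$, $\mathrm{Rad}$ is at most $O(\sqrt{d/m})$. That bound decays overall, yet the construction shows it is not monotone step to step, which is the sense needed for \cref{thm:dominance_shift}.

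\textbf{Transfer to the loss class.} The main obstacle is the passage from $\mathcal{H}$ to $\ell\circ\mathcal{H}$, the object actually appearing in \cref{eq:main_bound}. Talagrand's contraction only gives an upper bound $L\cdot\mathrm{Rad}(\mathcal{H}\circ S)$, not a strict increase. I would handle this directly, using the 0-1 loss (or a Lipschitz surrogate) with labels chosen so that $\ell(h(a),y_a)$ and $\ell(h(b),y_b)$ still realize all four $\{0,1\}$ patterns. The same computation then gives a strictly positive increment of order $1/(m+2)$ for $\ell\circ\mathcal{H}$.
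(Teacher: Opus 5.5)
Your construction is correct and uses the same mechanism the paper has in mind: a degenerate $S_t$ on which $\mathcal{H}$ has almost no freedom, followed by a batch along which it has a lot. You instantiate it differently, though.

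The paper argues informally and then illustrates with norm-bounded linear predictors through the closed form $\mathrm{Rad}(\mathcal{H}\circ S)=\frac{B}{m}\mathbb{E}_\sigma\|\sum_i\sigma_i x_i\|_2$. There, $m$ copies of $e_1/\sqrt{m}$ give $\Theta(B/m)$, and appending the orthonormal vectors $e_2,\dots,e_{m+1}$ gives $\Theta(B/\sqrt{m})$. You instead use a $\{-1,+1\}$-valued class that is constant on $S_t$ and shatters the batch, so the complexity goes from exactly $0$ to exactly $2/(m+2)$.

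Your route is more elementary and exact. The paper compares only orders of magnitude, and its displayed bound $\mathbb{E}\|\sum_i\sigma_i x_i\|_2\ge\sqrt{1+m}$ points the wrong way: Jensen gives $\le$. What actually carries its argument is the deterministic bound $\|\sum_i\sigma_i x_i\|_2\ge\sqrt{m}$.

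What the paper's example buys is an order-of-magnitude jump for a real-valued, norm-constrained class. That fits its orthogonal-expansion narrative and its weight-norm complexity proxy. Yours gives only an $O(1/m)$ increment, which is all the lemma asks.

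Your transfer to $\ell\circ\mathcal{H}$ goes beyond the paper, which never addresses it, and it is simpler than you expect. For $\pm1$-valued $h$ and labels $y\in\{-1,+1\}$, the 0-1 loss equals $\frac{1}{2}(1-y\,h(x))$. Hence $\mathrm{Rad}(\ell\circ\mathcal{H}\circ S)=\frac{1}{2}\mathrm{Rad}(\mathcal{H}\circ S)$ for any labels, with no contraction argument needed.

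One aside should be corrected: a threshold class on $\mathbb{R}$ does not satisfy your hypotheses. One-sided thresholds shatter no two-point set, and allowing both orientations leaves no point with a constant label. Several classes do work:
\begin{itemize}
\item your halfspace class with $x_0=0$, fixing the value of $\mathrm{sign}(0)$;
\item indicators of intervals inside $[0,1]$, with $S_t$ placed outside $[0,1]$, which also makes $S_t$ a genuine set rather than a multiset;
\item one-sided thresholds with $\theta\in[0,1]$ and a one-point batch, which gives $1/(m+1)$.
\end{itemize}
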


\begin{proof}
Recall $\mathrm{Rad}(\mathcal{H}\circ S) = \mathbb{E}_\sigma [ \sup_{h\in\mathcal{H}} \frac{1}{m} \sum_{i=1}^m \sigma_i h(x_i) ]$. During the \textit{Transition Phase}, AL often shifts from exploitation (sampling near known support) to exploration (sampling from novel geometric regions). If $A_t$ includes samples from directions where $\mathcal{H}$ exhibits substantially larger variation, specifically through orthogonal expansion of the feature matrix rank, the supremum in the numerator (the ability to fit noise) increases faster than the $1/m$ normalization factor. Consequently, empirical complexity can temporarily increase as the geometric support of the labeled set expands into previously unsampled regions of the manifold.
\end{proof}

\subsection{Illustrative Linear Example: Complexity Expansion}

Consider linear predictors $\mathcal{H} = \{ x \mapsto w^\top x : \|w\|_2 \le B \}$ with $\|x\|_2 \le R$. Then the empirical Rademacher complexity is $\mathrm{Rad}(\mathcal{H}\circ S) = \frac{B}{m} \mathbb{E}_{\sigma} \| \sum_{i=1}^m \sigma_i x_i \|_2$.

\paragraph{Initial dataset.}

Let $S_t = \{x_i = \tfrac{1}{\sqrt{m}} e_1\}_{i=1}^m$. Then $\mathrm{Rad}_t = \Theta(\frac{B}{m})$.

\paragraph{Expanded dataset.}

Let $\{e_1, \dots, e_{m+1}\}$ be an \textbf{orthonormal basis} for the feature space. Let $S_t = \{x_i = \tfrac{1}{\sqrt{m}} e_1\}_{i=1}^m$ be the initial set, and let the acquisition batch be $A_t = \{e_2, \dots, e_{m+1}\}$ so that $S_{t+1} = S_t \cup A_t$ and $|S_{t+1}| = 2m$. The expectation of the norm satisfies $\mathbb{E} \| \sum_{i=1}^{2m} \sigma_i x_i \|_2 \ge \sqrt{1 + m}$. Therefore:
\[
\mathrm{Rad}_{t+1} = \frac{B}{2m} \sqrt{1+m} = \Theta\left(\frac{B}{\sqrt{m}}\right).
\]
Since $\frac{B}{\sqrt{m}} > \frac{B}{m}$ for $m > 1$, orthogonal exploration increases empirical complexity even as the total sample size doubles.

\subsection{Full Proof of Theorem 2}
\label{supp:formal_proof}

To prove that the identity of the dominant term must change, we consider the four components of the generalization bound from Eq. (1) as a time-varying vector $\mathbf{B}(t) = [B_1(t), B_2(t), B_3(t), B_4(t)]^\top \in \mathbb{R}_+^4$, where:
\begin{itemize}
    \item $B_1(t) = \hat{R}_{S_t}(h_t)$ \hfill (Empirical Risk)
    \item $B_2(t) = d_{\mathcal{F}}(P_X, P_Q^{(t)})$ \hfill (Distributional Discrepancy)
    \item $B_3(t) = 2\,\mathrm{Rad}(\ell \circ \mathcal{H} \circ S_t)$ \hfill (Model Complexity)
    \item $B_4(t) = \alpha \sqrt{2 \log(4/\delta)/m_t}$ \hfill (Statistical Confidence)
\end{itemize}
The active bottleneck at step $t$ is defined by $k^*(t) = \arg\max_{k \in \{1,2,3,4\}} B_k(t)$.

\paragraph{Step 1: Asymptotic decay of sampling and statistical terms.}

Under Assumption~4, the budget $m_t \to \infty$, hence $\lim_{t\to\infty} B_4(t) = 0$. By Lemma~\ref{lem:discrepancy_vanishes} and the non-degeneracy of the acquisition process (Assumption~3), we have $P_Q^{(t)} \Rightarrow P_X$. Since the Integral Probability Metric $d_{\mathcal{F}}$ metrizes weak convergence for bounded $\mathcal{F}$, it follows that $\lim_{t\to\infty} B_2(t) = 0$.

\paragraph{Step 2: Persistence of model-internal terms.}

By Lemma~\ref{lem:empirical_nonmonotone}, $\hat{R}_{S_{t}}$ is not necessarily monotonic. In the agnostic setting (non-zero Bayes error $\epsilon^* > 0$), the empirical risk remains bounded away from zero, $\liminf_{t\to\infty} B_1(t) \ge \epsilon^*$. Similarly, by Lemma~\ref{lem:rademacher_nonmonotone}, $B_3(t)$ can increase under geometric expansion and does not vanish as long as the hypothesis class $\mathcal{H}$ maintains non-zero capacity over the sampled support.

\paragraph{Step 3: Contradiction of fixed dominance.}

Assume for contradiction that the identity of the dominant term remains fixed as $k^*(t) = 2$ (Discrepancy) for all $t > T$. This implies $B_2(t) \ge B_k(t)$ for all $k \in \{1,3,4\}$. However, we have $\lim_{t\to\infty} B_2(t) = 0$, while $\liminf_{t\to\infty} B_1(t) \ge \epsilon^* > 0$. There must exist a finite time $t' > T$ such that $B_2(t') < \epsilon^* \le B_1(t')$, which implies $k^*(t') \neq 2$. This contradicts our assumption of fixed dominance.

\paragraph{Step 4: Existence of a structural shift.}

Since terms $B_2$ and $B_4$ vanish while $B_1$ and $B_3$ persist, any trajectory initialized in a regime dominated by data-driven bias ($B_2$) or statistical uncertainty ($B_4$) must eventually undergo a transition where $k^*(t)$ shifts to a model-driven component ($B_1$ or $B_3$). Thus, the active bottleneck $k^*(t)$ is non-constant, completing the proof. \hfill$\square$

\section{Proxy Justification}
\label{supp:proxy_justification}

In this section, we provide the formal rationale for mapping the abstract terms of the Generalization Bound (Eq.(1)) to the measurable proxies introduced in Sec. 3.6.

\subsection{Empirical Risk (ER) and Marginal Informativeness}

While the theoretical bound uses the absolute empirical risk $\hat{R}_S(h)$, in the context of an active learning trajectory, the \textit{change} in risk identifies the current bottleneck. Our use of marginal risk reduction ($ER = \hat{R}_{\mathrm{pre}} - \hat{R}_{\mathrm{post}}$) is inspired by the \textbf{Expected Error Reduction (EER)} framework. This proxy captures the "gradient" of refinement; a high $ER$ indicates that the acquisition step successfully queried samples that forced a significant update to the hypothesis, signaling that the model-driven refinement mechanism is active.

\subsection{Distributional Discrepancy (LD, FD)}

\paragraph{Label-space Discrepancy (LD).}

Using the Total Variation (TV) distance between class frequencies is the standard operationalization of representativeness in the presence of label shift. It directly approximates the $d_{\mathcal{F}}$ term when $\mathcal{F}$ is restricted to the space of class-indicator functions.

\paragraph{Feature-space Discrepancy (FD).}

Measuring the mean nearest-neighbor distance to a reference set is a common surrogate for the \textbf{Maximum Mean Discrepancy (MMD)} or \textbf{Wasserstein distance} in high-dimensional embeddings where direct density estimation is intractable. As noted in the main text, FD is an \textit{entangled} proxy because decreasing the distance to the reference set requires both matching the distribution support (representativeness) and filling the feature space (coverage).

\subsection{Geometric Coverage (GC) and Manifold Exploration}

Our definition of internal nearest-neighbor distance in the labeled set corresponds to the concept of the \textbf{$\delta$-cover} or \textbf{filling distance} in manifold learning. This is a direct proxy for the exploration or dispersion terms used in core-set-based active learning. By measuring internal distance, we isolate the geometric dispersion of the sample from its alignment with the target distribution.

\subsection{Complexity (Comp) and Norm-based Capacity}

Since computing the exact Rademacher complexity for deep neural networks is computationally intractable (NP-hard), we rely on \textbf{norm-based capacity control}. Extensive theoretical work has demonstrated that the generalization of overparameterized networks is governed by the magnitude of the weights (e.g., Frobenius or spectral norms) rather than the raw parameter count. Thus, the evolution of the $\ell_2$ norm of the parameters $\theta_t$ serves as a reliable indicator of how the geometry of the acquired data is constraining the hypothesis space.

\section{Proxy–Bound Dominance Consistency}
\label{sec:proxies}

To relate the observable proxy dominance to the structural dominance in the theoretical bound, we rely on the assumption that our proxies are order-preserving.

\begin{assumption}[Order-preserving alignment]
\label{ass:alignment}
For each component \\ $k \in \{1,2,3,4\}$, there exists a strictly monotone function $\psi_k$ such that $B_k(t) = \psi_k(Z_k(t)) + \varepsilon_{k,t}$, where $\varepsilon_{k,t}$ is a bounded perturbation satisfying $|\varepsilon_{k,t}| \le \eta_k$.
\end{assumption}

This assumption implies that as long as the signal-to-noise ratio of our proxies is sufficiently high, the observed phase transitions in the proxies correspond to transitions in the true generalization bottleneck.

\begin{proposition}[Proxy dominance implies bound dominance]
\label{prop:proxy_consistency}
Under Assumption~\ref{ass:alignment}, if for some $t$ and indices $i \neq j$, the proxy magnitudes are separated such that $|Z_i(t) - Z_j(t)| > \Delta$, with $\Delta > \frac{2(\eta_i + \eta_j)}{\min\{\psi_i', \psi_j'\}}$, then the dominance ordering between $Z_i, Z_j$ matches the ordering of the true bound components $B_i, B_j$.
\end{proposition}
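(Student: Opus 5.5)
The plan is a direct perturbation argument. It combines Assumption~\ref{ass:alignment} with a mean-value estimate on the link functions $\psi_i,\psi_j$, so that a large enough gap between proxies outweighs the bounded noise terms.

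\textbf{Fixing conventions.} First I fix the orientation and regularity implicit in the statement. Because the proposition uses $\min\{\psi_i',\psi_j'\}$ as a positive quantity, I take each $\psi_k$ to be differentiable and strictly increasing, with derivative bounded below by a constant $\psi_k'>0$ on the relevant range of proxy values. Proxies that are decreasing in their bound component can be sign-flipped beforehand. Without loss of generality let $Z_i(t) > Z_j(t) + \Delta$. The goal is then $B_i(t) > B_j(t)$.

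\textbf{Main chain of inequalities.} Write
\begin{equation*}
B_i(t)-B_j(t) = \bigl[\psi_i(Z_i)-\psi_j(Z_j)\bigr] + \varepsilon_{i,t}-\varepsilon_{j,t},
\end{equation*}
so that $B_i-B_j \ge \psi_i(Z_i)-\psi_j(Z_j) - (\eta_i+\eta_j)$. If $\psi_i$ and $\psi_j$ were the same function $\psi$, the mean value theorem would give $\psi(Z_i)-\psi(Z_j) \ge \min\{\psi_i',\psi_j'\}\,(Z_i-Z_j)$. By the hypothesis on $\Delta$, this is $> 2(\eta_i+\eta_j)$, and therefore $B_i-B_j > \eta_i+\eta_j > 0$.

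\textbf{Main obstacle: different link functions.} The hard step is that the links $\psi_i \neq \psi_j$ are in general different functions, so ordering the $Z$'s alone says nothing about ordering the $B$'s. Some common calibration is needed. I would try to extract it from the standardization in \cref{subsec:phases}: after each component is standardized across budgets, I would read Assumption~\ref{ass:alignment} as placing all $\psi_k$ on a common scale up to the same tolerance. Concretely, this means $|\psi_i(z)-\psi_j(z)| \le \eta_i+\eta_j$ for all $z$ in the range. Under that reading the bound becomes
\begin{equation*}
\psi_i(Z_i)-\psi_j(Z_j) \ge \psi_j(Z_i)-\psi_j(Z_j) - (\eta_i+\eta_j) \ge \psi_j'(Z_i-Z_j) - (\eta_i+\eta_j).
\end{equation*}
Using the hypothesis on $\Delta$, the right-hand side exceeds $(\eta_i+\eta_j)$. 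Subtracting the noise then leaves $B_i-B_j>0$. This explains the factor $2$ in the threshold on $\Delta$: one copy of $\eta_i+\eta_j$ absorbs the additive perturbations, and the other absorbs the calibration mismatch between the two links.

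\textbf{Closing the argument and where it may fail.} Swapping the roles of $i$ and $j$ handles the case $Z_j > Z_i+\Delta$, so the ordering of $Z_i,Z_j$ matches that of $B_i,B_j$. If the common-scale reading proves too strong to justify, the fallback is to state the proposition for a shared link $\psi$ after standardization, where only the first chain of inequalities is needed.
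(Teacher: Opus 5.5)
Your mechanism is the paper's. The paper sets $c=\min\{\psi_i',\psi_j'\}>0$, invokes the mean value theorem and the triangle inequality to get $|B_i(t)-B_j(t)|\ge c\,|Z_i(t)-Z_j(t)|-(\eta_i+\eta_j)$, and concludes. The step you flag as the main obstacle is exactly the step the paper passes over: the mean value theorem applies to one function, so that inequality holds only if $\psi_i$ and $\psi_j$ are treated as a single link. Your other conventions (increasing links, $\psi_k'$ read as a positive lower bound on the derivative) are also only implicit in the paper.

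You are right that the link issue is not a technicality. Take $\eta_i=\eta_j=0$, $\psi_j(z)=z$, $\psi_i(z)=z+10$, $Z_i(t)=0$, $Z_j(t)=1$, $\Delta=1/2$. Every hypothesis holds, yet $B_i(t)=10>1=B_j(t)$ while $Z_i(t)<Z_j(t)$. So the proposition cannot be derived from Assumption~\ref{ass:alignment} as written, by your route or any other. The paper's proof in effect adopts your fallback of a shared link.

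The step in your proposal that does not hold is the claim that the standardization of Section~\ref{subsec:phases} lets you read Assumption~\ref{ass:alignment} as $|\psi_i(z)-\psi_j(z)|\le\eta_i+\eta_j$. Nothing in the assumption ties $\psi_i$ to $\psi_j$. Per-component rescalings cannot create such a tie: standardization is a separate affine map on each $B_k$ (and, in the appendix, a separate min-max map on each $Z_k$). Two links that are not affine images of one another stay different afterwards, and nothing forces their pointwise gap below $\eta_i+\eta_j$. You should state the calibration bound, or a common link $\psi$, as an explicit additional hypothesis.

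With that change your chain is correct, and it improves on the paper's proof in two ways. First, you argue with the signed difference $B_i(t)-B_j(t)$. A lower bound on $|B_i(t)-B_j(t)|$ alone only shows $B_i(t)\neq B_j(t)$, not that the ordering is preserved. Second, your calibration term gives the factor $2$ in the threshold a genuine role. Under a shared link, $\Delta>(\eta_i+\eta_j)/c$ already suffices.
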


\begin{proof}
Let $c = \min\{\psi_i',\psi_j'\} > 0$. By the Mean Value Theorem and triangle inequality:
\begin{equation}
|B_i(t) - B_j(t)| \ge c |Z_i(t) - Z_j(t)| - (\eta_i + \eta_j).
\end{equation}
If $|Z_i(t) - Z_j(t)| > 2(\eta_i + \eta_j)/c$, the right-hand side is strictly positive, preserving the sign of the difference and establishing dominance consistency.
\end{proof}

\section{Implementation Details}

\subsection{Data Specifications and Partitioning}
\label{supp:data}

\paragraph{Datasets Details.} 
\begin{itemize}
    \item \textbf{CIFAR-10/100}: Standard natural image benchmarks consisting of 50k training and 10k test images.
    \item \textbf{ISIC 2019}: Contains 25,331 clinical images of skin lesions. Due to the extreme class imbalance, we employ stratified sampling for the initial pool to ensure minority classes are represented early in the learning trajectory.
    \item \textbf{ImageNet Subsets}: We utilize subsets of 50, 100, and 200 classes sampled from the standard \textbf{ImageNet-1K} training set, following established benchmarking protocols to ensure a high-dimensional feature manifold for evaluating geometric coverage.
\end{itemize}

\paragraph{Preprocessing and Augmentation} 

During training, we apply: (i) Random resized cropping to the target resolution ($32\times32$ for CIFAR, $224\times224$ for ISIC/ImageNet); (ii) Random horizontal flipping ($p=0.5$); and (iii) Standard normalization using dataset-specific channel means and standard deviations.

For CIFAR experiments, we employ the standard CIFAR-style ResNet architecture, which replaces the original $7\times7$ stride-2 convolution and max-pooling layers with a $3\times3$ stride-1 convolution and removes early downsampling to preserve spatial resolution for $32\times32$ inputs.

\subsection{Self-Supervised Representation Details}
\label{supp:ssl_details}

For SSL-based experiments, we utilize fixed representations to decouple acquisition geometry from label-driven feature drift.

\paragraph{SimCLR Pretraining.} 

We pretrain ResNet-18 for CIFAR/ISIC and ResNet-50 for ImageNet for 500 epochs following. Training hyperparameters follow the SCAN protocol. Representations are extracted from the penultimate (GAP) layer, yielding 512-dimensional (ResNet-18) or 2048-dimensional (ResNet-50) embeddings.

\paragraph{Other SSL methods.}

For ImageNet, we also extract features using official weights from BYOL (ResNet-50 trained for 300 epochs with a batch size of 512), MoCov2+ (ResNet-50 trained for 800 epochs with a batch size of 256), and MoCov3 (ResNet-50 trained for 1000 epochs with a batch size of 4096) to verify that our observed phase transitions are not unique to a single SSL objective.

\paragraph{Frozen Backbones.} 

In the AL loop, encoder weights remain fixed. A single linear layer is appended to the embeddings and trained for 100 epochs at each AL step using the Adam optimizer ($lr=10^{-3}$). We deliberately choose this setting to decouple \textit{representation learning dynamics} from \textit{acquisition dynamics}. In a full fine-tuning regime, the feature manifold shifts at every episode, creating a ``moving target'' that obscures the structural transitions in the generalization bound. By fixing the encoder, we ensure that shifts in the active bottleneck, such as the transition from geometric coverage to uncertainty-based refinement, are a direct consequence of the acquired data geometry.

\subsection{Active Learning Protocol}
\label{supp:al_protocol}

\paragraph{Initialization and Cold-Start.} 

To strictly observe the behavior of acquisition strategies from the onset of the learning curve, we always start from empty labeled pool. For model-dependent strategies (e.g., Uncertainty-based methods) that require a trained hypothesis to compute acquisition scores, the first batch $b_0$ is selected via random sampling. This ensures all methods begin from an identical functional baseline before strategy-specific biases are introduced.

\paragraph{Acquisition Batch Sizes and Horizons.} 

At each iteration $t$, a batch of $b$ samples is selected from the unlabeled pool $\mathcal{U}_t$. We evaluate across diverse scales to ensure the phase transitions are not batch-size dependent. The specific configurations are:
\begin{itemize}
    \item \textbf{CIFAR-10}: Small-batch ($b=10, T=500$) and Large-batch ($b=100, T=450$).
    \item \textbf{CIFAR-100}: Standard-batch ($b=100, T=450$).
    \item \textbf{ISIC 2019}: Clinical-scale ($b=8, T=200$) and High-throughput ($b=100, T=22$).
    \item \textbf{ImageNet Subsets}: $b \in \{50, 100, 200\}$ class-aligned batches for $T=20$ iterations.
\end{itemize}

\paragraph{Repeated Experiments or Robustness Across Runs.}

Each experiment was performed using at least 3 independent random seeds for every AL method mentioned in Sec. 4.1. We report the mean performance and indicate the standard deviation (shaded regions in figures) to ensure the observed phase transitions are statistically robust.

\subsection{Segmented Regression Parameters}
\label{supp:regression_params}

The piecewise linear regression is implemented using the \texttt{pwlf} library. This procedure identifies the specific budget steps where the dominant learning mechanism shifts.

\begin{itemize}
    \item \textbf{Search Space}: For each dataset and proxy trajectory, we evaluate a range of $K \in \{1, \dots, 6\}$ linear segments. This range is sufficient to capture the tripartite phase structure (requiring at least 3 segments) while allowing for additional minor transitions.
    \item \textbf{Model Selection (BIC)}: To determine the optimal number of segments without over-fitting, we utilize the Bayesian Information Criterion:
    \begin{equation}
        \text{BIC} = n \ln(\text{SSE}/n) + k \ln(n),
    \end{equation}
    where $n$ is the number of budget steps, $\text{SSE}$ is the sum of squared errors, and $k$ is the number of parameters (including the location of breakpoints and segment slopes). The model with the minimum BIC is selected as the representative trajectory.
    \item \textbf{Preprocessing and Standardization}: Individual proxies $Z_k(t)$ are min-max standardized to the interval $[0, 1]$ prior to regression. This ensures that the SSE (and consequently the BIC) is comparable across proxies with vastly different scales, such as the magnitude of weight norms ($\sim 10^3$) versus normalized entropy ($\sim [0, 1]$).
\end{itemize}

\section{Additional Experimental Results and Analysis}
\label{sec:additional_results}

In this section, we provide additional characterization of the operational proxies and performance trajectories across various datasets (CIFAR-10, CIFAR-100, ISIC 2019), budget sizes ($b$), and representation regimes (End-to-End vs. SSL). These results demonstrate that while the specific budget counts for phase transitions are sensitive to experimental configurations, the \textit{sequential ordering} of dominance shifts, from distributional discrepancy to geometric coverage, and finally to model-driven refinement, remains an invariant structural property of the active learning process.

\subsection{Proxy Evolution and Complexity Scaling}
\label{subsec:proxies_behavior}

To analyze the structural dynamics of active learning, we visualize the evolution of our proposed operational proxies across the labeling trajectory. We provide two complementary perspectives: \textbf{absolute trajectories}, which reveal the intrinsic evolution of the manifold and model state, and \textbf{differential trajectories} ($\Delta$ vs. \textit{Random}), which isolate the marginal efficacy of specific acquisition strategies by using random sampling as a neutral baseline.

These proxies correspond directly to the decomposition terms introduced in Sec. 3, allowing us to track the shifting bottlenecks of generalization: Distribution Discrepancy (LD, FD, GC), Empirical Risk (ER), Complexity (Comp), and Confidence (Conf.). Across all benchmarks, the results reveal a consistent tripartite phase structure:
\begin{enumerate}
\item \textbf{Phase I (Data-Driven):} Early gains are dominated by reductions in discrepancy proxies (LD and FD). In this regime, performance is governed by the strategy's ability to correct sampling bias and align the labeled pool with the underlying data distribution.
\item \textbf{Phase II (Transition):} As the labeled set grows, GC and Comp. become the dominant bottlenecks. This reflects a shift from global alignment to the expansion of support across the feature manifold.
\item \textbf{Phase III (Model-Driven):} At higher budgets, ER and Conf. dictate the remaining improvements with additional secondary influence of Comp., corresponding to a refinement stage where the model focuses on resolving decision boundary ambiguities. 
\end{enumerate}

\paragraph{CIFAR-100: Complexity and Representation.}

Figures~\ref{fig:cifar100_diff}--\ref{fig:cifar100_simclr_diff} analyze proxy evolution under supervised and SimCLR representations. In the supervised setting, early gains are associated with strong reductions in discrepancy. Strategies like \textit{TypiClust} excel here by rapidly aligning the labeled subset with the data distribution. However, as the budget increases, the LD and FD gap between methods closes, while GC differences expand. Because \textit{TypiClust} continues to sample redundant near-centroid points, it fails to improve geometric coverage, allowing coverage-oriented methods like \textit{Coreset} to take the lead in the mid-budget regime. At late stages (>30k samples), we observe a distinct switch to ER and complexity dominance, where uncertainty-based methods and hybrids become the most efficient.

When utilizing \textbf{SimCLR representations} (Fig.~\ref{fig:cifar100_simclr_abs}), these dynamics are temporally compressed. Accuracy grows significantly faster because the structured feature space begins with more favorable discrepancy and coverage values. The differential analysis (Fig.~\ref{fig:cifar100_simclr_diff}) shows that while representativeness-based strategies still offer early advantages, the duration of their lead is curtailed. Since SSL already organizes the data manifold effectively, even random sampling achieves reasonable coverage, shifting the relative advantage of active learning toward later refinement stages earlier in the budget.

\begin{figure*}[!t]
\centering
\includegraphics[width=\textwidth,height=0.18\textheight]{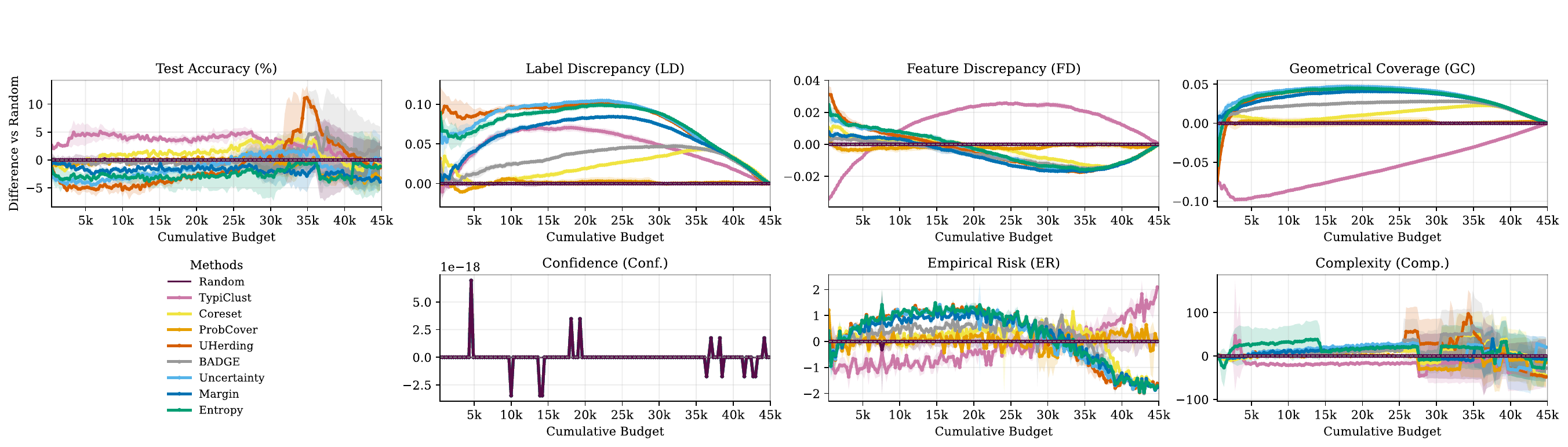}
\caption{Proxy differences relative to Random on CIFAR-100. 
Positive values indicate improvements relative to random sampling. Early gains are primarily associated with reductions in discrepancy proxies (LD and FD), indicating that representativeness dominates low-budget performance. 
As labeling progresses, improvements shift toward geometric coverage and later to empirical risk and complexity, reflecting the transition to later active learning phases. Confidence is unchanged during the labeling process.}
\label{fig:cifar100_diff}
\end{figure*}

\begin{figure*}[!t]
\centering
\includegraphics[width=\textwidth,height=0.18\textheight]{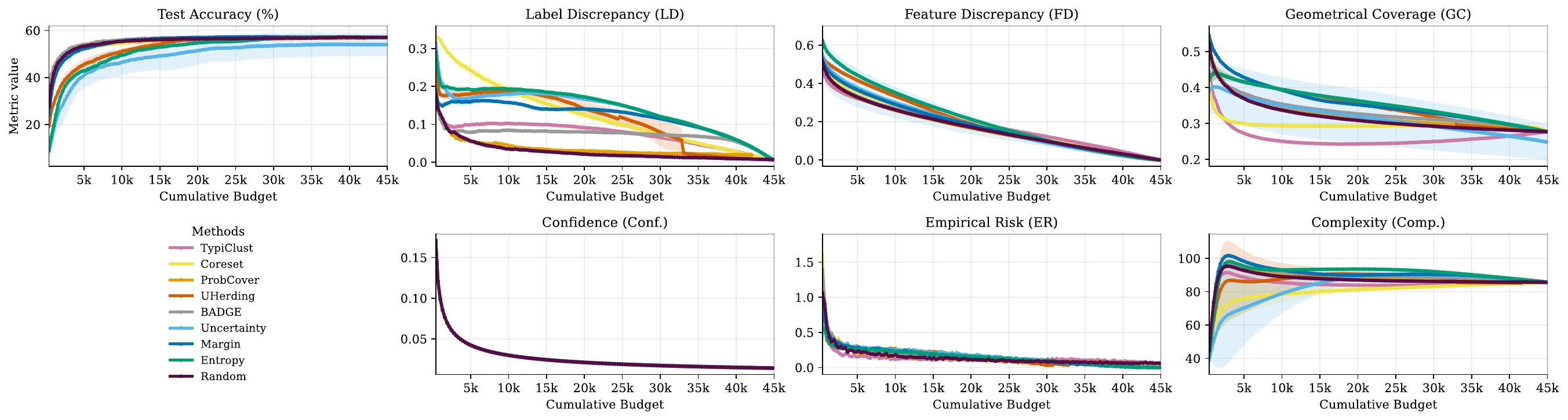}
\caption{Absolute proxy trajectories for CIFAR-100 with SimCLR features. Improved representation quality leads to faster accuracy growth and earlier stabilization of proxies, compressing the early data-driven regime.}
\label{fig:cifar100_simclr_abs}
\end{figure*}

\begin{figure*}[!t]
\centering
\includegraphics[width=\textwidth,height=0.18\textheight]{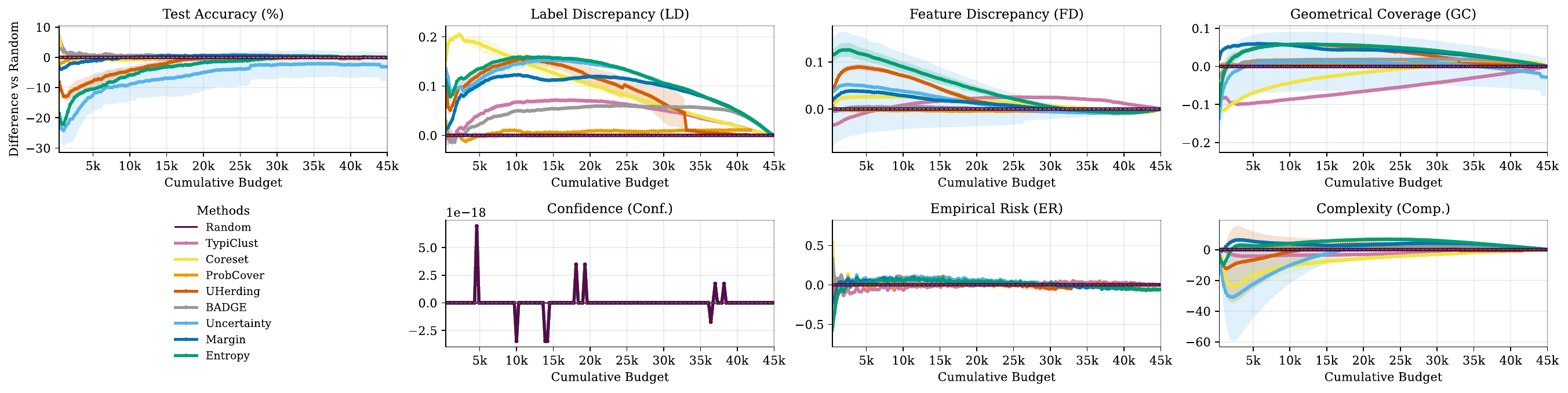}
\caption{Proxy differences relative to Random on CIFAR-100 with SimCLR features. Relative advantages become smaller and shorter-lived compared to the supervised setting, indicating that stronger representations reduce the discrepancy burden and shift improvements toward refinement.}
\label{fig:cifar100_simclr_diff}
\end{figure*}

\begin{figure*}[!t]
\centering
\includegraphics[width=\textwidth,height=0.18\textheight]{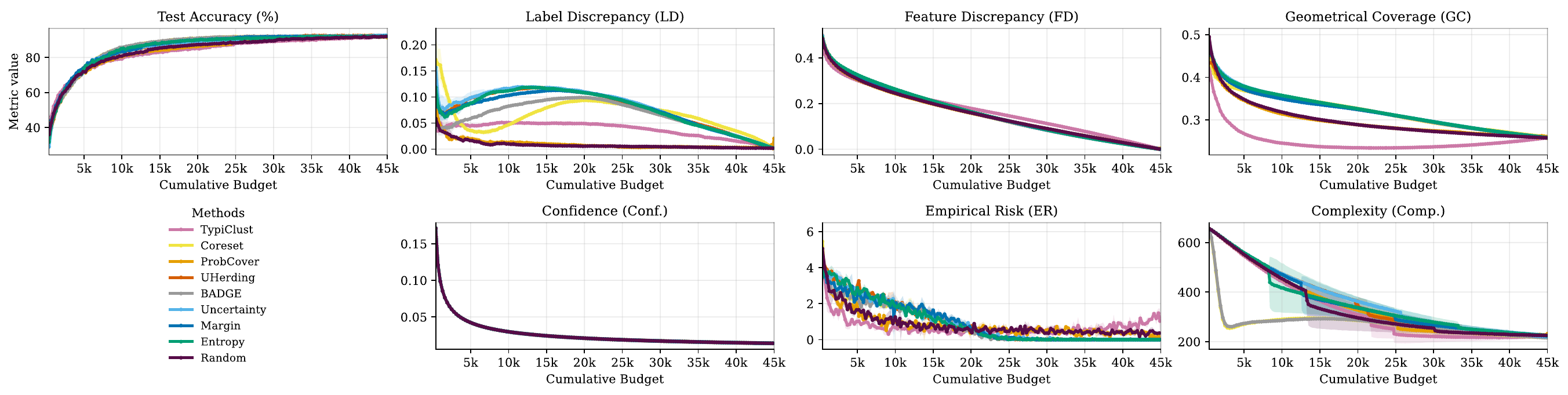}
\caption{Absolute proxy trajectories on CIFAR-10 with acquisition batch size $b=100$. Discrepancy proxies decrease rapidly, and accuracy saturates earlier than on CIFAR-100, reflecting the lower intrinsic complexity of the dataset.}
\label{fig:cifar10_b100_abs}
\end{figure*}

\begin{figure*}[!t]
\centering
\includegraphics[width=\textwidth,height=0.18\textheight]{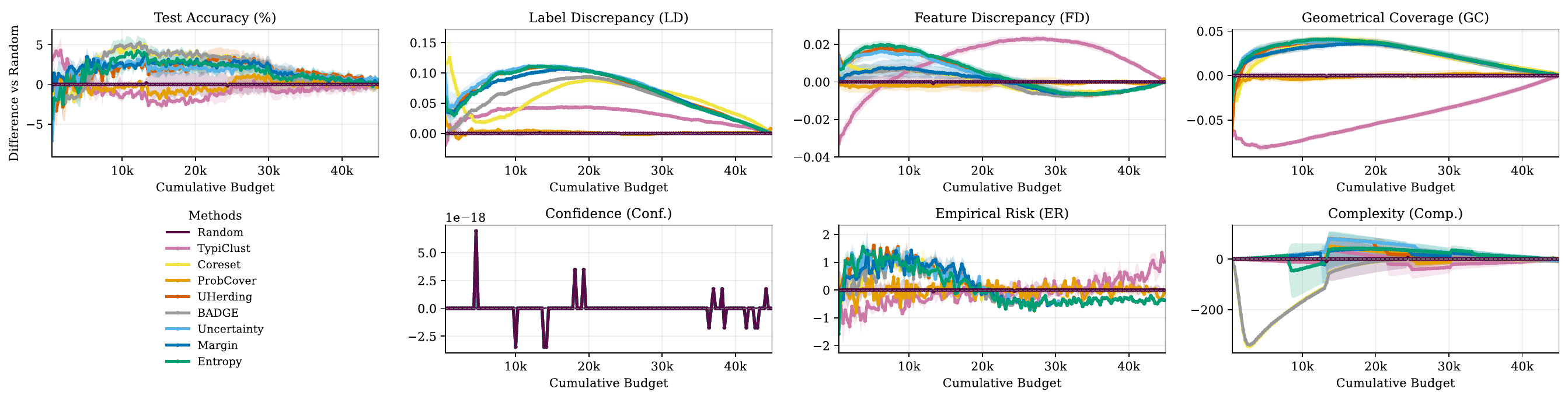}
\caption{Proxy differences relative to Random on CIFAR-10 ($b=100$). Relative method advantages are concentrated in early rounds and diminish quickly as the labeled pool becomes representative of the dataset.}
\label{fig:cifar10_b100_diff}
\end{figure*}

\begin{figure*}[!t]
\centering
\includegraphics[width=\textwidth,height=0.18\textheight]{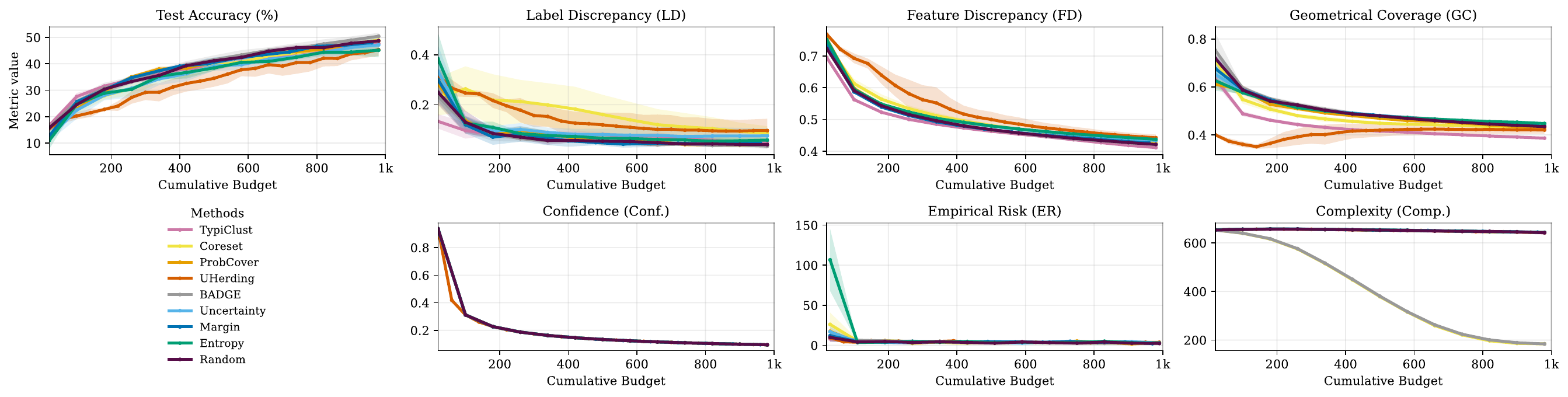}
\caption{Absolute proxy trajectories on CIFAR-10 with smaller acquisition batch size ($b=10$). The same phase structure remains visible, but transitions become smoother due to finer acquisition granularity. Notably, there is a visible correlation between proxy values and performance; for instance, \textbf{UHerding} maintains the highest values of LD, FD, and GC, which directly corresponds to its status as the worst-performing method in this regime.}  
\label{fig:cifar10_b10_abs}
\end{figure*}

\begin{figure*}[!t]
\centering
\includegraphics[width=\textwidth,height=0.18\textheight]{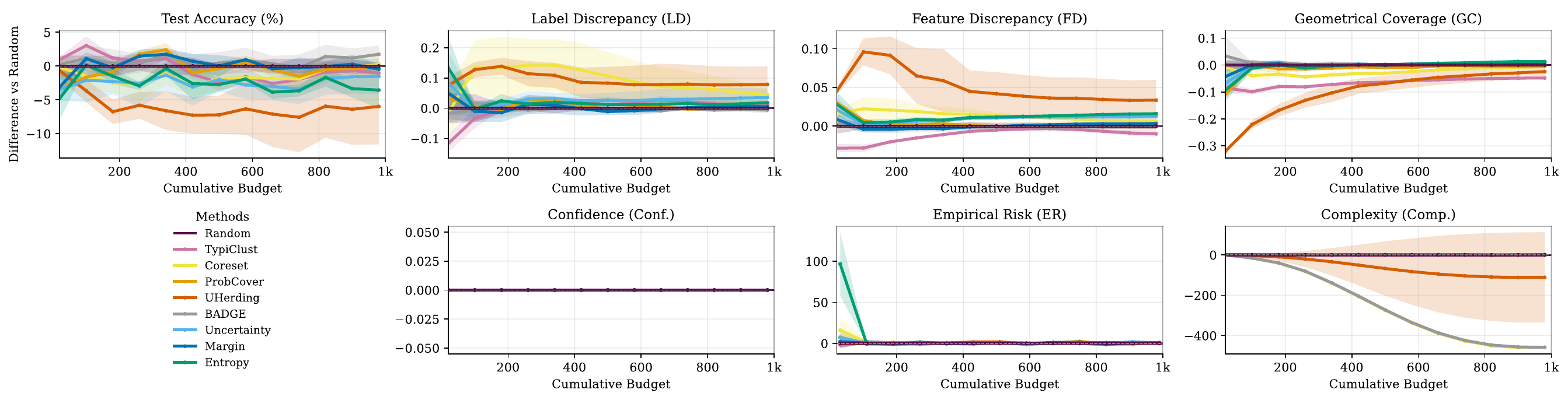}
\caption{Proxy differences relative to Random on CIFAR-10 ($b=10$). Smaller acquisition steps produce smoother deviations from the baseline while preserving the same ordering of active learning mechanisms.}
\label{fig:cifar10_b10_diff}
\end{figure*}

\paragraph{CIFAR-10: Intrinsic Complexity and Granularity.}

Figures~\ref{fig:cifar10_b100_abs}--\ref{fig:cifar10_b10_diff} examine CIFAR-10 under two acquisition scales ($b=100$ and $b=10$). With $b=100$, trajectories evolve over a shorter budget range compared to CIFAR-100; discrepancy proxies decrease rapidly, and accuracy saturates earlier. This reflects the lower intrinsic complexity of CIFAR-10, where fewer classes allow the labeled pool to represent the data distribution with fewer samples.

Reducing the batch size to $b=10$ provides a higher-resolution view of the early shifts. While the transitions appear more gradual, the qualitative ordering of mechanisms—discrepancy reduction, followed by coverage expansion, and finally empirical-risk refinement remains invariant. This confirms that our observed phase structure is not an artifact of specific acquisition steps but a fundamental property of the active learning process.

\begin{figure*}[!t]
\centering
\includegraphics[width=\textwidth,height=0.18\textheight]{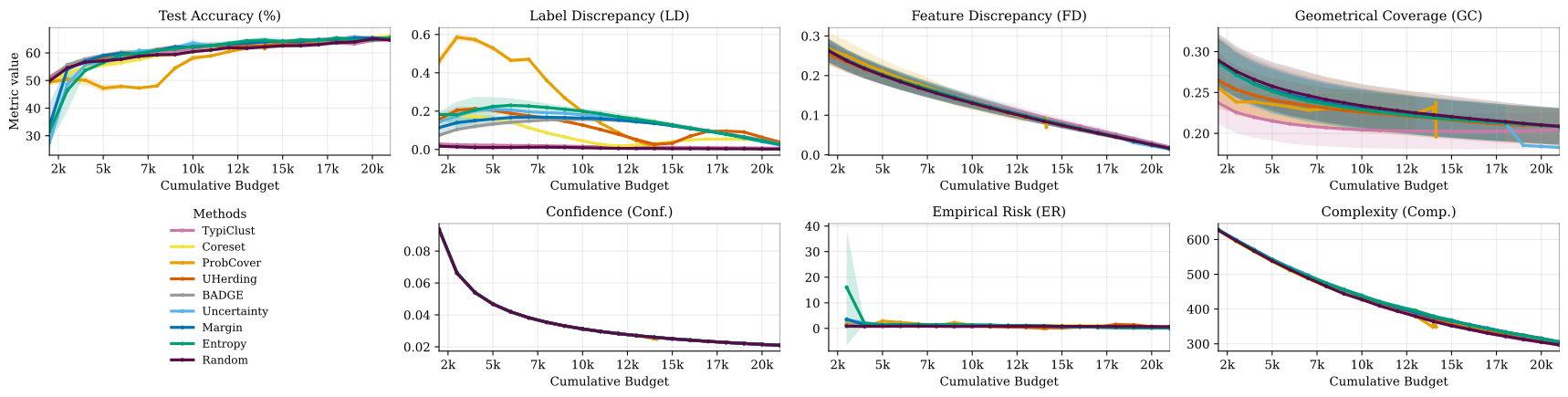}
\caption{Absolute proxy trajectories on ISIC with acquisition batch size $b=1000$. Discrepancy and coverage proxies decrease gradually, reflecting the higher variability and class imbalance of the dataset compared to CIFAR benchmarks.}
\label{fig:isic_abs_b1000}
\end{figure*}

\begin{figure*}[!t]
\centering
\includegraphics[width=\textwidth,height=0.18\textheight]{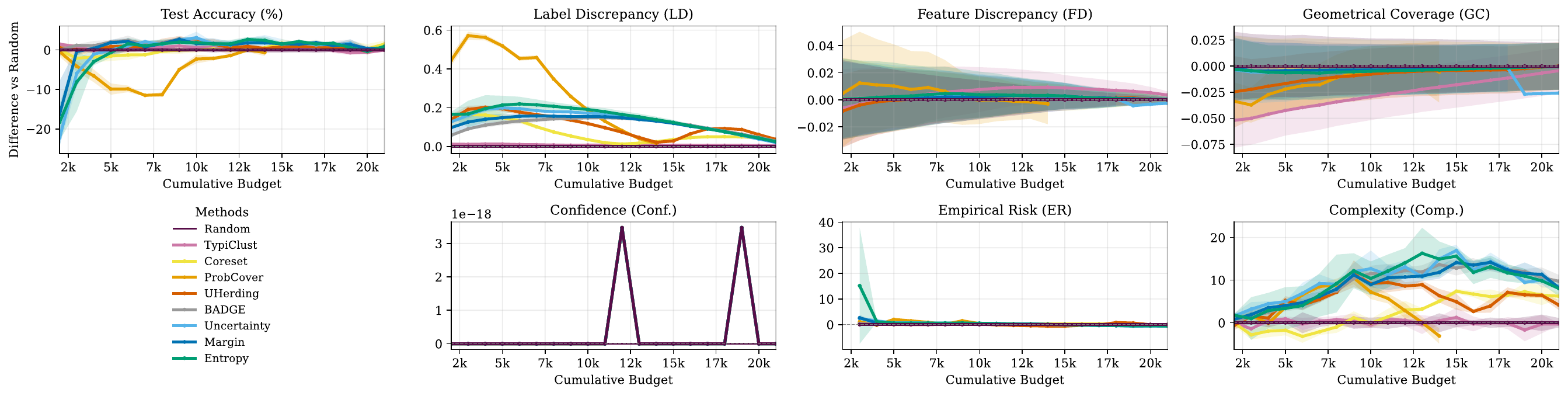}
\caption{Proxy differences relative to Random on ISIC ($b=1000$). Relative method advantages are smaller and more variable than on CIFAR, indicating that acquisition strategies struggle more to consistently improve over random sampling in this heterogeneous medical dataset.}
\label{fig:isic_diff_b1000}
\end{figure*}

\begin{figure*}[!t]
\centering
\includegraphics[width=\textwidth,height=0.18\textheight]{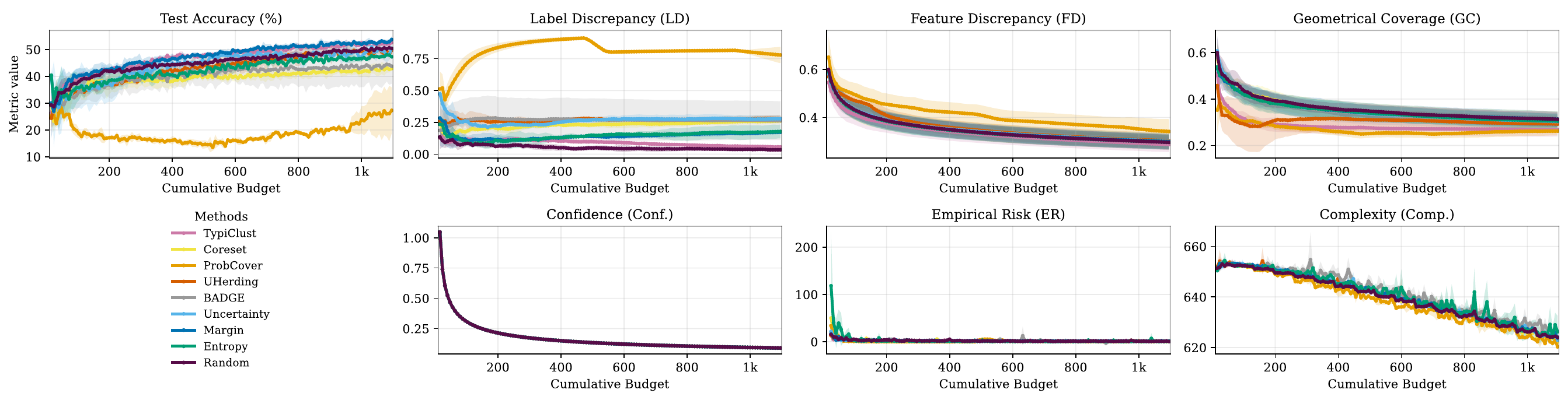}
\caption{Absolute proxy trajectories on ISIC with smaller acquisition batch size ($b=8$). The same phase structure remains visible but evolves more smoothly due to finer acquisition granularity.}
\label{fig:isic_abs_b8}
\end{figure*}

\begin{figure*}[!t]
\centering
\includegraphics[width=\textwidth,height=0.18\textheight]{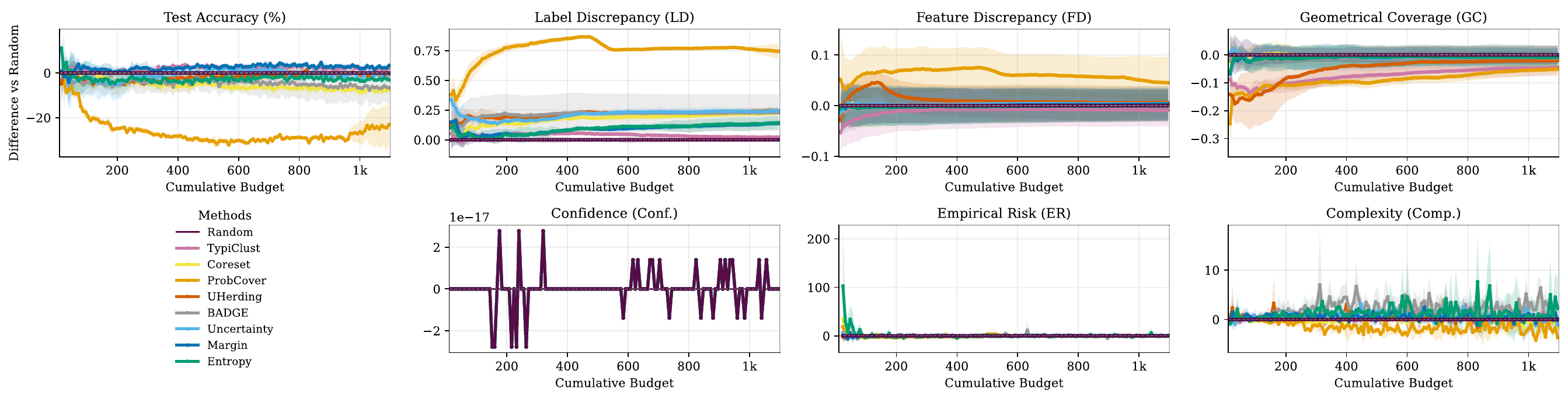}
\caption{Proxy differences relative to Random on ISIC ($b=8$). 
Smaller acquisition steps produce smoother deviations from the baseline while preserving the same qualitative ordering of active learning mechanisms.}
\label{fig:isic_diff_b8}
\end{figure*}

\paragraph{ISIC 2019: Class Imbalance and Heterogeneity.}

Figures~\ref{fig:isic_abs_b1000}--\ref{fig:isic_diff_b8} illustrate the evolution of operational proxies on the ISIC 2019 clinical dataset. Compared to the CIFAR benchmarks, the proxy dynamics in this domain evolve over a broader budget range and exhibit significantly higher variability across strategies. This behavior is a direct consequence of the extreme class imbalance and high intra-class heterogeneity inherent to skin lesion dermoscopy.

In the high-budget setting ($b=1000$), discrepancy proxies ($LD$ and $FD$) decrease gradually, indicating that aligning the labeled pool with the long-tailed distribution remains the primary generalization bottleneck for a substantial portion of the trajectory. Conversely, in the low-budget setting ($b=8$), the initial data-driven phase appears temporally compressed as proxies stabilize earlier. However, this is offset by an increase in the variability of Geometric Coverage ($GC$) and model complexity. 

This suggests that in medical domains, marginal accuracy gains are primarily derived from optimizing the \textbf{Transition Phase}, specifically, balancing the exploration of the sparse feature manifold with complexity management to avoid overfitting on majority classes. Overall, the ISIC results confirm that while the transition phase occupies the majority of the labeling process in complex datasets, the structural sequence, progressing from data-driven alignment to geometric exploration and finally uncertainty-based refinement, remains invariant. This reinforces our central claim that the sequential bottleneck structure persists even under noisy, real-world data distributions.

\subsection{Qualitative Analysis via t-SNE Visualization}
\label{supp:tsne_visuals}

To illustrate the spatial dynamics underlying our framework, we visualize the selection behavior of different active learning strategies using t-SNE projections. Figures~\ref{fig:tsne_comparison_cifar10} and \ref{fig:tsne_comparison_imagenet50} demonstrate how acquisition methods navigate the embedding space as the cumulative budget B increases. Points are colored by ground-truth class, with black points indicating the existing labeled pool and red points denoting the newly acquired samples.

\begin{figure*}[!t]
\centering
\includegraphics[width=0.87\textwidth,height=0.40\textheight]{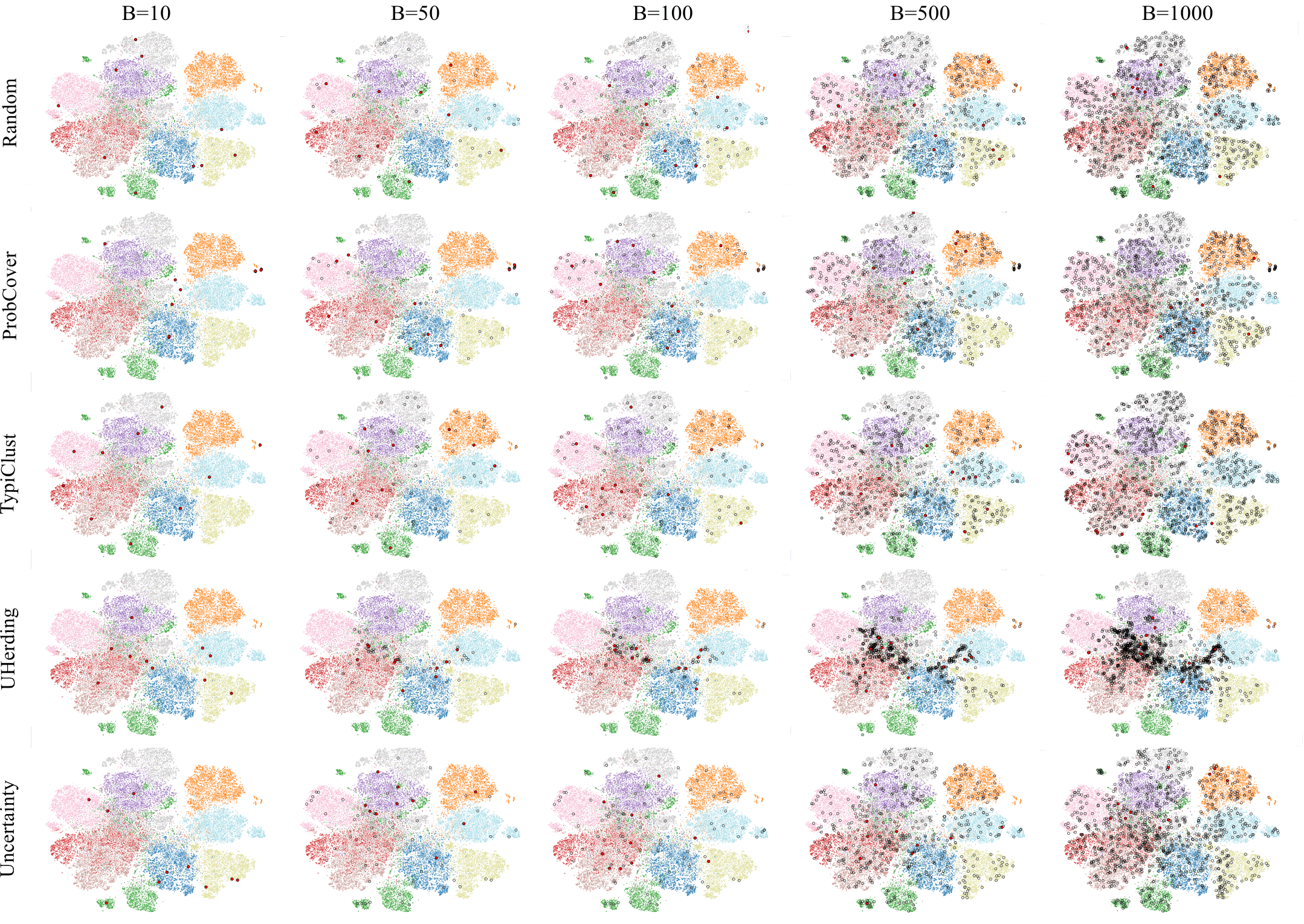}
\caption{t-SNE visualization of sample selection behavior across different AL strategies on CIFAR-10 at increasing annotation budgets. Points are projected using SimCLR embeddings and colored by ground-truth class. Black points indicate previously labeled samples, while red points denote newly selected samples in the current round. Representation-based methods (e.g., TypiClust) emphasize class diversity early on, while uncertainty-based strategies focus on ambiguous regions.}
\label{fig:tsne_comparison_cifar10}
\end{figure*}

\begin{figure*}[t!]
\centering
\includegraphics[width=0.9\textwidth]{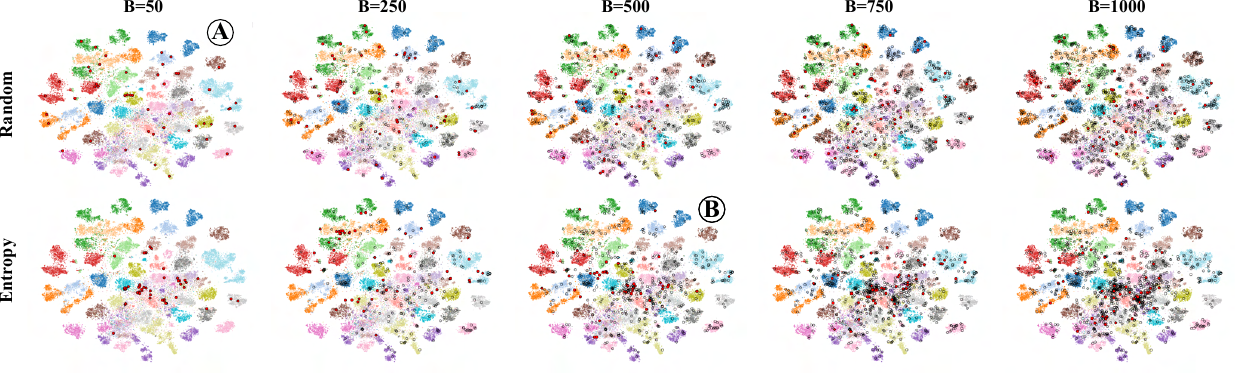}
\includegraphics[width=0.9\textwidth]{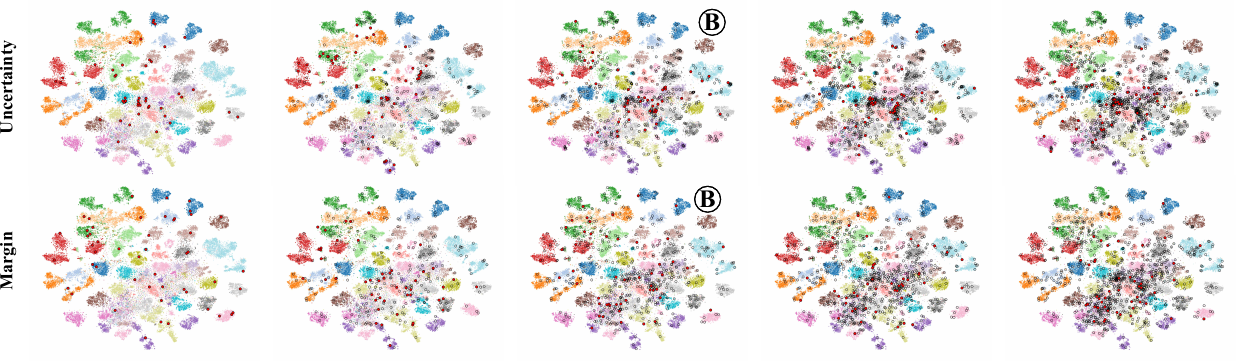}
\includegraphics[width=0.9\textwidth]{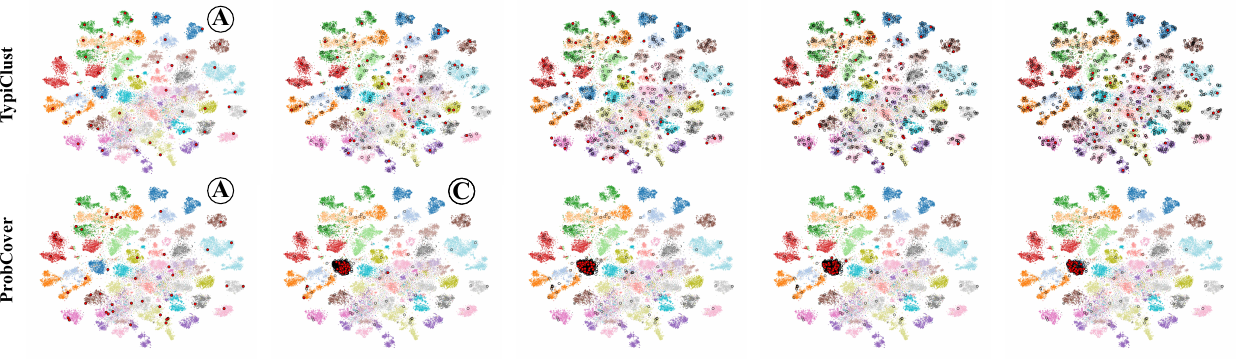}
\includegraphics[width=0.9\textwidth]{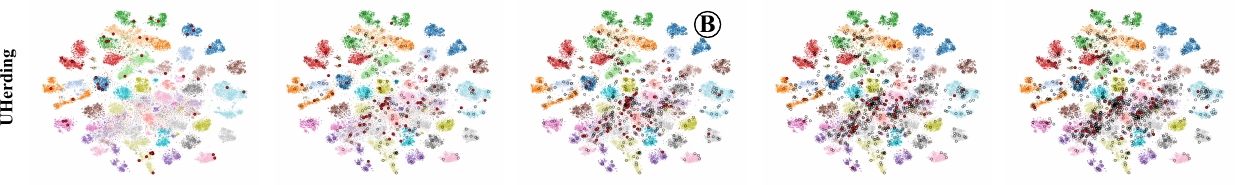}
\caption{t-SNE visualization of sample selection behavior on ImageNet-50 at increasing annotation budgets. Points are projected using MoCov3 embeddings and colored by ground-truth class. Representativeness-based methods achieve broad class coverage early (see \textcircled{A}), while uncertainty-based strategies focus on ambiguous regions (see \textcircled{B}). ProbCover occasionally oversamples dense regions due to its predefined coverage radius (see \textcircled{C}).}
\label{fig:tsne_comparison_imagenet50}
\end{figure*}

\paragraph{CIFAR-10.}

Figure~\ref{fig:tsne_comparison_cifar10} shows the evolution of sample selection behavior on CIFAR-10 at increasing annotation budgets. In the low-budget regime (e.g., $B=10$), representativeness-based strategies such as TypiClust promote strong class-level diversity by selecting representative samples from each cluster of the feature space. In contrast, uncertainty-based strategies such as Uncertainty sampling and UHerding focus on ambiguous or overlapping regions, often concentrating selections near cluster boundaries and leaving some classes underrepresented.

As the budget increases, the behavior of different strategies diverges. TypiClust continues prioritizing dense regions of the embedding space, which improves class coverage but may underexplore uncertain regions near cluster boundaries. Uncertainty-driven methods increasingly sample from ambiguous regions but often fail to achieve broad spatial coverage. Random sampling provides relatively uniform coverage across the embedding space, acting as a strong baseline due to its balanced exploration. ProbCover exhibits intermediate behavior, combining spatial coverage with local density but occasionally oversampling densely populated subregions, leading to redundant selections.

\paragraph{ImageNet-50.}

Figure~\ref{fig:tsne_comparison_imagenet50} presents the same analysis on ImageNet-50 using MoCov3 embeddings. The learned representation produces relatively well-separated clusters corresponding to the 50 classes, though some residual overlap remains in the central region of the embedding space.

At small cumulative budgets (e.g., $B=50$), representativeness-based methods such as TypiClust and ProbCover effectively cover representative regions of the feature space (see \textcircled{A}), achieving broad class-level coverage. In contrast, uncertainty-based strategies tend to concentrate on ambiguous or noisy regions, often selecting multiple points from the same areas of the embedding space.

As the annotation budget increases (see \textcircled{B}), these trends persist. Representativeness approaches continue emphasizing spatial coverage, while uncertainty-based methods concentrate on informative boundary regions. However, a limitation of the default ProbCover configuration is visible in \textcircled{C}: the predefined coverage radius can be too large for dense clusters, causing repeated selections within a single region and reducing overall coverage.

Overall, the t-SNE visualizations highlight how different acquisition strategies implicitly prioritize representativeness, uncertainty, and coverage. The relative importance of these principles changes as the labeling budget increases, supporting the proposed phase-based interpretation of AL dynamics.

\subsection{Regime Identification via Segmented Regression}
\label{subsec:regime_identification}

\begin{figure*}[!t]
\centering
\includegraphics[width=\textwidth]{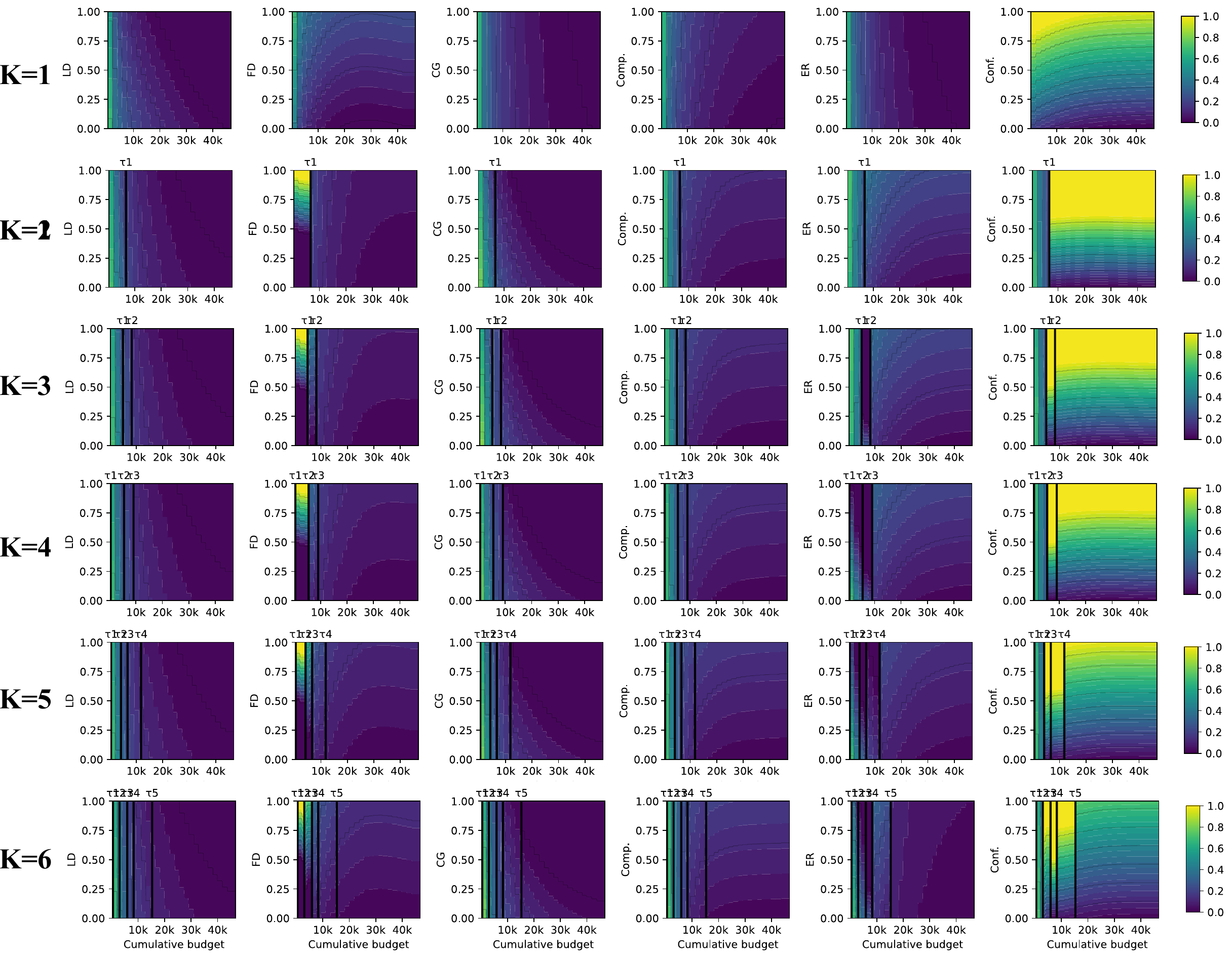}
\caption{Segmented proxy importance across the labeling trajectory on CIFAR-10 for different numbers of segments $K$. Heatmaps show normalized proxy importance as a function of cumulative labeling budget. Vertical lines indicate estimated regime breakpoints $\tau$.}
\label{fig:cifar10_pieces}
\end{figure*}

\begin{figure*}[t]
\centering
\includegraphics[width=\textwidth]{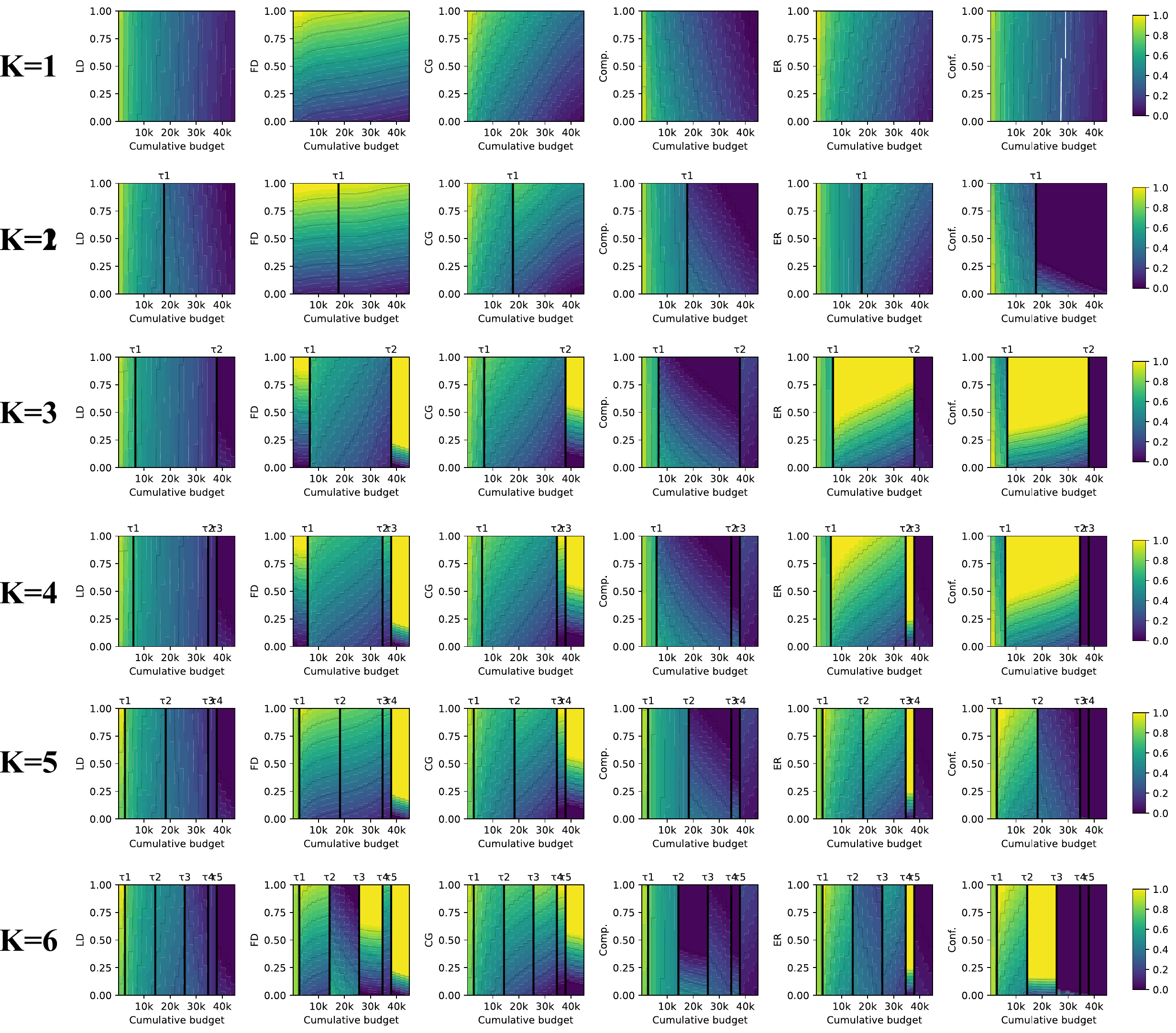}
\caption{Segmented regression analysis on CIFAR-100 showing proxy importance across budgets for increasing numbers of segments $K$. As $K$ increases, distinct breakpoints emerge that reveal transitions between discrepancy-dominated, coverage-driven, and refinement regimes.}
\label{fig:cifar100_pieces}
\end{figure*}

\begin{figure*}[t]
\centering
\includegraphics[width=\textwidth]{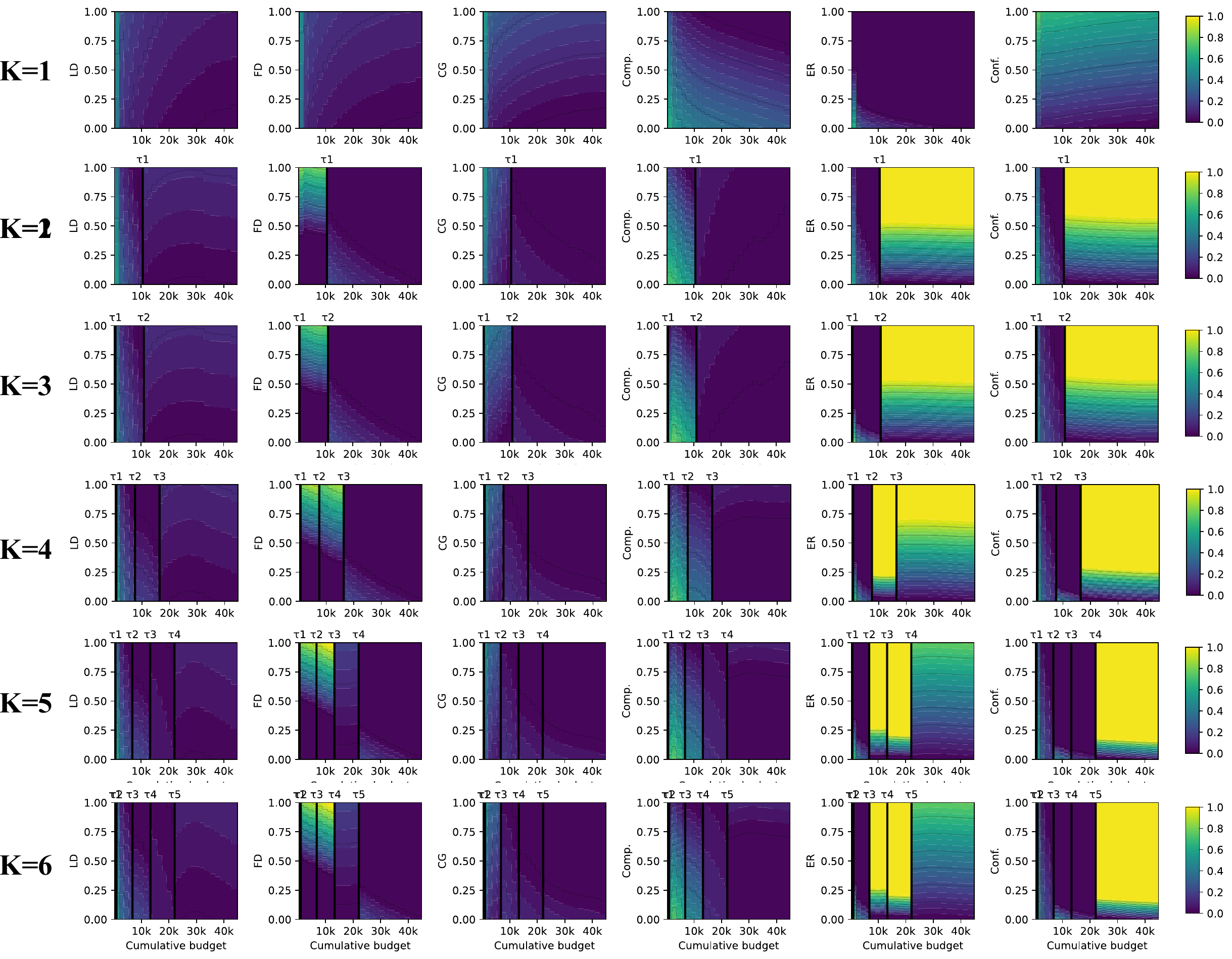}
\caption{Segmented proxy importance on CIFAR-100 using SimCLR representations. Stronger pretrained features shift regime transitions earlier in the labeling trajectory while preserving the same three-phase structure observed in the supervised representation.}
\label{fig:cifar100_simclr_pieces}
\end{figure*}

\begin{figure*}[t]
\centering
\includegraphics[width=\textwidth]{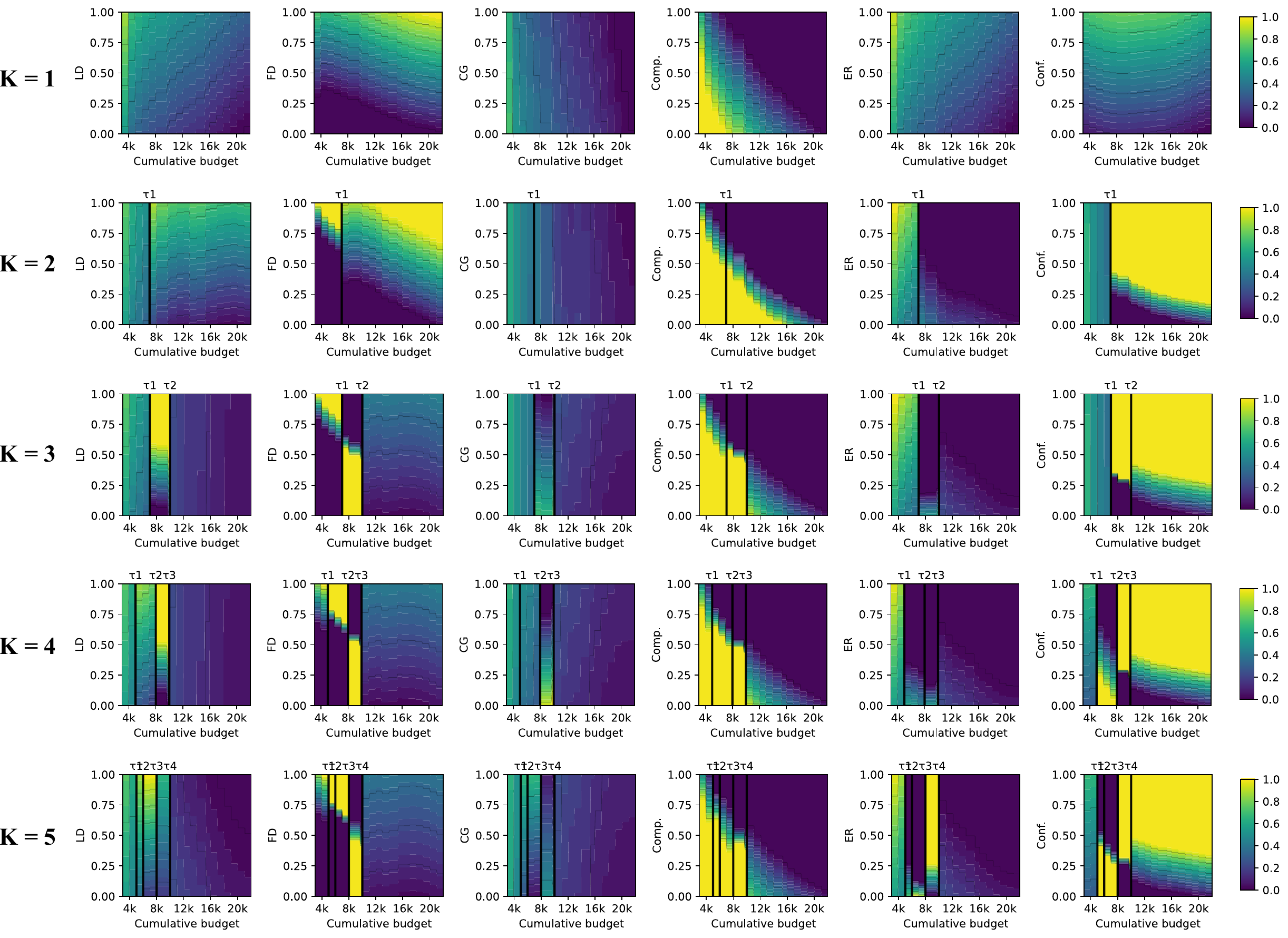}
\caption{Segmented regression analysis on ISIC. Compared to CIFAR datasets, regime transitions occur later and exhibit greater variability due to higher visual diversity and class imbalance characteristic of medical imaging datasets.}
\label{fig:isic_pieces}
\end{figure*}

To further analyze the phase structure of active learning dynamics, we perform segmented regression over the labeling trajectory, examining how the relative importance of the proposed proxies evolves as the annotation budget increases. This procedure operationalizes the theoretical shifts in the generalization bound by identifying transit breakpoints in the empirical data. 

\paragraph{Proxy Importance Evolution.}

Figures~\ref{fig:cifar10_pieces}, \ref{fig:cifar100_pieces}, \ref{fig:cifar100_simclr_pieces}, and \ref{fig:isic_pieces} visualize the estimated proxy importance across the labeling trajectory for varying numbers of segments $K$. Each heatmap displays the normalized contribution of a given proxy, with vertical lines indicating the estimated regime breakpoints $\tau$ identified by the segmented regression. 

This visualization specifically highlights the \textit{sensitivity} of the generalization proxies. Because each proxy is normalized within its own empirical range, the heatmaps reveal how subtle fluctuations in one metric can exert a disproportionately high impact on the generalization bound compared to larger absolute changes in others. Such dynamics are often obscured when viewing raw numerical values or simple rankings.

For $K=1$, the regression model assumes a single stationary regime, resulting in smooth, averaged importance trends that fail to capture the underlying structural shifts. However, as $K$ increases, distinct breakpoints emerge, revealing the physical transitions between competing learning mechanisms:

\begin{itemize}
    \item \textbf{Early Stages (Phase I):} Consistently dominated by discrepancy-related proxies (LD and FD). This emphasizes the critical requirement of correcting sampling bias and aligning the labeled subset with the underlying data distribution during the initial ``cold-start'' period.
    \item \textbf{Intermediate Stages (Phase II):} Geometric Coverage (GC) and Model Complexity become the primary drivers. As the labeled pool expands across the feature manifold, the active bottleneck shifts from global alignment to the local density and spatial coverage of the representation space.
    \item \textbf{Late Stages (Phase III):} Empirical Risk (ER) and confidence-based terms dominate. This corresponds to a refinement phase where acquisition focuses on resolving residual decision boundary errors once the training set has become sufficiently representative of the manifold support.
\end{itemize}

In CIFAR-100 with a high budget, we observe a secondary resurgence in the importance of coverage and complexity terms. This suggests that once the model achieves the primary three stages of learning, it pivots back toward further manifold exploration and fine-grained refinement. In this specific context, the $K=4$ model captures these nuances more clearly. Furthermore, across all datasets, we observe that Label Discrepancy (representing pure representativeness) gradually collapses and never regains dominance relative to GC. Feature Discrepancy (FD) exhibits a ``mixed'' effect, acting as a transitional proxy that bridges the gap between distributional alignment (LD) and geometric coverage (GC).

Importantly, the three-regime model ($K=3$) consistently captures these core transitions across all benchmarks while maintaining model parsimony. While larger values of $K$ introduce additional breakpoints, they yield diminishing returns in interpretability, suggesting that our proposed tripartite taxonomy provides a robust and theoretically grounded description of AL dynamics.

\paragraph{Impact of Self-Supervised Representations.}

As shown in Figure~\ref{fig:cifar100_simclr_pieces}, the use of SimCLR representations preserves the three-phase structure but significantly shifts the regime transitions earlier in the labeling trajectory. The stronger pretrained features minimize the initial discrepancy burden, allowing the transition to coverage-based and model-driven regimes to occur at a fraction of the budget required for supervised representations.

\subsection{Regression Model Validation}
\label{subsec:regression_validation}

\begin{figure*}[t]
\centering
\includegraphics[width=\textwidth]{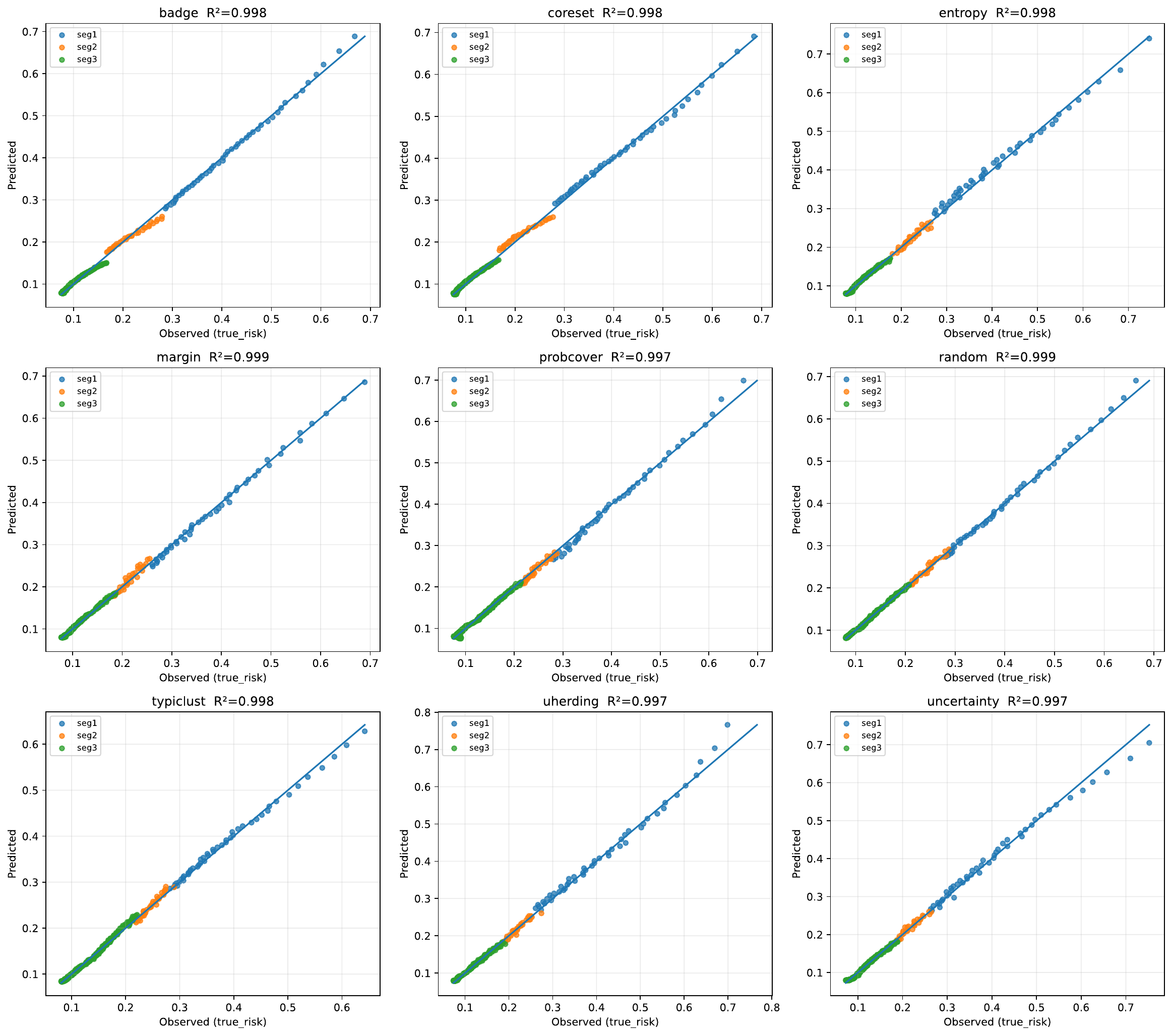}
\caption{Observed versus predicted risk under the segmented regression model ($K=3$) on CIFAR-10. Points are colored according to the regime assigned by the model. High $R^2$ values across acquisition strategies indicate strong predictive accuracy.}
\label{fig:cifar10_regression}
\end{figure*}

\begin{figure*}[t]
\centering
\includegraphics[width=\textwidth]{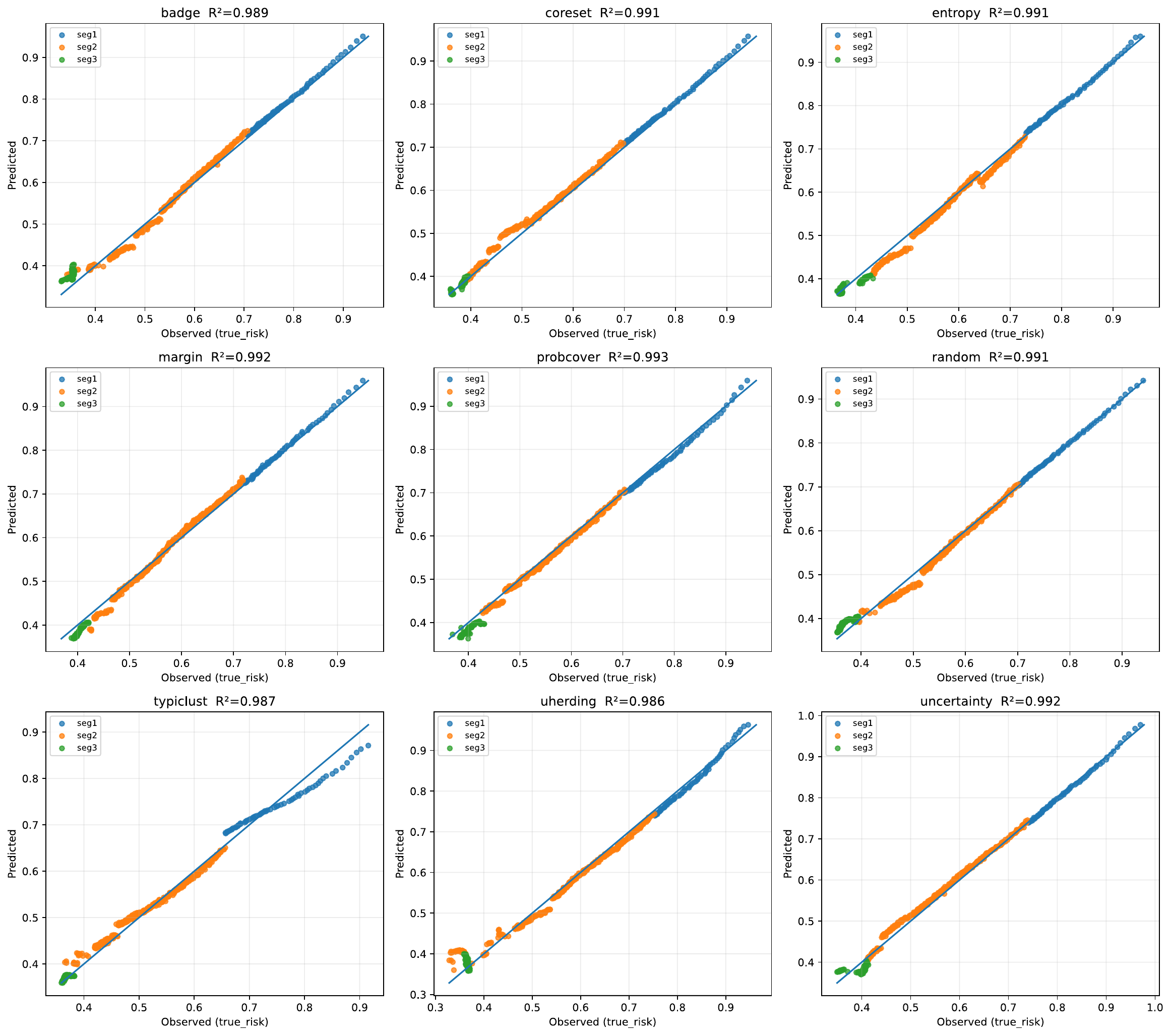}
\caption{Regression validation for CIFAR-100 under the $K=3$ segmented model. Predicted and observed risks align closely across acquisition strategies, supporting the validity of the identified regime structure.}
\label{fig:cifar100_regression}
\end{figure*}

\begin{figure*}[t]
\centering
\includegraphics[width=\textwidth]{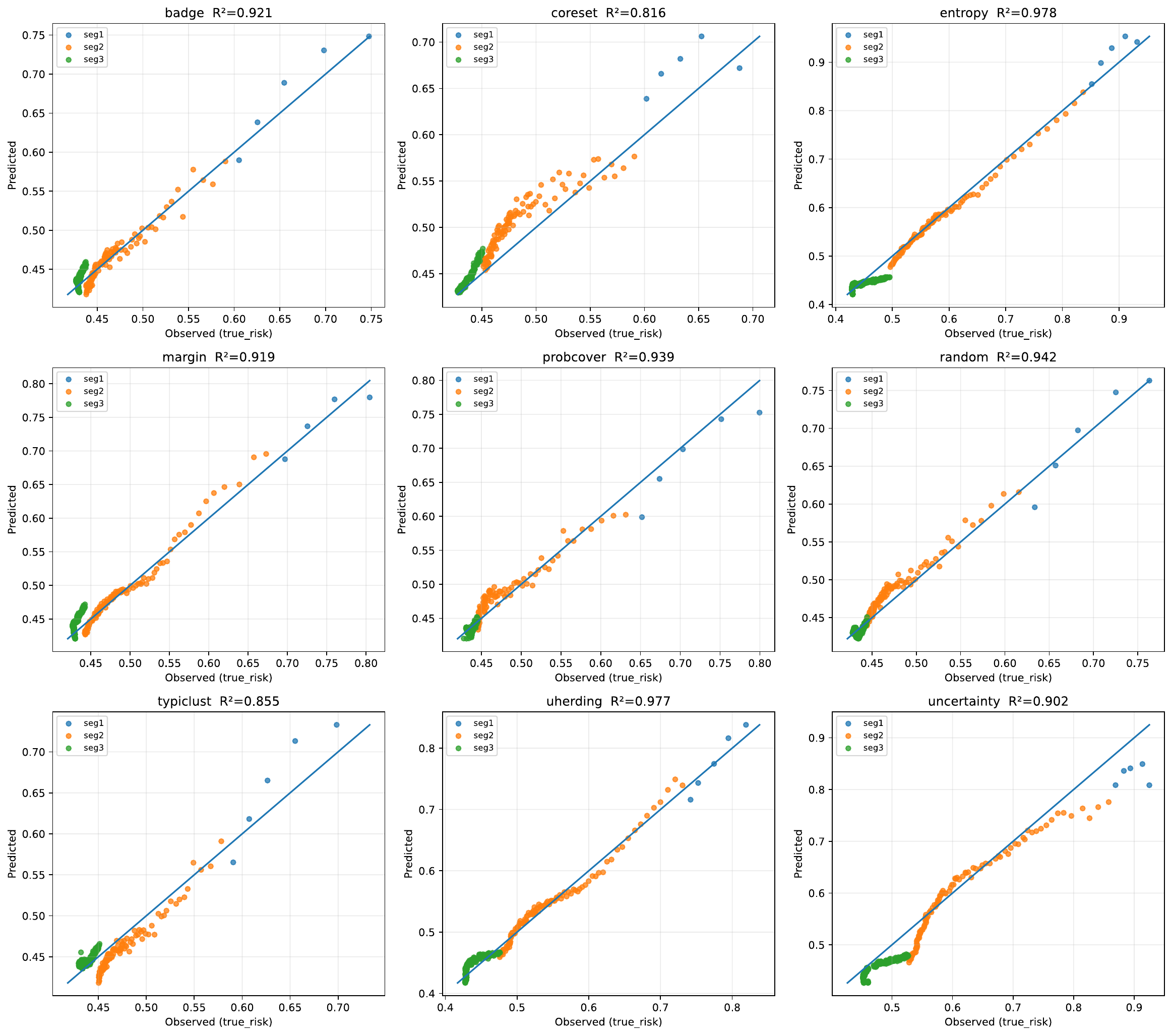}
\caption{Regression validation for CIFAR-100 with SimCLR features under the $K=3$ segmented model. The strong agreement between predicted and observed risk confirms that the regime decomposition remains consistent when stronger representations are used.}
\label{fig:cifar100_simclr_regression}
\end{figure*}

\begin{figure*}[t]
\centering
\includegraphics[width=\textwidth]{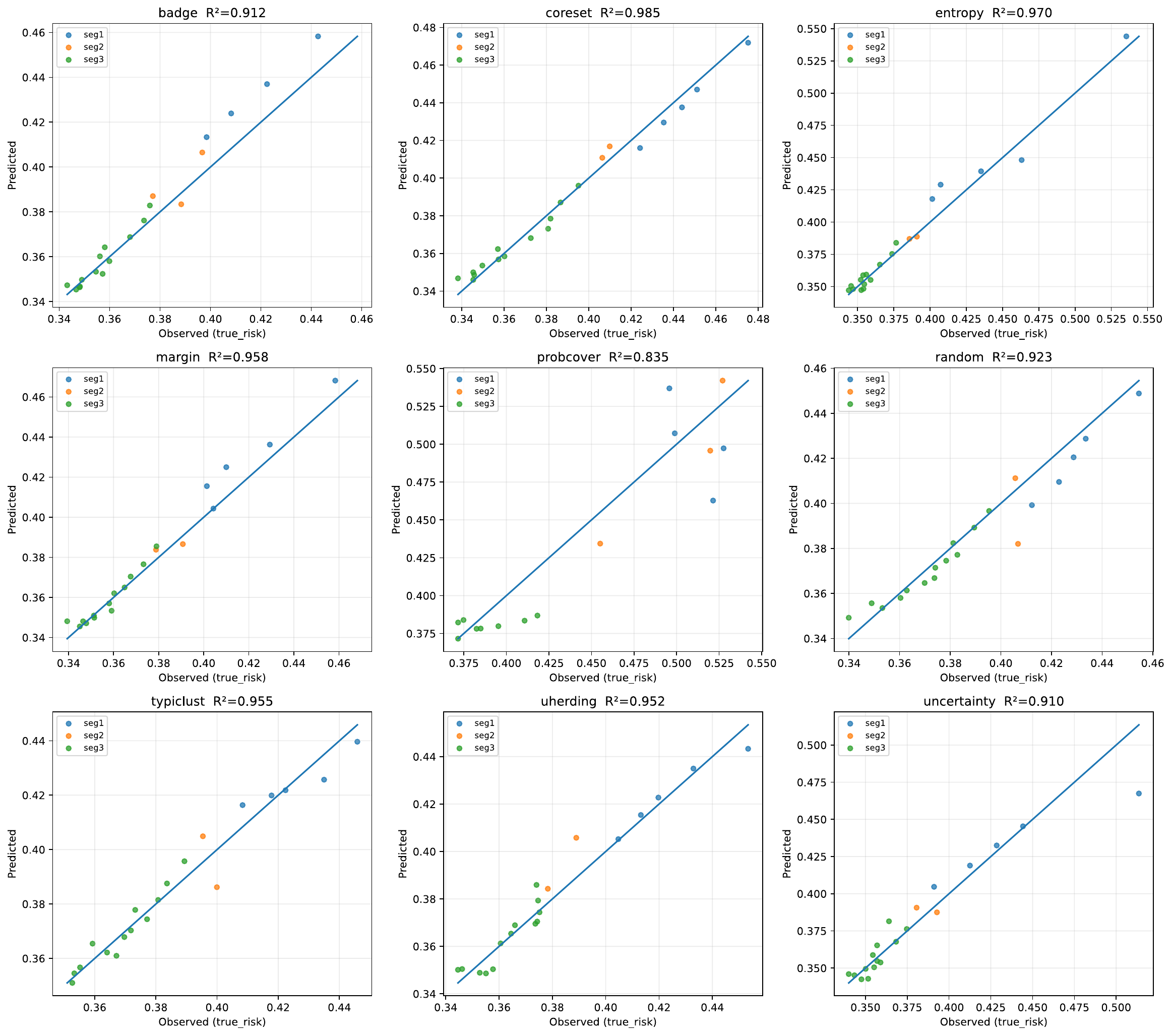}
\caption{Observed versus predicted risk for ISIC under the $K=3$ segmented regression model. Despite higher dataset variability, the segmented model captures the main structure of the risk trajectory across acquisition strategies.}
\label{fig:isic_regression}
\end{figure*}

To validate the fidelity of the K=3 segmented regression model, we compare the predicted risk values against the true observed risk across all acquisition strategies in Figures~\ref{fig:cifar10_regression}, \ref{fig:cifar100_regression}, \ref{fig:cifar100_simclr_regression}, and \ref{fig:isic_regression}. Across datasets and acquisition strategies, the predicted values closely align with the observed risk, as evidenced by the high $R^2$ values. This strong agreement confirms that the selected operational proxies effectively capture the underlying drivers of generalization performance:

\begin{itemize}
\item \textbf{CIFAR-10 \& CIFAR-100:} The regression model demonstrates near-perfect alignment, validating that the tripartite decomposition of the generalization bound is a precise fit for natural image datasets where manifold structures are relatively well-defined.
\item \textbf{CIFAR-100 (SimCLR):} Under SimCLR representations, the regression model shows a less linear alignment compared to the supervised setting. Since pretrained features accelerate early phase transitions, the intensity of proxy changes is higher over a shorter budget. Still, the separation of the segments remains distinct, and the model maintains high predictive fidelity.
\item \textbf{ISIC 2019:} Despite the extreme heterogeneity, noise, and class imbalance characteristic of medical imaging, the K=3 model captures the primary structural components of the risk trajectory. While these plots exhibit higher variance (Fig.~\ref{fig:isic_regression}), the fundamental phase transitions remain statistically significant and well-modeled.
\end{itemize}

\paragraph{Qualitative Strategy Alignment.}

The color-coding of points in the validation plots, representing the identified regimes, reveals that different acquisition strategies traverse these regimes at varying speeds. Representativeness-based strategies, such as \textbf{TypiClust}, are shown to minimize risk most effectively during the discrepancy-dominated regime (Phase I). In contrast, uncertainty-based methods typically lag in performance until the model enters the refinement stage (Phase III).

Overall, these results provide rigorous empirical support for the existence of distinct learning regimes. The close alignment between predicted and observed risk validates the central hypothesis of this work: AL performance is not a monolithic process but a sequence of structurally distinct bottlenecks governed by the evolving geometry of the labeled pool and the model's hypothesis space.

\subsection{Alignment between Operational Proxies and Acquisition Efficacy}
\label{subsec:proxy_relation_to_methods}

To demonstrate the predictive utility of our framework, we visualize the temporal alignment between proxy dominance and the empirically best-performing acquisition strategy. This Regime Mapping serves as a direct validation of our core hypothesis: that the efficiency of an AL method is determined by the alignment between its inductive bias and the active generalization bottleneck.

\paragraph{Visualization Protocol.}

Figures~\ref{fig:cifar10_regime_map}, \ref{fig:cifar100_regime_map}, \ref{fig:cifar100_simclr_regime_map}, and \ref{fig:isic_regime_map} integrate three critical layers of information:

\begin{enumerate}
\item \textbf{The Proxy Heatmap}: Displays the normalized importance of LD, FD, GC, ER, Comp, and Conf across the budget.
\item \textbf{The Top Strip (Dominant Proxy)}: Identifies which specific component of the generalization bound is the primary bottleneck at any given step $t$.
\item \textbf{The Bottom Strip (Dominant Method)}: Identifies the strategy achieving the highest test accuracy at budget $t$.
\end{enumerate}
Vertical lines indicate the regime breakpoints $\tau$ derived from the segmented regression analysis.

\paragraph{CIFAR-10 and CIFAR-100: Mechanistic Hand-offs.}

In the natural imaging benchmarks (Figs.~\ref{fig:cifar10_regime_map} and \ref{fig:cifar100_regime_map}), we observe a near-perfect temporal synchronization between theoretical bottlenecks and empirical winners.

\begin{itemize}
\item \textbf{Phase I (Discrepancy-Dominant)}: The top strip is dominated by LD and FD. Correspondingly, the bottom strip identifies representativeness-driven methods like \textbf{TypiClust} as the winners. This confirms that at low budgets, the primary gain comes from aligning the labeled sample distribution with the population.
\item \textbf{Phase II (Coverage-Dominant)}: As the budget crosses $\tau_1$, the bottleneck shifts toward GC. Here, coverage-oriented strategies (e.g., \textbf{Coreset}) or hybrids begin to outperform purely representative ones.
\item \textbf{Phase III (Refinement-Dominant)}: At high budgets, ER and complexity become the active proxies. The bottom strip shifts to uncertainty-based methods (\textbf{Entropy}, \textbf{Margin}), which refine the decision boundary in regions where the manifold is already well-covered.
\end{itemize}

While both datasets follow this pattern, we observe higher variability in dominance for CIFAR-10, particularly in the transition phase. In contrast, CIFAR-100 exhibits a significantly more stable representativeness phase, likely due to the higher number of classes requiring a more prolonged alignment period.

\paragraph{SSL Compression in CIFAR-100.}

The SimCLR regime map (Fig.~\ref{fig:cifar100_simclr_regime_map}) provides compelling evidence for the impact of representation quality. The initial discrepancy phase is drastically shortened compared to the supervised version. Because the pretrained feature space is already well-structured, the GC and ER proxies ``activate'' at much lower label counts, causing the bottom strip to transition into coverage and uncertainty methods far earlier in the trajectory. Notably, we observe a resurgence of coverage importance after 25k samples, where GC regains dominance; this is mirrored in the performance of \textbf{BADGE} and \textbf{Coreset}, which regain dominance during this secondary exploration stage.

\paragraph{ISIC 2019: Robustness under Imbalance.}

Figures~\ref{fig:isic_regime_map} and \ref{fig:isic_b8_regime_map} analyze the medical imaging domain. Due to the high visual diversity and extreme class imbalance, the discrepancy-dominated Phase I persists for a larger portion of the budget. Interestingly, for the fine-grained $b=8$ acquisition (Fig.~\ref{fig:isic_b8_regime_map}), we observe a secondary resurgence of GC importance after the initial refinement phase. This suggests that in complex, noisy domains, the model may oscillate between refining known boundaries and needing to ``re-explore'' the manifold to find minority-class samples missed during earlier rounds.

\paragraph{Synthesis of Results.}

The strong correspondence between the Top Strip (Theory) and Bottom Strip (Empirical Performance) across all datasets confirms that AL dynamics are not stochastic but governed by the evolving geometry of the labeled pool. These maps provide the first systematic evidence that AL performance can be deconstructed into a sequence of deterministic mechanism shifts, allowing for the potential design of ``regime-aware'' adaptive acquisition strategies.

\begin{figure*}[t]
\centering
\includegraphics[width=\textwidth]{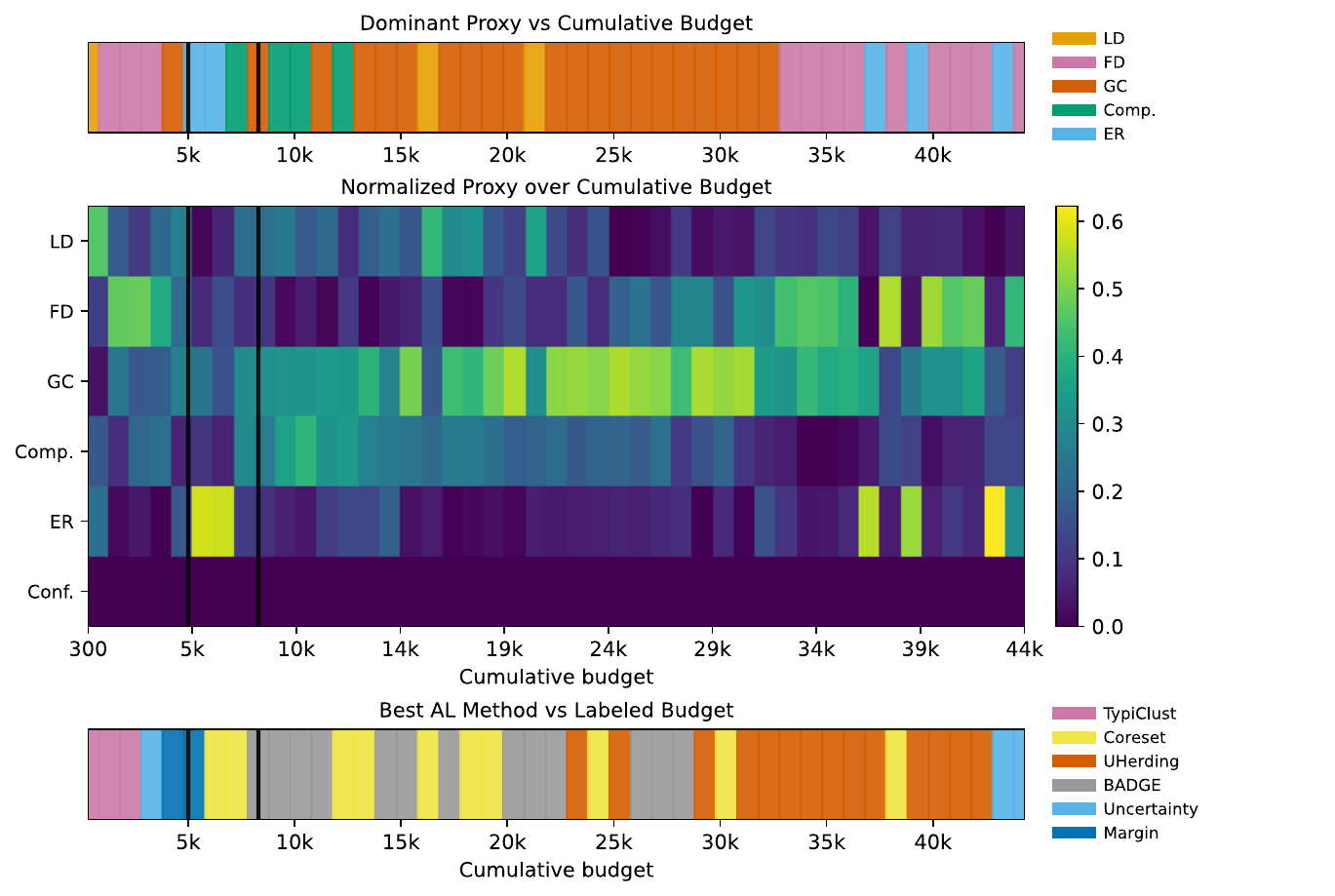}
\caption{Proxy–method alignment across the labeling trajectory on CIFAR-10. The top strip shows the dominant proxy at each labeling budget, while the heatmap displays normalized proxy importance. The bottom strip indicates the best-performing active learning method. Vertical lines correspond to regime breakpoints identified by segmented regression.}
\label{fig:cifar10_regime_map}
\end{figure*}

\begin{figure*}[t]
\centering
\includegraphics[width=\textwidth]{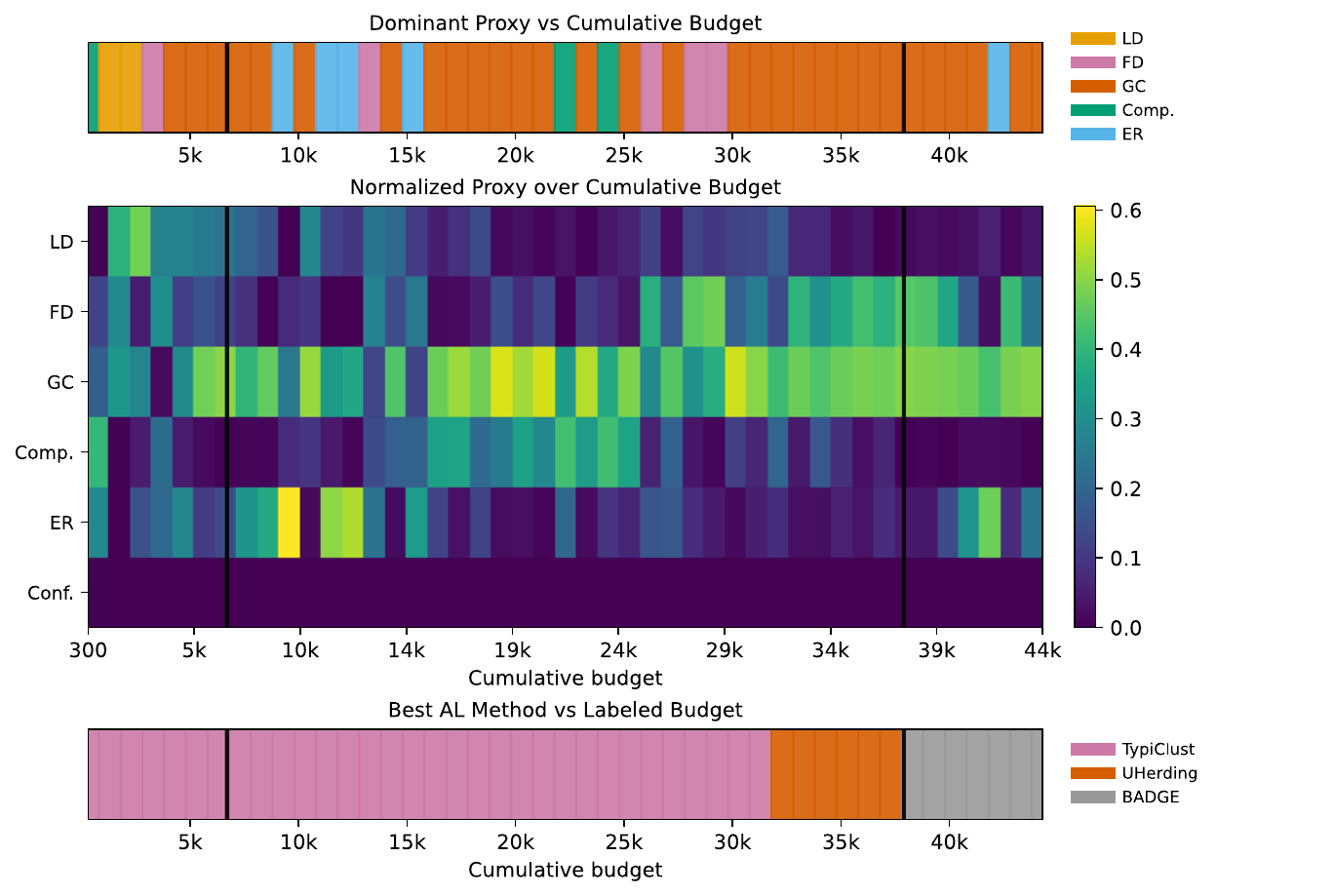}
\caption{Proxy–method alignment on CIFAR-100. Early stages are dominated by discrepancy proxies, while coverage becomes more important at intermediate budgets. The best-performing acquisition strategies evolve accordingly.}
\label{fig:cifar100_regime_map}
\end{figure*}

\begin{figure*}[t]
\centering
\includegraphics[width=\textwidth]{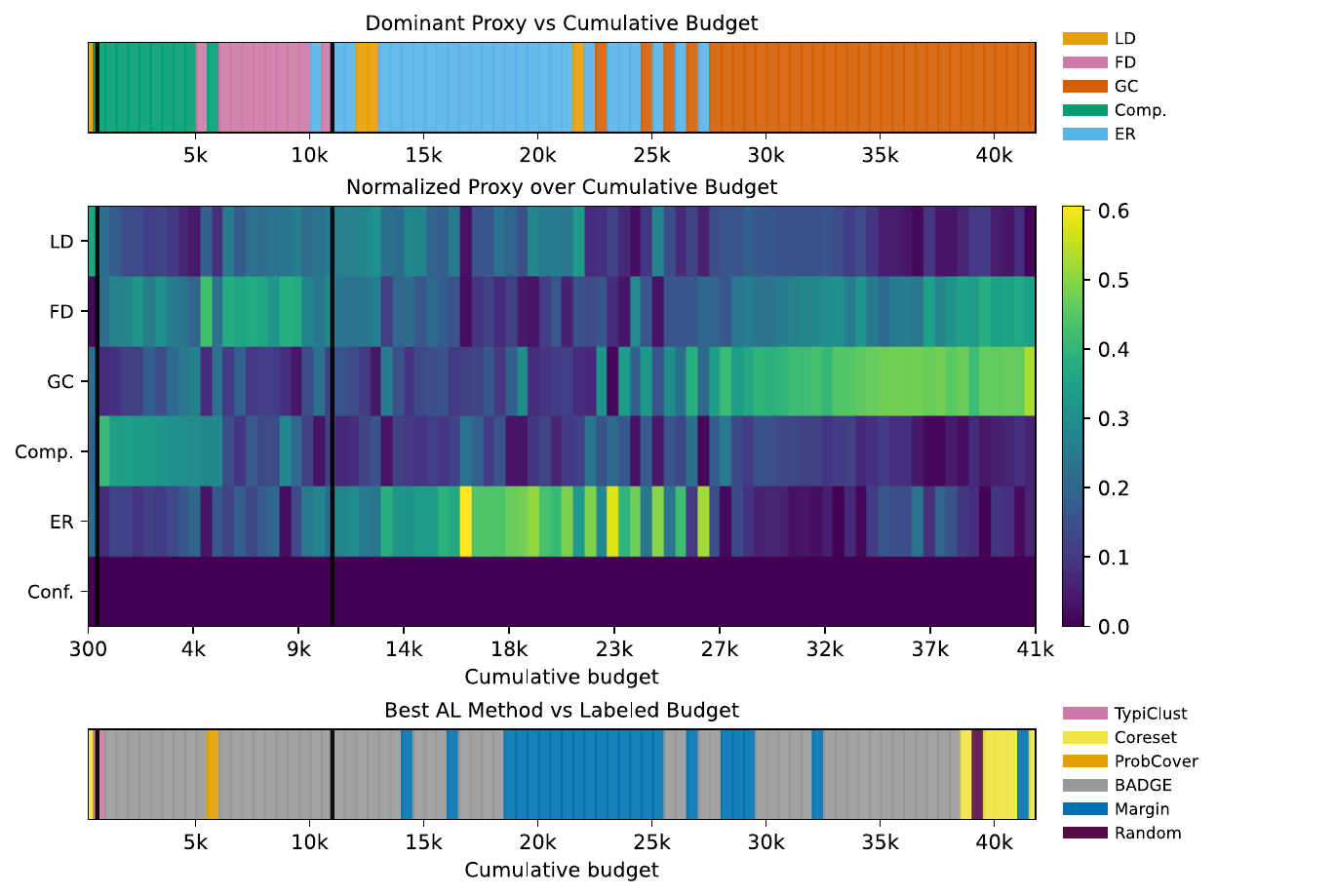}
\caption{Proxy–method alignment for CIFAR-100 using SimCLR representations. Improved feature quality shifts regime transitions earlier in the labeling trajectory while preserving the same proxy–method alignment pattern.}
\label{fig:cifar100_simclr_regime_map}
\end{figure*}

\begin{figure*}[t]
\centering
\includegraphics[width=\textwidth]{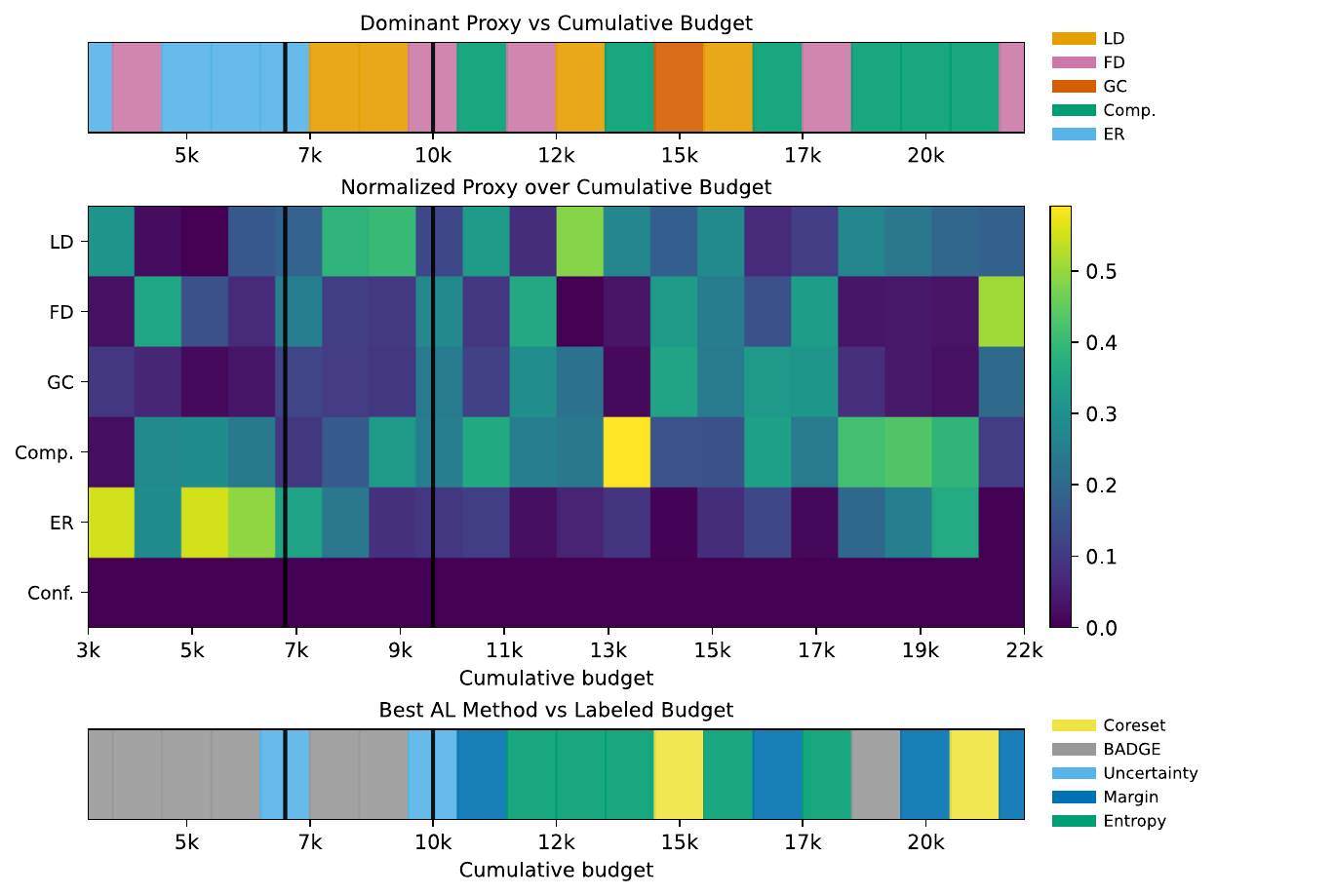}
\caption{Proxy–method alignment on ISIC with $b=1000$. 
Compared to natural image datasets, proxy transitions occur later and exhibit higher variability due to dataset complexity and class imbalance.}
\label{fig:isic_regime_map}
\end{figure*}

\begin{figure*}[!t]
\centering
\includegraphics[width=\textwidth]{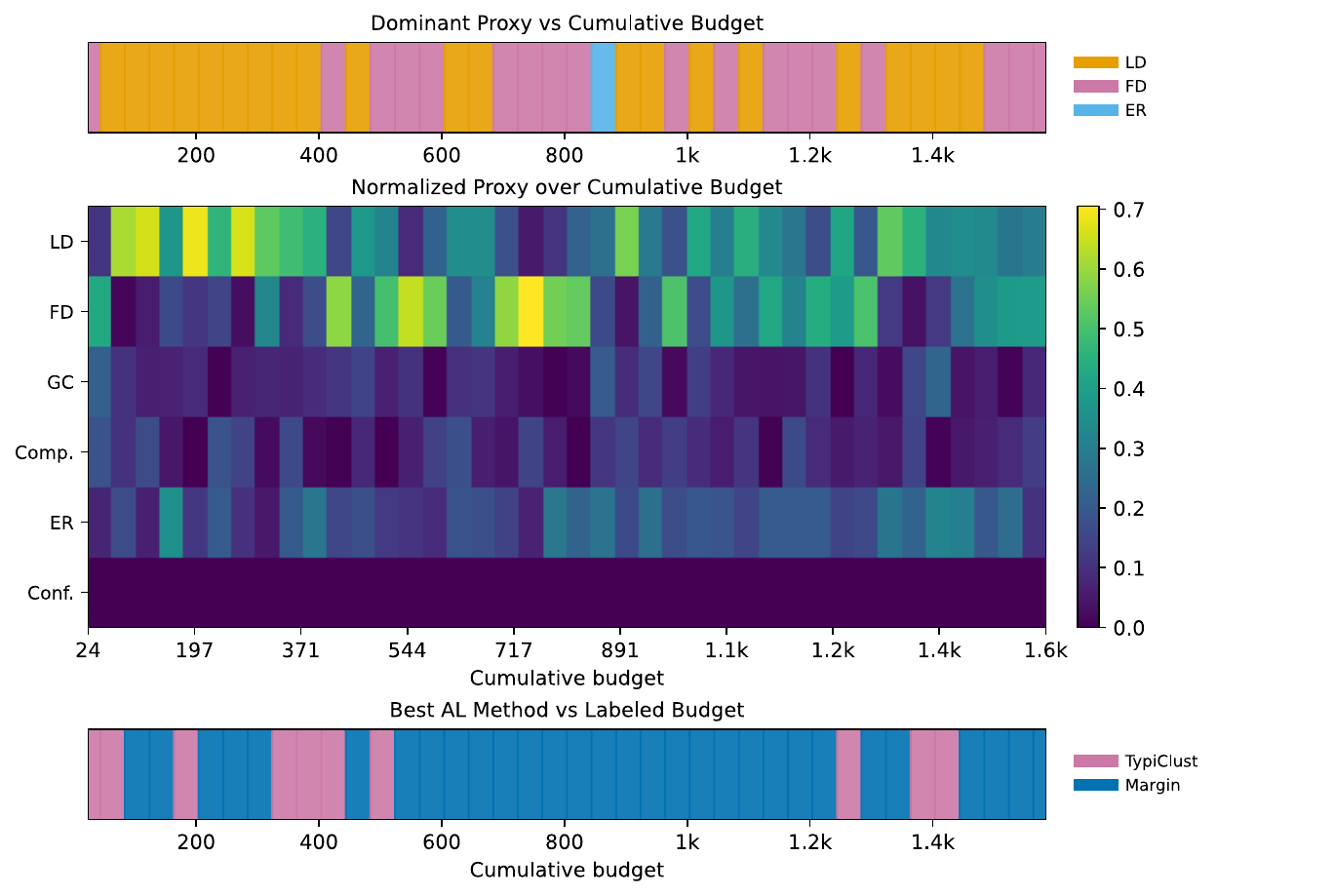}
\caption{Proxy–method alignment on ISIC with $b=8$ presenting early phase detailed proxy importance. Compared to natural image datasets, proxy transitions occur later and exhibit higher variability due to dataset complexity and class imbalance.}
\label{fig:isic_b8_regime_map}
\end{figure*}

\subsection{Cold-Start Ablation and Computational Efficiency}
\label{supp:cold_start}

Active learning typically suffers from a \emph{cold-start} problem: when labeled data is scarce, model uncertainty estimates are unreliable, and acquisition strategies struggle to identify informative points on an unorganized manifold. Self-supervised learning (SSL) provides a natural mitigation by structuring the feature space prior to the introduction of any supervision.

\paragraph{Mitigating the Cold-Start with SSL.}

\begin{figure*}[ht]
\centering
\includegraphics[width=\textwidth]{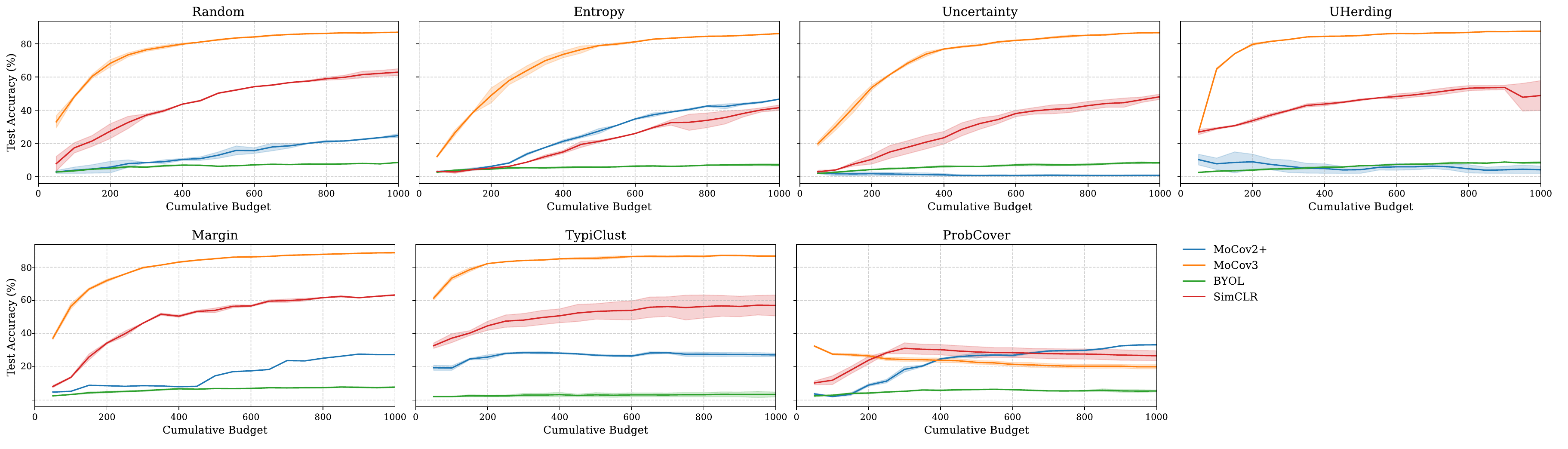}
\caption{Low-budget performance of active learning methods on ImageNet-50 using different self-supervised embeddings. Structured SSL representations significantly improve early-stage performance across most methods. Representativeness-based strategies such as TypiClust benefit most from the semantically clustered feature space, while methods relying on fixed geometric assumptions (e.g., ProbCover) may experience degraded performance due to mismatched coverage radii.}
\label{fig:ssl_comparison}
\end{figure*}

SSL representations organize samples into semantically meaningful clusters based on intrinsic data properties. This reduces the reliance on noisy, early-stage uncertainty estimates and allows acquisition strategies to operate on a pre-aligned feature manifold. 

As shown in Figure~\ref{fig:ssl_comparison}, SSL features significantly improve early-stage accuracy across nearly all methods on ImageNet-50. The gains are most pronounced for representativeness-based strategies like \textbf{TypiClust}, which leverage the improved semantic clustering. Conversely, methods with rigid geometric priors, such as \textbf{ProbCover}, can sometimes experience degraded performance if their predefined coverage radius does not align with the condensed geometry of the SSL embedding space. 

Furthermore, the embedding quality influence is strongly evident. With \textbf{MoCov3}, nearly every method achieves higher accuracy at a lower budget, whereas with \textbf{BYOL}, some methods are barely able to exceed baseline performance. This suggests that the utility of AL is not just a function of the acquisition strategy, but is heavily coupled with the fidelity of the underlying representation manifold.

\paragraph{Computational Speedups and Scalability.}

Beyond accuracy gains, SSL representations enable massive computational savings by fundamentally altering the training workflow. By utilizing a structured, frozen feature space, we provide the model with a foundation of high-level semantic knowledge from the onset. Consequently, the downstream classifier is significantly easier to train, as it optimizes over a pre-aligned manifold rather than learning features from scratch. This strategy allows us to bypass the resource-intensive process of end-to-end backpropagation through deep architectures in every active learning round. Instead, the model leverages pre-known weights to focus solely on task-specific mapping, resulting in dramatic order-of-magnitude speedups.

Table~\ref{tab:ssl_timing} quantifies these gains across ImageNet subsets. We observe dramatic speedups, often exceeding \textbf{100$\times$}, particularly for uncertainty-based methods. For instance, on ImageNet-50, \textit{Margin} sampling sees a reduction in round time from 123 minutes to 0.68 minutes. Even at larger scales (ImageNet-200), the speedups remain substantial (e.g., $97\times$ for \textit{Entropy}).

\begin{table*}[t]
\centering
\caption{Average runtime per AL round (minutes) on ImageNet with and without MoCov3 embeddings ($B=1000$). SSL-based initialization consistently reduces operational overhead, with speedups highlighted in bold.}
\label{tab:ssl_timing}
\small
\makebox[\textwidth][c]{%
\resizebox{\textwidth}{!}{%
\begin{tabular}{l|cc|c|cc|c|cc|c}
\toprule
\textbf{Method} 
& \multicolumn{2}{c|} {\textbf{ImageNet-50}}  & \textbf{Speedup}
& \multicolumn{2}{c|}{\textbf{ImageNet-100}} & \textbf{Speedup}
& \multicolumn{2}{c|}{\textbf{ImageNet-200}} & \textbf{Speedup} \\
& no SSL & SSL & SSL / no SSL
& no SSL & SSL & SSL / no SSL
& no SSL & SSL & SSL / no SSL \\
\midrule
Random      & 92.82 & 0.57 & \textbf{162.84} & 184.27 & 1.56 & \textbf{118.12} & 184.27 & 3.17 & \textbf{58.13} \\
Margin      & 123.24 & 0.68 & \textbf{181.24} & 135.88 & 1.41 & \textbf{96.37} & 139.63 & 2.40 & \textbf{58.18} \\
Entropy     & 91.04 & 0.63 & \textbf{144.51} & 186.39 & 1.80 & \textbf{103.55} & 314.39 & 3.22 & \textbf{97.64} \\
Uncertainty & 93.01 & 0.76 & \textbf{122.38} & 179.47 & 1.52 & \textbf{118.07} & 193.61 & 3.07 & \textbf{63.07} \\
TypiClust   & 217.89 & 6.63 & \textbf{32.86} & 171.39 & 2.33 & \textbf{73.56} & 275.36 & 6.29 & \textbf{43.78} \\
ProbCover   & 87.37 & 1.06 & \textbf{82.42} & 135.92 & 1.74 & \textbf{78.11} & 207.17 & 4.51 & \textbf{45.94} \\
UHerding    & 144.26 & 1.20 & \textbf{120.22} & 170.61 & 2.64 & \textbf{64.62} & 260.99 & 9.22 & \textbf{28.31} \\
\bottomrule
\end{tabular}%
}}
\end{table*}

\paragraph{Algorithmic Scaling Analysis.}

Table~\ref{tab:timing_ratio} examines how acquisition costs scale from small to large budgets. Uncertainty-based methods demonstrate the most consistent scalability (ratios $\sim1.0$), as their primary cost-inference is independent of labeled pool size. In contrast, clustering-based methods like \textit{TypiClust} exhibit higher sensitivity to budget increases (e.g., a $6.79\times$ increase on CIFAR-10), likely due to the escalating complexity of density estimation and $K$-means clustering as the candidate pool evolves. These results emphasize that the transition to model-driven regimes (Phase III) is not only theoretically motivated by generalization bounds but also practically advantageous for large-scale system efficiency.

\begin{table*}[t]
\centering
\caption{Scaling analysis: Average runtime ratio between large ($B=1000$) and small budgets. Ratios $>2.0$ (highlighted) indicate methods where acquisition overhead scales poorly with label count.}
\label{tab:timing_ratio}
\small
\makebox[\textwidth][c]{%
\resizebox{\textwidth}{!}{%
\begin{tabular}{l|cc|c|cc|c|cc|c|cc|c|cc|c}
\toprule
\textbf{Method} 
& \multicolumn{2}{c|}{\textbf{CIFAR-10}} & \textbf{Ratio}
& \multicolumn{2}{c|}{\textbf{CIFAR-100}} & \textbf{Ratio}
& \multicolumn{2}{c|}{\textbf{ImageNet-50}} & \textbf{Ratio}
& \multicolumn{2}{c|}{\textbf{ImageNet-100}} & \textbf{Ratio}
& \multicolumn{2}{c|}{\textbf{ImageNet-200}} & \textbf{Ratio} \\
& $B_{10}$ & $B_{1000}$ & $R$ 
& $B_{100}$ & $B_{1000}$ & $R$ 
& $B_{50}$ & $B_{1000}$ & $R$ 
& $B_{100}$ & $B_{1000}$ & $R$ 
& $B_{200}$ & $B_{1000}$ & $R$ \\
\midrule
Random      & 1.49 & 2.50 & 1.68 & 1.44 & 2.56 & 1.78 & 57.64 & 92.83 & 1.61 & 70.10 & 184.27 & \textbf{2.63} & 170.32 & 184.27 & 1.08 \\
Margin      & 2.15 & 2.96 & 1.38 & 1.63 & 2.33 & 1.43 & 71.87 & 123.24 & 1.71 & 145.17 & 139.63 & 0.96 & 182.11 & 139.63 & 0.77 \\
Entropy     & 2.33 & 2.48 & 1.06 & 1.54 & 2.47 & 1.60 & 46.36 & 91.04 & 1.96 & 163.53 & 186.39 & 1.14 & 163.53 & 314.39 & \textbf{1.92} \\
Uncertainty & 2.33 & 2.42 & 1.04 & 1.59 & 2.26 & 1.42 & 47.07 & 93.01 & 1.98 & 92.30 & 179.47 & 1.94 & 284.97 & 193.61 & 0.68 \\
TypiClust   & 2.48 & 16.84 & \textbf{6.79} & 3.08 & 16.24 & \textbf{5.27} & 66.55 & 217.89 & \textbf{3.27} & 85.48 & 171.39 & \textbf{2.01} & 152.42 & 275.36 & 1.81 \\
ProbCover   & 2.38 & 2.92 & 1.23 & 1.33 & 2.32 & 1.74 & 43.24 & 87.37 & \textbf{2.02} & 145.77 & 135.92 & 0.93 & 199.73 & 207.17 & 1.04 \\
UHerding    & 1.47 & 4.07 & \textbf{2.77} & 2.01 & 2.99 & 1.49 & 56.55 & 144.26 & \textbf{2.55} & 94.15 & 170.61 & 1.81 & 226.19 & 260.99 & 1.15 \\
\bottomrule
\end{tabular}%
}}
\end{table*}

\end{document}